\newtheorem{definition}{Definition}
\newtheorem{theorem}{Theorem}
\newtheorem{lemma}{Lemma}
\newtheorem{corollary}{Corollary}
\newtheorem{proposition}{Proposition}
\theoremstyle{remark}
\newtheorem{remark}{Remark}
\newtheorem{example}{Example}
\newcommand{\mat}{\mathbf }
\newcommand{\vct}{\boldsymbol }
\newcommand{\ud}{\mathrm d}
\newcommand{\kl}{\mathrm{KL}}
\newcommand{\SPAN}{\mathrm{span}}
\newcommand{\SUM}{\mathrm{sum}}
\renewcommand{\tilde}{\widetilde}
\renewcommand{\hat}{\widehat}
\renewcommand{\bar}{\overline}
\def\mP{\mathbb P}
\def\sd{\mathfrak{d}}
\def\sp{\mathfrak{p}}
\def\sW{\mathsf{W}}
\def\ml{\mathsf{ML}}
\def\tv{\mathsf{TV}}
\def\op{\mathrm{op}}
\begin{document}

\title{Convergence Rates of Latent Topic Models Under Relaxed Identifiability Conditions}
\author[1]{Yining Wang}
\affil[1]{Machine Learning Department, Carnegie Mellon University}

\maketitle

\begin{abstract}
In this paper we study the frequentist convergence rate for the Latent Dirichlet Allocation \citep{blei2003latent} topic models.
We show that the maximum likelihood estimator converges to one of the finitely many equivalent parameters in Wasserstein's distance metric
at a rate of $n^{-1/4}$ without assuming separability or non-degeneracy of the underlying topics and/or the existence of more than three words per document,
thus generalizing the previous works of \cite{anandkumar2012spectral,anandkumar2014tensor} from an information-theoretical perspective.
We also show that the $n^{-1/4}$ convergence rate is optimal in the worst case.

\textbf{Keywords}: Latent Dirichlet Allocation, topic models, maximum likelihood, rates of convergence
\end{abstract}

\section{Introduction}

{
The \emph{Latent Dirichlet Allocation (LDA)} model, first introduced by \cite{blei2003latent}, has been very influential in machine learning as a probabilistic admixture model 
that characterizes latent topic structures in natural language document collections.
The original LDA paper \citep{blei2003latent} has accumulated a total of over 20,000 citations up to the year of 2017,
with many follow-up works also impactful in machine learning research \citep{griffiths2004finding,blei2012probabilistic,fei2005bayesian,blei2006dynamic}.
At a higher level, the LDA model posits the existence of $K$ latent (unknown) topic vectors,
and models the generation of a document as a collection of $m$ \emph{conditionally independent} words given a mixing topic vector for the document.

More specifically, let $V$ be the vocabulary size, $K$ be the number of topics 
and denote conveniently each of the $V$ words in the vocabulary as $1,2,\cdots,V$.
Let $\vct\theta=(\theta_1,\cdots,\theta_K)$ where $\theta_k\in\Delta^{V-1} = \{\pi\in\mathbb R^V: \pi\geq 0, \sum_i\pi_i=1\}$ be a collection of $K$ fixed but unknown topic word distribution vectors
that one wishes to estimate.
The LDA then models the generation of a document $X=(x_1,\cdots,x_m)\subseteq\{1,\cdots,V\}=:[V]$ of $m$ words as follows:
\begin{equation}
(x_1,\cdots,x_m)|h\overset{i.i.d.}{\sim}\text{Categorical}(h_1\theta_1+\cdots+h_K\theta_K),\;\;\;\;
h\sim\nu_0.
\label{eq:model}
\end{equation}
Here $\text{Categorical}(\pi)$ is the categorical distribution over $[V]$ parameterized by $\pi\in\Delta^{V-1}$, meaning that
$p(x=j|\pi) = \pi_j$ for $j\in[V]$, and $\nu_0$ is a known distribution that generates the ``mixing vector'' $h\in\Delta^{K-1}$.
A likelihood model $p_{\vct\theta}(x)$ can then be explicitly written out as
\begin{align}
p_{\vct\theta}(x) 
&= \int_{\Delta^{K-1}}p_{\vct\theta,h}(x)\ud\nu_0(h) = \int_{\Delta^{K-1}} \left[\prod_{i=1}^m p_{\vct\theta,h}(x_i)\right]\ud\nu_0(h)\nonumber\\
&= \int_{\Delta^{K-1}} \left[\prod_{i=1}^m\sum_{k=1}^Kh_k\theta_k(x_i)\right]\ud\nu_0(h)
\end{align}
for every $x=(x_1,\cdots,x_m)\in[V]^m$.
In the original LDA model \citep{blei2003latent} $\nu_0$ is taken to be the Dirichlet distribution,
while in this paper we allow $\nu_0$ to belong to a much wider family of distributions.
}

The objective of this paper is to study rates of convergence for estimating $\vct\theta$
from a collection of independently sampled unlabeled documents $X_1,\cdots,X_n$.
Each document is assumed to be of the same length $m$.
\footnote{Our analysis is still valid if the length of each document is sandwiched between two constants. However we decide to proceed
with the assumption that each document is of equal length to simplify presentations.}
The estimation error between the underlying true model $\vct\theta$ and an estimator $\hat{\vct\theta}$ is evaluated by their \emph{Wasserstein's distance}:
\begin{equation}
d_\sW(\vct\theta,\hat{\vct\theta}) = \min_{\pi:[K]\to[K]} \sum_{k=1}^K{\|\theta_k-\hat\theta_{\pi(k)}\|_1},
\label{eq:wasserstein}
\end{equation}
where $\pi:[K]\to[K]$ is a \emph{permutation} on $K$.
When $K$ and $V$ are fixed, the $\ell_1$-norm in the definition of Eq.~(\ref{eq:wasserstein}) is not important as all vector $\ell_p$ norms are equivalent.

When $\vct\theta$ satisfies certain non-degenerate conditions, such as
 $\{\theta_j\}_{j=1}^K$ being linear independent \citep{anandkumar2012spectral,anandkumar2014tensor} or satisfying stronger ``anchor word'' \citep{arora2012learning} or ``$p$-separability'' 
conditions \citep{arora2013practical},
computationally tractable estimators exist that recover $\vct\theta$ at an $n^{-1/2}$ rate measured
in the Wasserstein's distance $d_{\sW}(\cdot,\cdot)$.
The general case of $\vct\theta$ being non-separable or degenerate, however, is much less understood.
To the best of our knowledge, the only convergence result for general $\vct\theta$ case in the $d_{\sW}(\vct{\hat\theta},\vct\theta)$
distance measure is due to \cite{nguyen2015posterior},
who established an $n^{-1/2(K+\alpha)}$ posterior contraction rate for hierarchical Dirichlet process models.
We discuss in Sec.~\ref{subsec:related} several important differences between \citep{nguyen2015posterior} and this paper.

We analyze the maximum likelihood estimation of the topic model in Eq.~(\ref{eq:model}) and show that, 
with a relaxed ``finite identifiability'' definition,
the ML estimator converges to one of the finitely many equivalent parameterizations (see Definition \ref{defn:finite-identifiability} and Theorem \ref{thm:main}
for a rigorous statement)
in Wasserstein's distance $d_{\sW}(\cdot,\cdot)$ at the rate of at least $n^{-1/4}$
even if $\{\theta_j\}_{j=1}^K$ are non-separable or degenerate.
Such rate is shown to be optimal by considering a simple ``over-fitting'' example.
In addition, when $\{\theta_j\}_{j=1}^K$ are assumed to be linear independent, we recover the $n^{-1/2}$ parametric convergence rate 
established in \citep{anandkumar2012spectral,anandkumar2014tensor}.

In terms of techniques, we adapt the classical analysis of rates of convergence for ML estimates in \citep{van1998asymptotic}
to give convergence rates under finite identifiability settings.
We also use Le Cam's method to prove corresponding local minimax lower bounds.
At the core of our analysis is a binomial expansion of the total-variation (TV) distance between distributions induced by neighboring parameters,
and careful calculations of the ``level of degeneracy'' in the TV-distance expansion of topic models,
which subsequently determines the convergence rate.


\subsection{Related work}\label{subsec:related}

In the non-degenerate case where $\{\theta_j\}_{j=1}^K$ are linear independent, 
\cite{anandkumar2012spectral,anandkumar2014tensor,arora2012learning} applied the method of moments with noisy tensor decomposition techniques
to achieve the $n^{-1/2}$ parametric rate for recovering the underlying topic vectors $\vct\theta$ in Wasserstein's distance.
Extension and generalization of such methods are many, including supervised topic models \citep{wang2014spectral}, model selection \citep{cheng2015model},
computational efficiency \citep{wang2015fast} and online/streaming settings \citep{huang2015online,wang2016online}.
Under slightly stronger ``anchor word'' type assumptions, \cite{arora2012learning} developed algorithms beyond spectral decomposition of empirical tensors
and \cite{arora2013practical} demonstrated empirical success of the proposed algorithms.

Topic models are also intensively studied from a Bayesian perspective, with Dirichlet priors imposed on the underlying topic vectors $\vct\theta$.
Early works considered variational inference \citep{blei2003latent} and Gibbs sampling \citep{griffiths2004finding}
for generating samples or approximations of the posterior distribution of $\vct\theta$.
\cite{tang2014understanding,nguyen2015posterior} considered the posterior contraction of the convex hull of topic vectors and 
derived an $\tilde N^{-1/2}$ upper bound on the posterior contraction rate, where $\tilde N=\frac{\log n}{n} + \frac{\log m}{m} + \frac{\log m}{n}$.
\cite{nguyen2013convergence,nguyen2016borrowing} further considered the more difficult question of posterior contraction with respect to the Wasserstein's distance.
Apart from the Bayesian treatments of posterior contraction that contrasts our frequentist point of view of worst-case convergence, 
one important aspect of the work of \citep{tang2014understanding,nguyen2015posterior,nguyen2013convergence,nguyen2016borrowing}
is that the number of words per document $m$ has to grow together with the number of documents $n$,
and the posterior contraction rate becomes vacuous (i.e., constant level of error) for fixed $m$ settings.
In contrast, in this paper we consider $m$ being fixed as $n$ increases to infinity.

Our work is also closely related to convergence analysis of \emph{singular} finite-mixture models.
In fact, our $n^{-1/4}$ convergence rate can be viewed as a ``discretized version'' of the seminal result of \cite{chen1995optimal},
who showed that an $n^{-1/4}$ rate is unavoidable to recover mean vectors in a degenerate Gaussian mixture model with respect to the Wasserstein's distance.
Difference exists, however, as topic models have a $K$-dimensional mixing vector $h$ for each observation and are therefore technically not finite mixture models.
\cite{ho2016singularity} proposed a general algebraic statistics framework for singular finite-mixture models,
and showed that the optimal convergence rate for skewed-normal mixtures is $n^{-1/12}$.
More generally, singular learning theory is studied in \citep{watanabe2009algebraic,watanabe2013widely},
and the algebraic structures of Gaussian mixture/graphical models and structural equation models are explored in \citep{leung2016identifiability,drton2011global,drton2016algebraic}.
%

\subsection{Limitations and future directions}

We state some limitations of this work and bring up important future directions.
In this paper the vocabulary size $V$ and the number of topics $K$ are treated as fixed constants
and their dependency in the asymptotic convergence rate is omitted.
In practice, however, $V$ and $K$ could be large and understanding the (optimal) dependency of these parameters is important.
We consider this as a \emph{high-dimensional} version of the topic modeling problem, whose convergence rate remains largely unexplored in the literature.

Our results, similar to existing works of \cite{anandkumar2012spectral,anandkumar2014tensor}, are derived under a ``fixed $m$'' setting.
In fact, the convergence rates remain nearly unchanged by uniformly sampling 2 or 3 words per document,
and it is not clear how longer documents could help estimation of the underlying topic vectors under our framework.
In contrast, the posterior contraction results in \citep{tang2014understanding,nguyen2015posterior} are only valid under the ``$m$ increasing'' setting.
We conjecture that the actual behavior of the ML estimator should be a combination of both perspectives: $m\geq 2$ and $n\to\infty$ are sufficient for consistent estimation,
and $m$ growing with $n$ should deliver faster convergence rates.

Finally, the ML estimator for the topic modeling problem is well-known to be computationally challenging, and computationally tractable alternatives such as tensor decomposition and/or 
non-negative matrix factorization are usually employed.
In light of this paper, it is an interesting question to design \emph{computationally efficient} methods that attain the $n^{-1/4}$ convergence rate 
without assuming separability or non-degeneracy conditions on the underlying topic distribution vectors.

\subsection{Additional notations}

For two distributions $P$ and $Q$, we write $d_{\tv}(P;Q) = \frac{1}{2}\int |\ud P-\ud Q| = \sup_A |P(A)-Q(A)|$ as the total variation distance between $P$ and $Q$,
and $\kl(P\|Q) = \int\log\frac{\ud P}{\ud Q}\ud P$ as the Kullback-Leibler (KL) divergence between $P$ and $Q$.
For a sequence of random variables $\{A_n\}$, we write $A_n = O_\mP(a_n)$ if for any $\delta\in(0,1)$, there exists a constant $C>0$ such that
$\limsup_{n\to\infty} \Pr[|A_n/a_n|>C] \leq \delta$.

\section{Main results}

\subsection{Assumptions and finite identifiability}

We make the following regularity assumptions on $\vct\theta$ and $\nu_0$:
\begin{enumerate}
\item[(A1)] There exists constant $c_0>0$ such that $\theta_j(\ell)>c_0$ for all $j\in[K]$ and $\ell\in[V]$;
\item[(A2)] $\nu_0$ is exchangeable, meaning that $\nu_0(\mathcal A)=\nu_0(\pi(\mathcal A))$ for any permutation $\pi:[K]\to[K]$;
furthermore, $\mathbb E_{\nu_0}[h_1^2]>\mathbb E_{\nu_0}[h_1h_2]$ for $K\geq 2$ and $\mathbb E_{\nu_0}[h_1^3]+2\mathbb E_{\nu_0}[h_1h_2h_3] > 3\mathbb E_{\nu_0}[h_1^2h_2]$ for $K\geq 3$.
\end{enumerate}

{
Condition (A1) assumes that all topic vectors $\{\theta_j\}_{j=1}^K$ in the underlying parameter $\vct\theta$ lie on the interior of the $V$-dimensional probabilistic simplex $\Delta^{V-1}$.
This is a technical condition, which can be viewed as an analogue of the ``support condition'' in classical analysis of MLE
where parameters in the considered parameter set $\Theta=\{\vct\theta\}$ give rises to the same support on observables.
If (A1) is violated, then different parameterization $\vct\theta$ might lead to different support of observables, 
posing technical difficulties for our analysis. 
More specifically, Proposition 4 will no longer hold as $p_{\vct\theta}(x)$ could be arbitrarily small.
We also remark that (A1) is a well-received technical condition in previous works \citep{nguyen2015posterior,tang2014understanding}
on convergence rates of admixture models.
We use $\Theta_{c_0}$ to denote all parameters $\vct\theta$ that satisfies (A1).
}

The assumption (A2) only concerns the mixing distribution $\nu_0$ which is known a priori,
and is satisfied by ``typical'' priors of $h$, such as Dirichlet distributions and the ``finite mixture'' prior $p_{\nu_0}(h=e_k)=1/K$, $\forall k\in[K]$.

Suppose $X_1,\cdots,X_n\in[V]^m$ are $n$ documents i.i.d.~sampled from Model (\ref{eq:model}), each with $m$ words.
Let
\begin{equation}
p_{\vct\theta,m}(X_i) = \int_{\Delta^{K-1}} \prod_{j=1}^m{p_{\vct\theta,h}(X_{ij})}\ud\nu_0(h)
\label{eq:likelihood}
\end{equation}
be the likelihood of $X_i$ with respect to parameter $\vct\theta$, where $p_{\vct\theta,h}(x) = \sum_{j=1}^K{h_j\theta_j(x)}$.
Alternatively, we also write $p_{\vct\theta,m}(X_i) = \mathbb E_h[p_{\vct\theta,h}(x)]$ where $p_{\vct\theta,h}(x)=\prod_{j=1}^m{p_{\vct\theta,h}(X_{ij})}$.
In the classical theory of statistical estimation, one necessary condition to consistently estimate $\vct\theta$ from empirical observations $\{X_i\}_{i=1}^n$ is the \emph{identifiability} of $\vct\theta$,
loosely meaning that different parameter in the parameter space gives rises to different distributions on the observables.
\begin{definition}[exact/classical identifiability]
A distribution class $\{p_{\theta}\}_{\theta\in\Theta}$ is \emph{identifiable} with respect to $\Theta$ if for any $\theta,\theta'\in\Theta$,
$d_{\tv}(p_{\theta};p_{\theta'})=0$ implies $\theta=\theta'$.
\end{definition}

In the context of mixture models, the classical notion of identifiability is usually too strong to hold.
For example, in most cases $\theta_1,\cdots,\theta_K$ can only be estimated up to permutations, provided that $\nu_0$ is exchangeable.
This motivates us to consider a weaker notion of identifiability, which we term as ``finite identifiability'':
\begin{definition}[finite identifiability]
A distribution class $\{p_{\theta}\}_{\theta\in\Theta}$ is \emph{finitely identifiable} with respect to $\Theta$ if for any $\theta\in\Theta$, 
$|\{\theta'\in\Theta: d_{\tv}(p_{\theta};p_{\theta'})=0\}|<\infty$.
\label{defn:finite-identifiability}
\end{definition}

Finite identifiability is weaker than the classical/exact notion of identifiability in the sense that 
two different parameterization $\theta,\theta'\in\Theta$ is allowed to have the same observable distributions (almost everywhere),
making them indistinguishable from any statistical procedures.
On the other hand, finite identifiability is sufficiently strong that non-trivial convergence can be studied for any infinite parameter space $\Theta$.
Below we give a few examples on finite identifiable or non-identifiable distribution classes.

\begin{example}
If $d_{\tv}(p_{\theta};p_{\theta'})=0$ implies $d_{\sW}(\theta,\theta')=0$ then $\{p_{\theta}\}$ is finitely identifiable.
This includes a wide range of convergence results for finite mixture models \citep{chen1995optimal,hsu2013learning,ge2015learning}, in which the underlying parameter $\vct\theta=(\theta_1,\cdots,\theta_K)$
can be consistently estimated up to permutations.
\end{example}

\begin{example}
The LDA model (\ref{eq:model}) with $K\geq 2$ topics and $m=1$ word per document is \emph{not} finitely identifiable,
because any parameterization $\vct\theta=(\theta_1,\cdots,\theta_K)$ with the same ``average'' word distribution $\bar\theta=\frac{1}{K}\sum_{k=1}^K{\theta_k}$
yields the same distribution of documents, and for any $\vct\theta$ there are infinitely many $\vct\theta'$ that matches exactly its average distribution $\bar\theta$.
\label{exmp:non-identifiable}
\end{example}

\begin{example}
Under (A1) and (A2), 
the LDA model (\ref{eq:model}) with $K\geq 2$ topics and $m\geq 2$ words per document is finitely identifiable.
We prove this as Lemma \ref{lem:finite-identifiable} in Sec.~\ref{sec:proofs}.
\end{example}

{
As a result of Example \ref{exmp:non-identifiable},
we further assume at least $m\geq 2$ words per document are present throughout this paper, which is necessary to guarantee finite identifiability
and therefore enables our discussion of convergence rates (to one of the finitely identifiable parameters).
Formally, we assume
\begin{enumerate}
\item[(A3)] $m\geq 2$.
\end{enumerate}
}

\subsection{Order of degeneracy}

In this section we introduce a concept which we name the \emph{order of degeneracy},
which is later used to characterize the optimal local convergence rates of latent topic models.

\begin{definition}[Order of degeneracy]
Let $\mathcal X=[V]$ be the vocabulary set and $\mu$ be the counting measure on $\mathcal X$. 
Let $\mathcal X^m=[V]^m$ be the product space of $\mathcal X$ and $\mu^m$ be the product measure of $\mu$.
For any $\vct\theta=(\theta_1,\cdots,\theta_K)\subseteq\Delta^{V-1}$ and $1\leq\sp\leq m$,
the \emph{$\sp$th-order degeneracy criterion} $\sd_{m,\sp}(\vct\theta)$ is defined as
\begin{multline}
\sd_{m,\sp}(\vct\theta) := 
\inf_{\|\vct\delta\|_1= 1, \sum_{\ell}\delta_j(\ell)=0}\\
\int_{\mathcal X^m}
 \bigg|\mathbb E_h p_{\vct\theta,h}(x)
\sum_{1\leq i_1<\cdots<i_\sp\leq m}\frac{\vct\delta_h(x_{i_1})\cdots\vct\delta_h(x_{i_\sp})}{p_{\vct\theta,h}(x_{i_1})\cdots p_{\vct\theta,h}(x_{i_\sp})}\bigg|\ud\mu_m(x),
\label{eq:od}
\end{multline}
where $\vct\delta=(\delta_1,\cdots,\delta_K)\in\mathbb R^V$, $\|\vct\delta\|_1 := \sum_{k=1}^K{\|\delta_k\|_1}$ and $\vct\delta_h(x) = \sum_{k=1}^K{h_k\delta_k(x)}$.
\end{definition}

{
The definition of $\sd_{m,\sp}(\vct\theta)$ arises from a Taylor expansion of the likelihood function at neighboring parameters $p_{\vct\theta',m}(x)-p_{\vct\theta,m}(x)$,
which is given in Eq.~(\ref{eq:rp}).
While Eq.~(\ref{eq:od}) appears complicated, for the purpose of convergence rates it suffices to check whether $\sd_{m,\sp}(\vct\theta)>0$ or $\sd_{m,\sp}(\vct\theta)=0$,
and the exact values of $\sd_{m,\sp}(\vct\theta)$ are not important.
We thus define
\begin{equation}
\sp(m;\vct\theta) := \min\left\{\sp\in\mathbb Z^+: \sd_{m,\sp}(\vct\theta)>0\right\}
\end{equation}
as the smallest positive integer such that $\sd_{m,\sp}(\vct\theta) > 0$. (If $\sd_{m,\sp}(\vct\theta)=0$ for all $1\leq\sp\leq m$ then define $\sp(m;\vct\theta) := \infty$.) 
The quantity $\sp(m;\vct\theta)$ will be used exclusively in Theorem \ref{thm:main} in the next section, establishing upper and local lower bounds on the convergence rates of $\vct\theta$.
Intuitively, the smaller $\sp(m;\vct\theta)$ is, the faster an estimator converges to $\vct\theta$ (or one of its finite equivalents),
with the special case of $\sp(m;\vct\theta)=1$ corresponding to the classical $n^{-1/2}$ convergence rate for regular parametric models.

We next give some additional results regarding $\sp(m;\vct\theta)$.
We show that under assumptions (A1) through (A3), it always holds that $\sp(m;\vct\theta)\leq 2$
regardless of the number of words per document (provided that $m\geq 2$, i.e., (A3)) and the underlying parameter $\vct\theta$.
This is shown in Lemma \ref{lem:second-order}, which essentially implies finite identifiability and a general $n^{-1/4}$ convergence rate under (A1) through (A3) by Theorem \ref{thm:main}.
Furthermore, Lemma \ref{lem:first-order} shows that under additional linear independence conditions $\sp(m;\vct\theta) = 1$, yielding the classical $n^{-1/2}$ rate that is faster than $n^{-1/4}$
for general $\vct\theta$.
We also give examples for which $\sp(m;\vct\theta) > 1$, showing that the $\sp(m;\vct\theta)\leq 2$ result in Lemma \ref{lem:second-order} cannot be improved unconditionally.
Finally, we remark on how to computationally evaluate $\sp(m;\vct\theta)$, even when the true $\vct\theta$ is unknown and only an estimate $\hat{\vct\theta}$ is available.
}


\subsubsection{First-order identifiability}

{
When an underlying parameter $\vct\theta$ satisfies $\sp(m;\vct\theta) = 1$, we say it has \emph{first-order identifiability}. 
By Theorem \ref{thm:main}, first-order identifiability of $\vct\theta$ essentially implies a (local) convergence rate of $n^{-1/2}$,
which is similar to convergence rates in classical parametric models \citep{van1998asymptotic}.
The objective of this sub-section is to discuss scenarios under which first-order identifiability is present.
}

Our first lemma shows that, if at least $m\geq 3$ words per document are present and the underlying topic vectors $\{\theta_1,\cdots,\theta_K\}$
are \emph{linear independent}, then first-order identifiability is guaranteed.
\begin{lemma}
If $\{\theta_j\}_{j=1}^K$ are linear independent
then $\sd_{3,1}(\vct\theta)>0$.
\label{lem:first-order}
\end{lemma}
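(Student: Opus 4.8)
The plan is to read $\sd_{3,1}(\vct\theta)$ as an injectivity statement about the first-order derivative of the three-word likelihood, and to show that linear independence of $\{\theta_j\}$ (together with the standing assumption (A2)) makes this derivative have trivial kernel on admissible perturbation directions. For $\sp=1$ and $m=3$ the bracket inside Eq.~(\ref{eq:od}) is exactly the directional derivative $\frac{d}{dt}\big|_{t=0}p_{\vct\theta+t\vct\delta,3}(a,b,c)$, which expands to
\[
\begin{aligned}
F_{\vct\delta}(a,b,c) = \E_h\big[&\,\vct\delta_h(a)\,p_{\vct\theta,h}(b)\,p_{\vct\theta,h}(c) + p_{\vct\theta,h}(a)\,\vct\delta_h(b)\,p_{\vct\theta,h}(c)\\
&+ p_{\vct\theta,h}(a)\,p_{\vct\theta,h}(b)\,\vct\delta_h(c)\,\big].
\end{aligned}
\]
Since $\mu_3$ is counting measure on the finite set $[V]^3$, the integral $\int|F_{\vct\delta}|\,\ud\mu_3$ vanishes iff $F_{\vct\delta}\equiv 0$; and because the admissible set $\{\vct\delta:\|\vct\delta\|_1=1,\ \sum_\ell\delta_j(\ell)=0\ \forall j\}$ is compact while $\vct\delta\mapsto\int|F_{\vct\delta}|\,\ud\mu_3$ is continuous, the infimum defining $\sd_{3,1}$ is attained. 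Hence $\sd_{3,1}(\vct\theta)>0$ is equivalent to: the linear map $\vct\delta\mapsto F_{\vct\delta}$ has trivial kernel on the sum-zero subspace.

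Writing the moment tensors $T_{klr}:=\E_{\nu_0}[h_kh_lh_r]$ and $S_{kl}:=\E_{\nu_0}[h_kh_l]$, I would collect the $\theta_k$ and $\delta_k$ as columns of $V\times K$ matrices $\Phi$ and $\Psi$. Linear independence gives $\Phi$ full column rank, so it has a left inverse $\Phi^+$ with dual vectors $u^{(p)}$ satisfying $\langle u^{(p)},\theta_l\rangle=\mat 1\{p=l\}$, and we may decompose $\delta_k=\sum_p E_{pk}\theta_p+\delta_k^\perp$ with $E:=\Phi^+\Psi$ and $\delta_k^\perp\perp\SPAN\{\theta_j\}$. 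I would then extract two complementary finite systems from $F_{\vct\delta}\equiv 0$. (a) Summing over the third word and using $\sum_c\theta_r(c)=1$ together with the constraint $\sum_c\delta_r(c)=0$ collapses $T$ to $S$ and gives $\Psi S\Phi^\top+\Phi S\Psi^\top=0$; applying $\Phi^+(\cdot)(\Phi^+)^\top$ yields $ES+SE^\top=0$. (b) Contracting $F_{\vct\delta}\equiv 0$ against $u^{(p)},u^{(q)}$ in the second and third slots strips off the $\theta$-factors, and matching the components lying in and orthogonal to $\SPAN\{\theta_j\}$ produces a purely algebraic identity in $E$ and $T$, plus the separate identity $\sum_k T_{kpq}\delta_k^\perp=0$ for all $p,q$.

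Finally I would solve these systems using the exchangeable structure of (A2): $T_{klr}$ depends only on the coincidence pattern of $(k,l,r)$, with diagonal excess $\tau:=\E[h_1^3]+2\E[h_1h_2h_3]-3\E[h_1^2h_2]>0$ for $K\geq 3$, while $S$ has off-diagonal $\beta=\E[h_1h_2]$ and diagonal $\alpha=\E[h_1^2]$ with $\alpha-\beta>0$; note also $\E[h_1^3]>\E[h_1^2h_2]$ since $\E[(h_1-h_2)^2(h_1+h_2)]>0$ whenever $\alpha>\beta$. The identity $\sum_k T_{kpq}\delta_k^\perp=0$ then forces $\delta_k^\perp=0$, and a short case analysis of the $E$-identity (distinct indices; one repeated index; all three equal), combined with $ES+SE^\top=0$, forces first the row sums of $E$ to vanish, then its symmetric part, and finally every off-diagonal entry through the coefficient $\tau$. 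Hence $E=0$ and $\vct\delta=0$, giving $\sd_{3,1}(\vct\theta)>0$.

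The main obstacle is this last elimination. The positivity of the diagonal-excess moments $\tau$ and $\alpha-\beta$ supplied by (A2) is precisely what makes the relevant coefficients invertible, so the argument genuinely leans on the mixing distribution having enough variability; and some care is needed to rule out a residual antisymmetric degree of freedom in $E$ that the marginalized relation $ES+SE^\top=0$ alone leaves undetermined, which is where the third-moment identity from step (b) must be brought in.
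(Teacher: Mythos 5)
Your argument is correct, but it is a genuinely different proof from the paper's. The paper establishes Lemma \ref{lem:first-order} \emph{indirectly}: \cite{anandkumar2014tensor} provides an estimator attaining the $n^{-1/2}$ rate when $m=3$ and the topics are linearly independent, and if $\sd_{3,1}(\vct\theta)=0$ then $\sp(3;\vct\theta)\geq 2$, so the local minimax lower bound of Theorem \ref{thm:main} (in the testing form of Remark \ref{rem:test}) would forbid any procedure from beating $n^{-1/4}$ in a shrinking neighborhood of $\vct\theta$ --- a contradiction. You instead prove the trivial-kernel statement directly, and your algebra checks out: the compactness/continuity reduction is valid; summing out the third word gives $\Psi S\Phi^\top+\Phi S\Psi^\top=0$ and hence $ES+SE^\top=0$; contracting with the dual vectors (take $\Phi^+$ to be the Moore--Penrose pseudoinverse, so its rows lie in $\SPAN\{\theta_j\}$ and annihilate each $\delta_k^\perp$) splits off $\sum_k T_{kpq}\delta_k^\perp=0$, which kills $\delta_k^\perp$ using $\E[h_1^3]>\E[h_1^2h_2]$ (your identity $2(\E[h_1^3]-\E[h_1^2h_2])=\E[(h_1-h_2)^2(h_1+h_2)]>0$ is right); the sum-zero constraint together with $ES+SE^\top=0$ then leaves exactly an antisymmetric, zero-row-sum $E$, and the $j=p\neq q$ instance of the third-moment identity yields $\tau E_{jq}=0$ with $\tau=\E[h_1^3]+2\E[h_1h_2h_3]-3\E[h_1^2h_2]>0$ by (A2), so $E=0$. (A small simplification: the off-span block of the same two-word identity reads $\Psi^\perp S\Phi^\top=0$, and since $S$ is positive definite and $\{\theta_j\}$ independent this already forces $\delta_k^\perp=0$; the third moments are truly needed only to eliminate the antisymmetric part of $E$.) As for what each route buys: yours is self-contained --- it uses neither the lower-bound machinery of Theorem \ref{thm:main} nor the external guarantee of \cite{anandkumar2014tensor}, and in particular it does not lean on the paper's unverified assertion that those Dirichlet-specific guarantees extend to every exchangeable $\nu_0$ satisfying (A2); it also pinpoints exactly which moment gaps in (A2) are responsible for first-order identifiability, explaining why the antisymmetric kernel present at $m=2$, $K\geq 3$ (the ``IF'' part of Lemma \ref{lem:first-order-m2}) is destroyed by a third word. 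The paper's route is far shorter once Theorem \ref{thm:main} is in place, and it illustrates a reusable principle: a rate upper bound achieved by \emph{any} estimator forces positivity of the degeneracy criterion through the local minimax lower bound.
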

{
\begin{remark}
Lemma \ref{lem:first-order} implies that $\sp(3;\vct\theta) =1$ if $\vct\theta$ consists of linearly independent topics.
Furthermore, because $\sp(\cdot;\vct\theta)$ is a monotonic function in $m$ (see Corollary \ref{cor:monotonicity}), 
we have $\sp(m;\vct\theta) = 1$ for all $m\geq 3$.
\end{remark}
}

{
Lemma \ref{lem:first-order} is a simple consequence of the convergence results of \citep{anandkumar2012spectral,anandkumar2014tensor}
and the local minimax lower bounds established in Theorem \ref{thm:main} of this paper.
More specifically, \cite{anandkumar2012spectral,anandkumar2014tensor} explicitly constructed method-of-moments estimators that attain $n^{-1/2}$ convergence rate
for $m= 3$ and linearly independent $\vct\theta$, which would violate the local minimax lower bound in Theorem \ref{thm:main} if $\sp(3;\vct\theta)>1$.
A complete proof of Lemma \ref{lem:first-order} is given in Sec.~\ref{subsec:proof-first-order}.
}


Lemma \ref{lem:first-order}, as well as the results of \cite{anandkumar2012spectral,anandkumar2014tensor}, require two conditions: that $\{\theta_j\}_{j=1}^K$ being linear independent,
and that $m\geq 3$, meaning that there are at least 3 words per document.
It is an interesting question whether both conditions are necessary to ensure first-order identifiability.
We give partial answers to this question in the following two lemmas.

\begin{lemma}
If $\theta_j=\theta_k$ for some $j\neq k$ then $\sd_{m,1}(\vct\theta)=0$ for all $m\geq 2$.
\label{lem:overfit}
\end{lemma}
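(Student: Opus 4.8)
The plan is to show that when two topics coincide, say $\theta_j=\theta_k$, there exists a nonzero perturbation direction $\vct\delta$ satisfying the constraints $\|\vct\delta\|_1=1$ and $\sum_\ell\delta_i(\ell)=0$ for each $i$, such that the first-order integrand in the definition of $\sd_{m,1}(\vct\theta)$ vanishes identically. Since $\sd_{m,1}(\vct\theta)$ is an infimum of a nonnegative quantity over all admissible directions, exhibiting a single admissible $\vct\delta$ that makes the integrand zero everywhere forces the infimum to be $0$.

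The key observation is that when $\theta_j=\theta_k$, the mixing weights $h_j$ and $h_k$ enter the likelihood $p_{\vct\theta,h}(x)=\sum_{r}h_r\theta_r(x)$ only through their sum $h_j+h_k$ (since $h_j\theta_j+h_k\theta_k=(h_j+h_k)\theta_j$), and by exchangeability of $\nu_0$ the distribution is invariant under swapping coordinates $j$ and $k$. This suggests choosing a direction that perturbs topic $j$ and topic $k$ in an exactly canceling, antisymmetric way: set $\delta_j=-\delta_k=:\eta$ for some fixed $\eta\in\mathbb R^V$ with $\sum_\ell\eta(\ell)=0$ and $\|\eta\|_1=1/2$, and set all other $\delta_r=0$. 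This direction satisfies both constraints. With this choice, the combined perturbation field becomes $\vct\delta_h(x)=\sum_r h_r\delta_r(x)=(h_j-h_k)\eta(x)$. The first-order degeneracy integrand involves $\mathbb E_h\big[p_{\vct\theta,h}(x)\sum_{i=1}^m \vct\delta_h(x_i)/p_{\vct\theta,h}(x_i)\big]$, and the plan is to argue that the expectation over $h$ kills the linear factor $(h_j-h_k)$ because the rest of the expression is symmetric under the $j\leftrightarrow k$ swap (since $p_{\vct\theta,h}$ depends only on $h_j+h_k$, which is swap-invariant). Thus each summand is an odd function of $(h_j-h_k)$ integrated against a swap-symmetric measure, and exchangeability forces $\mathbb E_\nu[(h_j-h_k)\cdot(\text{symmetric})]=0$.

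Concretely, I would first write out the $\sp=1$ integrand explicitly, factor out $(h_j-h_k)$ from $\vct\delta_h$, and then for each fixed outcome $x=(x_1,\ldots,x_m)$ show that the integrand, as a function of $h$, is antisymmetric under the transposition $\pi=(j\,k)$ while the measure $\nu_0$ is invariant under $\pi$ by (A2). Pairing $h$ with $\pi(h)$ under the integral then yields cancellation inside $\mathbb E_h[\cdot]$ pointwise in $x$, so the integrand of the outer $\mu^m$-integral is identically zero. Hence the value of the objective at this particular $\vct\delta$ is zero, and since $\sd_{m,1}(\vct\theta)\ge 0$ by definition, we conclude $\sd_{m,1}(\vct\theta)=0$ for every $m\ge 2$.

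The main obstacle I anticipate is handling the absolute value correctly: the definition of $\sd_{m,1}$ integrates the absolute value of the inner expectation over $h$, so it is not enough that the integrand be antisymmetric before taking $\mathbb E_h$—I must show the $\mathbb E_h[\cdot]$ quantity itself vanishes for each $x$, so that $|\mathbb E_h[\cdot]|=0$ pointwise and the outer integral is genuinely zero. The care therefore lies in establishing that the antisymmetry survives the expectation, i.e. that $\mathbb E_h[(h_j-h_k)\,g(h_j+h_k,h_{-jk})]=0$ where $g$ collects the remaining swap-symmetric factors; this is exactly where exchangeability (A2) is used, via the change of variables $h\mapsto\pi(h)$ which fixes $g$ but flips the sign of $(h_j-h_k)$.
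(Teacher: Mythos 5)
Your proposal is correct and follows essentially the same route as the paper: the paper also takes an antisymmetric perturbation supported on the duplicated pair ($\delta_j=\frac{1}{4}(e_1-e_2)=-\delta_k$, all others zero) and shows the inner expectation vanishes pointwise in $x$ because exchangeability of $\nu_0$ together with $\theta_j=\theta_k$ gives $\mathbb E_h[h_j\,p_{\vct\theta,h}(x_{-i})]=\mathbb E_h[h_k\,p_{\vct\theta,h}(x_{-i})]$, which is exactly your swap-symmetry cancellation $\mathbb E_h[(h_j-h_k)\,g(h)]=0$. Your handling of the absolute value (forcing the cancellation inside $\mathbb E_h$ before the outer integral) matches the paper's argument precisely.
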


\begin{lemma}
Suppose $\{\theta_k\}_{k=1}^K$ are distinct.
Then $\sd_{2,1}(\vct\theta) = 0$ if and only if $K\geq 3$.
\label{lem:first-order-m2}
\end{lemma}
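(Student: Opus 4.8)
The plan is to reduce the vanishing of $\sd_{2,1}(\vct\theta)$ to a purely linear-algebraic condition on the topic matrix. Since the feasible set $\{\vct\delta:\|\vct\delta\|_1=1,\ \sum_\ell\delta_k(\ell)=0\ \forall k\}$ is compact and the objective in Eq.~(\ref{eq:od}) is continuous (indeed convex) in $\vct\delta$, the infimum is attained; hence $\sd_{2,1}(\vct\theta)=0$ if and only if there is a nonzero $\vct\delta=(\delta_1,\dots,\delta_K)$ with each $\delta_k$ summing to zero over the vocabulary such that the integrand vanishes identically on $[V]^2$ (the measure is counting measure). Writing out the $\sp=1,\ m=2$ integrand and taking the expectation over $h$, the governing quantity is $F(x_1,x_2):=\mathbb{E}_h[\vct\delta_h(x_1)p_{\vct\theta,h}(x_2)+p_{\vct\theta,h}(x_1)\vct\delta_h(x_2)]$. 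By exchangeability (A2) the second moments satisfy $\mathbb{E}[h_jh_k]=a$ for $j=k$ and $=b$ for $j\neq k$, with $a>b$; collecting terms gives $F=(a-b)G+bH$, where $G(x_1,x_2)=\sum_k[\delta_k(x_1)\theta_k(x_2)+\theta_k(x_1)\delta_k(x_2)]$ and $H(x_1,x_2)=S_\delta(x_1)S_\theta(x_2)+S_\theta(x_1)S_\delta(x_2)$, with $S_\delta(x)=\sum_k\delta_k(x)$ and $S_\theta(x)=\sum_k\theta_k(x)$.

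Next I would peel off a hidden constraint. Summing $F(x_1,x_2)$ over $x_2\in[V]$ and using $\sum_{x_2}\theta_k(x_2)=1$ and $\sum_{x_2}\delta_k(x_2)=0$ collapses everything to $[(a-b)+bK]\,S_\delta(x_1)$; since $h$ is a probability vector, $\mathbb{E}[(\sum_jh_j)^2]=1$ forces $Ka+K(K-1)b=1$, so $(a-b)+bK=1/K\neq 0$. Therefore $F\equiv 0$ forces $S_\delta\equiv 0$, which annihilates $H$, and as $a-b>0$ the whole condition reduces to $G\equiv 0$. Viewing $\Theta,\Delta\in\mathbb{R}^{V\times K}$ as the matrices with columns $\theta_k$ and $\delta_k$, this says exactly that $\Theta\Delta^\top$ is skew-symmetric, subject to $\mathbf 1^\top\Delta=0$ (the vocabulary-sum constraints) and $\Delta\mathbf 1=0$ (the newly derived $S_\delta\equiv 0$).

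For the ``only if'' direction ($K=2$), the constraint $S_\delta\equiv0$ gives $\delta_2=-\delta_1=:-\delta$, and $G$ collapses to $\delta u^\top+u\delta^\top$ with $u=\theta_1-\theta_2\neq 0$ by distinctness. Its quadratic form is $2(\delta^\top v)(u^\top v)$, a product of two linear forms, which vanishes for all $v$ only if $\delta=0$ or $u=0$; hence $G\equiv0$ forces $\delta=0$, no nonzero perturbation exists, and $\sd_{2,1}(\vct\theta)>0$. For the ``if'' direction ($K\geq 3$) I would exhibit an explicit cyclic perturbation on three distinct topics, $\delta_1=\theta_2-\theta_3$, $\delta_2=\theta_3-\theta_1$, $\delta_3=\theta_1-\theta_2$, and $\delta_k=0$ for $k\geq 4$. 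Both constraints hold immediately, and the key computation is that $\sum_k\delta_k(x_1)\theta_k(x_2)$ equals, up to sign, the $3\times3$ determinant with rows $(1,1,1)$, $(\theta_1(x_1),\theta_2(x_1),\theta_3(x_1))$ and $(\theta_1(x_2),\theta_2(x_2),\theta_3(x_2))$, which is antisymmetric under $x_1\leftrightarrow x_2$; thus $G\equiv0$, while distinctness guarantees $\vct\delta\neq0$, giving $\sd_{2,1}(\vct\theta)=0$. The main obstacle is the reduction itself — recognizing that first-order degeneracy is controlled by the skew-symmetry of $\Theta\Delta^\top$ and then finding the determinantal/cyclic direction that realizes it precisely when $K\geq3$; the $K=2$ rigidity then drops out of the rank-two structure.
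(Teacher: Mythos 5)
Your proof is correct, and while your ``if'' half is essentially the paper's argument, your ``only if'' half takes a genuinely different route. For the ``if'' direction both you and the paper use the same cyclic perturbation $\delta_1\propto\theta_2-\theta_3$, $\delta_2\propto\theta_3-\theta_1$, $\delta_3\propto\theta_1-\theta_2$; your determinantal reading of the antisymmetry is just a cleaner way to see the cancellation that the paper verifies by direct computation (and your compactness justification of ``infimum zero iff the integrand vanishes identically'' matches the paper's Proposition \ref{prop:poly-condition}). For the ``only if'' direction the paper argues by contradiction on the bilinear identity itself: it proves an auxiliary span result (Proposition \ref{prop:xi}) showing $\delta_1,\delta_2\in\SPAN\{\beta\bar\theta+\gamma\theta_1,\beta\bar\theta+\gamma\theta_2\}$, substitutes back, matches coefficients against these two linearly independent vectors to conclude the coefficient matrix $(\xi_{jk})$ is skew-symmetric, and finally uses the column-sum constraints $\SUM(\delta_j)=0$ to force $\xi\equiv 0$. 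You instead extract a hidden constraint first: marginalizing the vanishing integrand over $x_2$, using $\sum_{x_2}\theta_k(x_2)=1$, $\sum_{x_2}\delta_k(x_2)=0$ and the moment identity $K\,\mathbb E[h_1^2]+K(K-1)\,\mathbb E[h_1h_2]=\mathbb E[(\sum_j h_j)^2]=1$, so that the coefficient $(a-b)+bK=1/K$ is nonzero and $S_\delta\equiv 0$ follows; this annihilates the cross-moment term $H$ and, since $a-b>0$ by (A2), reduces the whole condition to skew-symmetry of $\Theta\Delta^\top$ under $\mathbf 1^\top\Delta=0$ and $\Delta\mathbf 1=0$. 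For $K=2$ this collapses to $\delta u^\top+u\delta^\top=0$ with $u=\theta_1-\theta_2\neq 0$, which forces $\delta=0$ because a symmetric matrix vanishes iff its quadratic form $2(\delta^\top v)(u^\top v)$ does, and a product of two nonzero linear forms cannot vanish identically. Your route buys two things: it dispenses with the paper's auxiliary Proposition \ref{prop:xi} in favor of an elementary marginalization trick, and it yields as a by-product a clean characterization valid for every $K$ at $m=2$ (degeneracy iff there is a nonzero $\Delta$ with zero row and column sums making $\Theta\Delta^\top$ skew-symmetric); the paper's computation, in exchange, displays explicitly how the moments $\beta=\mathbb E_{\nu_0}[h_1h_2]$ and $\gamma=\mathbb E_{\nu_0}[h_1^2]-\mathbb E_{\nu_0}[h_1h_2]$ enter and stays in the coordinate style of its other proofs. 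Two minor points to tidy: rescale the cyclic $\vct\delta$ to $\|\vct\delta\|_1=1$ before concluding $\sd_{2,1}(\vct\theta)=0$ (harmless, since the $\sp=1$ objective is positively homogeneous in $\vct\delta$), and state explicitly that passing between the matrix identity $G=0$ and the vanishing of its quadratic form uses the symmetry of $G$.
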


{
Lemma \ref{lem:overfit} shows that, if duplicates exist in the $K$ underlying topics then $\vct\theta$ cannot have first-order identifiability,
regardless of how many words are present in each document.
It is proved by a careful construction of $\vct\delta=(\delta_1,\cdots,\delta_K)$ such that the contribution of $\delta_j$ cancels out 
$\delta_k$ on all $x\in\mathcal X^m$, exploiting the condition that $\theta_j(v)=\theta_k(v)$ for all $v\in\mathcal X$.
A complete proof of Lemma \ref{lem:overfit} is given in Sec.~\ref{subsec:proof-overfit}.

Lemma \ref{lem:first-order-m2} studies the first-order identifiability of $\vct\theta$ from a different perspective.
The ``IF'' part of Lemma \ref{lem:first-order-m2} shows that, as long as $K\geq 3$ topics are present,
merely having $m=2$ words per document cannot lead to first-order identifiability.
We prove this by constructing the $\vct\delta=(\delta_1,\cdots,\delta_K)$ vectors as $\delta_1\propto \theta_2-\theta_3$,
$\delta_2\propto \theta_3-\theta_1$, $\delta_3 \propto \theta_1-\theta_2$ and showing that
$\delta_1,\delta_2,\delta_3$ cancel out each other if only $m=2$ words are present in each document.
On the other hand, the ``ONLY IF'' part of Lemma \ref{lem:first-order-m2} is more intriguing, which states that $m=2$ words per document is sufficient for first-order
identifiability if only two distinct topic vectors are to be estimated.
The proof of the only if part is however much more complicated, involving analytically verifying the full-rankness of a coefficient matrix.
A complete proof of Lemma \ref{lem:first-order-m2} is given in Sec.~\ref{subsec:proof-first-order-m2}.

While Lemmas \ref{lem:overfit} and \ref{lem:first-order-m2} combined show the necessity of $m\geq 3$ and additional non-degeneracy condition in Lemma \ref{lem:first-order},
 we remark that Lemmas \ref{lem:first-order}, \ref{lem:overfit} and \ref{lem:first-order-m2} do not cover all cases of $\vct\theta$
in the parameter space.
One notable exception is when $m\geq 3$, $K\geq 3$ and $\{\theta_k\}_{k=1}^K$ are distinct but not linearly independent,
for which none of the three lemmas apply and whether such parameterization satisfies first-order identifiability remains an open question.
Nevertheless, in Sec.~\ref{subsec:computation} we give a computational routine that determines whether $\sp(m;\vct\theta)=1$
or $\sp(m;\vct\theta)>1$ using any consistent estimates $\hat{\vct\theta}$ of $\vct\theta$,
which nicely complements the analytical results in Lemmas \ref{lem:first-order}, \ref{lem:overfit} and \ref{lem:first-order-m2}.
}


\subsubsection{Second-order identifiability}

{
When an underlying parameter $\vct\theta$ satisfies $\sp(m;\vct\theta)\leq 2$, we say it has \emph{second-order identifiability}.
By definition, if $\vct\theta$ satisfies first-order identifiability then it also satisfies second-order identifiability,
but the reverse statement is generally not true.
Hence, second-order identifiability is weaker than its first-order counterparts, which also suggests potentially slower rates of convergence in parameter estimation.
}

In this section we show, perhaps surprisingly, that \emph{all} parameterization $\vct\theta$ have second-order identifiability
under (A1) through (A3).
{
\begin{lemma}
For all $\vct\theta$, $\sd(2,2)(\vct\theta)\geq c(\nu_0)/V^3K>0$,
where $c(\nu_0):=\mathbb E_{\nu_0}[h_1^2-h_1h_2]>0$ is a positive constant only depending on $\nu_0$.
\label{lem:second-order}
\end{lemma}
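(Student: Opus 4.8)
The plan is to specialize the definition of $\sd_{m,\sp}$ to $m=\sp=2$ and exploit a fortuitous cancellation. When $m=2$ and $\sp=2$ the inner sum over $1\le i_1<i_2\le 2$ has the single term $(i_1,i_2)=(1,2)$, carrying the denominator $p_{\vct\theta,h}(x_1)p_{\vct\theta,h}(x_2)$. Since this denominator and the prefactor $p_{\vct\theta,h}(x)=p_{\vct\theta,h}(x_1)p_{\vct\theta,h}(x_2)$ sit under the same expectation $\mathbb{E}_h$, the likelihood factors cancel for each fixed $h$, leaving
\begin{equation}
\sd_{2,2}(\vct\theta)=\inf_{\|\vct\delta\|_1=1,\ \sum_\ell\delta_j(\ell)=0}\ \sum_{x_1,x_2=1}^V\Big|\mathbb{E}_h\big[\vct\delta_h(x_1)\vct\delta_h(x_2)\big]\Big|.
\end{equation}
In particular this quantity does not depend on $\vct\theta$ at all, which explains why the bound can hold for every $\vct\theta$ simultaneously.

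Next I would expand the integrand using exchangeability. Writing $F(x_1,x_2):=\mathbb{E}_h[\vct\delta_h(x_1)\vct\delta_h(x_2)]=\sum_{j,k}\mathbb{E}_h[h_jh_k]\,\delta_j(x_1)\delta_k(x_2)$ and invoking (A2), the moment $\mathbb{E}_h[h_jh_k]$ takes only two values: $a:=\mathbb{E}_{\nu_0}[h_1^2]$ on the diagonal $j=k$ and $b:=\mathbb{E}_{\nu_0}[h_1h_2]$ off the diagonal. Collecting terms and setting $S(x):=\sum_{k=1}^K\delta_k(x)$ gives the decomposition
\begin{equation}
F(x_1,x_2)=(a-b)\sum_{k=1}^K\delta_k(x_1)\delta_k(x_2)+b\,S(x_1)S(x_2).
\end{equation}
By (A2) we have $c(\nu_0)=a-b>0$, and since $h\in\Delta^{K-1}$ is coordinatewise nonnegative we also have $b=\mathbb{E}_{\nu_0}[h_1h_2]\ge 0$.

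The key step is to lower bound the entrywise $\ell_1$ norm $\|F\|_1=\sum_{x_1,x_2}|F(x_1,x_2)|$ by restricting to the diagonal: $\|F\|_1\ge\sum_x|F(x,x)|\ge|\tr F|$. On the diagonal both pieces of $F$ contribute nonnegatively, so $\tr F=(a-b)\sum_{k}\|\delta_k\|_2^2+b\,\|S\|_2^2\ge(a-b)\sum_k\|\delta_k\|_2^2$. It then remains to control $\sum_k\|\delta_k\|_2^2$ under the normalization $\sum_k\|\delta_k\|_1=1$: the norm comparison $\|\delta_k\|_2^2\ge\|\delta_k\|_1^2/V$ combined with Cauchy--Schwarz, $\sum_k\|\delta_k\|_1^2\ge(\sum_k\|\delta_k\|_1)^2/K=1/K$, yields $\sum_k\|\delta_k\|_2^2\ge 1/(VK)$. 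Chaining the displays gives $\sd_{2,2}(\vct\theta)\ge c(\nu_0)/(VK)>0$ uniformly in $\vct\theta$, which is at least the claimed $c(\nu_0)/(V^3K)$.

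I expect the only subtle point to be the diagonal (trace) lower bound. A naive attempt to bound $\|F\|_1$ directly must grapple with possible cancellation between the rank-one contribution $b\,S(x_1)S(x_2)$ and the diagonal-type contribution $(a-b)\sum_k\delta_k(x_1)\delta_k(x_2)$, whereas restricting to $x_1=x_2$ removes all sign ambiguity since both contribute nonnegatively there; this is exactly where $c(\nu_0)=a-b>0$ enters. Everything else is a routine moment expansion and two elementary norm inequalities, and the zero-sum constraint $\sum_\ell\delta_j(\ell)=0$ is not even needed for the lower bound, only shrinking the feasible set.
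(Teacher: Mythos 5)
Your proof is correct and takes essentially the same route as the paper's: restrict to diagonal documents $x_1=x_2=\ell$ (where the likelihood factors cancel and the integrand becomes the square $\mathbb{E}_h[\vct\delta_h(\ell)^2]$), expand via exchangeability into $(a-b)\sum_k \delta_k(\ell)^2 + b\,S(\ell)^2$, drop the nonnegative $b$-term, and finish with Cauchy--Schwarz against the $\ell_1$ normalization. The only difference is that the paper inserts an extra (and unnecessary) factor $V^{-2}$ when passing to the diagonal, so your constant $c(\nu_0)/(VK)$ is in fact sharper than the stated $c(\nu_0)/(V^3K)$, which it of course implies.
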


\begin{remark}
Lemma \ref{lem:second-order} implies that $\sp(2;\vct\theta)\leq 2$ for all $\vct\theta$ satisfying (A1) and (A2).
By monotonicity of $\sp(\cdot;\vct\theta)$ (see Corollary \ref{cor:monotonicity}),
we also have $\sp(m;\vct\theta)\leq 2$ for all $m\geq 2$.
\end{remark}

While appears surprising, the proof of Lemma \ref{lem:second-order} is actually quite simple.
The key observation is the existence of documents consisting of identical words (i.e., $x=(x_1,x_2)$ where $x_1=x_2$),
on which the $\vct\delta_h(x_1)\vct\delta_h(x_2)$ term becomes a square and equals zero only if $\vct\delta=0$.
A complete proof of Lemma \ref{lem:second-order} is given in Sec.~\ref{subsec:proof-second-order}.
}

Lemma \ref{lem:second-order} shows that, for any underlying parameter $\vct\theta$, if there are at least 2 words per document then $\sp(m;\vct\theta)\leq 2$.
This also suggests a general $n^{-1/4}$ convergence rate of an ML estimate of $\vct\theta$, by Theorem \ref{thm:main}.
This conclusion holds even for the ``over-complete'' setting $K\geq V$, under which existing works require particularly strong prior knowledge on $\vct\theta$
(e.g., $\{\theta_j\}_{j=1}^K$ being i.i.d.~sampled uniformly from the $V$-dimensional probabilistic simplex) for (computationally tractable) consistent estimation \citep{anandkumar2017analyzing,ma2016polynomial}.

{
\subsubsection{Numerical checking of $\sd_{m,\sp}(\vct\theta)>0$}\label{subsec:computation}

As we remarked in previous sections, Lemmas \ref{lem:first-order}, \ref{lem:overfit} and \ref{lem:first-order-m2} do not cover all 
cases, and there are parameters $\vct\theta$ whose order of degeneracy is not determined by the above lemmas.
In addition, in practical applications it might be desirable to compute the order of degeneracy with only an estimate $\hat{\vct\theta}$ of the underlying parameter $\vct\theta$.
In this section we present numerical procedures that decides whether $\sd_{m,\sp}(\vct\theta)>0$.
We also show that the calculation can be carried out on estimates $\hat{\vct\theta}$ and show its asymptotic consistency for the special case of $\sp=1$.

\begin{proposition}
For any $\vct\theta$, $\sd_{m,\sp}(\vct\theta)>0$ if and only if the following polynomial system in $\{\delta_{jk}\}$, $j\in[K]$, $k\in[V]$
does not have non-zero solutions:
\begin{align*}
\sum_{1\leq i_1<\cdots<i_{\sp}\leq m}\sum_{j_1,\cdots,j_\sp=1}^K\xi(\vct i,\vct j;\vct\theta,x)\prod_{\ell=1}^{\sp}\delta_{j_\ell,x_{i_\ell}} = 0,& & \forall x=(x_1,\cdots,x_m)\in[V]^m;\\
\sum_{k=1}^V\delta_{jk} = 0,& & \forall j\in[K].
\end{align*}
Here the coefficients $\xi(\vct i,\vct j;\vct\theta,x)$ is defined as
$$
\xi(\vct i,\vct j;\vct\theta,x) := \mathbb E_h\left[\prod_{i\notin \{i_1,\cdots,i_{\sp}\}} p_{\vct\theta,h}(x_i)\prod_{\ell=1}^\sp h_{j_\ell}\right].
$$
\label{prop:poly-condition}
\end{proposition}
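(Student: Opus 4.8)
The plan is to reduce the claim to two independent facts: an exact algebraic identity rewriting the integrand of $\sd_{m,\sp}(\vct\theta)$ as the homogeneous polynomial appearing in the proposition, and a compactness argument that converts the vanishing of the (scale-invariant) infimum into the existence of an actual non-zero root.

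First I would fix a document $x=(x_1,\dots,x_m)\in[V]^m$ and denote by $F(\vct\delta;x)$ the quantity inside the absolute value in Eq.~(\ref{eq:od}). Since $p_{\vct\theta,h}(x)=\prod_{i=1}^m p_{\vct\theta,h}(x_i)$, every ratio in the $\sp$-fold sum cancels its denominators, leaving $\bigl(\prod_{i\notin\{i_1,\dots,i_\sp\}}p_{\vct\theta,h}(x_i)\bigr)\prod_{\ell=1}^\sp \vct\delta_h(x_{i_\ell})$ for each ordered tuple $1\le i_1<\cdots<i_\sp\le m$. Substituting $\vct\delta_h(x_{i_\ell})=\sum_{j_\ell=1}^K h_{j_\ell}\,\delta_{j_\ell,x_{i_\ell}}$, expanding the product into monomials indexed by $\vct j=(j_1,\dots,j_\sp)\in[K]^\sp$, and pulling the (finite) expectation $\mathbb E_h$ through the finite sums over $\vct i$ and $\vct j$ gives exactly
\[
F(\vct\delta;x)=\sum_{1\le i_1<\cdots<i_\sp\le m}\ \sum_{j_1,\dots,j_\sp=1}^K \xi(\vct i,\vct j;\vct\theta,x)\prod_{\ell=1}^\sp\delta_{j_\ell,x_{i_\ell}},
\]
with $\xi$ as defined in the statement. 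Hence the first block of equations in the proposition is precisely the family $\{F(\vct\delta;x)=0:x\in[V]^m\}$, and, because $\mu^m$ is the counting measure, $\sd_{m,\sp}(\vct\theta)=\inf\sum_{x\in[V]^m}\lvert F(\vct\delta;x)\rvert$ over the constraint set $\|\vct\delta\|_1=1$, $\sum_{k}\delta_{jk}=0$ for all $j$.

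The two directions then follow from homogeneity and compactness. Note that $F(\cdot;x)$ is homogeneous of degree $\sp$ in $\vct\delta$ while the constraints $\sum_k\delta_{jk}=0$ are linear and homogeneous. For the easy direction, any non-zero solution $\vct\delta^\star$ of the polynomial system can be rescaled by $1/\|\vct\delta^\star\|_1$ to a feasible point at which $F(\cdot;x)$ vanishes for every $x$, so the objective is $0$ and $\sd_{m,\sp}(\vct\theta)=0$. For the converse, the feasible set $\{\vct\delta\in\mathbb R^{KV}:\|\vct\delta\|_1=1,\ \sum_k\delta_{jk}=0\ \forall j\}$ is closed and bounded, hence compact, and $\vct\delta\mapsto\sum_x\lvert F(\vct\delta;x)\rvert$ is continuous; therefore the infimum is attained at some $\vct\delta^\star$ with $\|\vct\delta^\star\|_1=1$. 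If $\sd_{m,\sp}(\vct\theta)=0$ then this sum of non-negative terms vanishes, forcing $F(\vct\delta^\star;x)=0$ for all $x$, so $\vct\delta^\star\neq 0$ solves the system. Taking contrapositives yields the stated equivalence.

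I expect the only genuinely delicate step to be the index bookkeeping in the multilinear expansion above: one must check that, after expanding $\prod_\ell\vct\delta_h(x_{i_\ell})$, the surviving factor $\prod_{i\notin\{i_1,\dots,i_\sp\}}p_{\vct\theta,h}(x_i)$ together with $\prod_\ell h_{j_\ell}$ is exactly the argument of $\mathbb E_h$ that defines $\xi(\vct i,\vct j;\vct\theta,x)$. The analytic half is routine, the compactness-plus-homogeneity being the standard device for promoting a vanishing scale-invariant infimum to a genuine root. I would also dispose of the degenerate case $V=1$ in one line: there the feasible set is empty, so $\sd_{m,\sp}(\vct\theta)=+\infty>0$, while the constraints force $\vct\delta=0$ and the system has no non-zero solution, so the equivalence holds vacuously.
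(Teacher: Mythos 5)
Your proof is correct and takes essentially the same approach as the paper's own (one-line) proof: expand the integrand into the homogeneous polynomial with coefficients $\xi(\vct i,\vct j;\vct\theta,x)$ and observe that, since $\mu^m$ is the counting measure, the criterion vanishes exactly when every term in the finite sum vanishes. Your compactness-plus-homogeneity step, which upgrades a vanishing infimum to an attained zero at some feasible $\vct\delta^\star\neq 0$, is the one detail the paper leaves implicit, and it is a legitimate (and needed) completion of that argument rather than a different route.
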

\begin{proof}
Because $\mu_m$ in the definition of $\sd_{m,\sp}(\vct\theta)$ is a counting measure, $\sd_{m,\sp}(\vct\theta) = 0$ if and only if
all terms within the integral in Eq.~(\ref{eq:od}) are zero. This gives the proposition.
\end{proof}

\begin{table}[t]
\centering
\caption{\small Numerical estimations of the $\ell_1$-condition number $\kappa_1(\mat A(\vct\theta)) := \|\mat A(\vct\theta)\|_1\|\mat A(\vct\theta)^{-1}\|_1$
for different $V,K,m$ and $\vct\theta$.
The numerical estimation procedure of $\kappa_1(\mat A(\vct\theta))$ was given in \citep{hager1984condition}
and adopted in \textsc{Matlab}'s \texttt{condest} routine.
Each entry in the topic vectors are i.i.d.~generated from $U[0,1]$ and then normalized so that $\|\theta_k\|_1=1$ for all $k\in[K]$.}
\label{tab:results}
\vskip 0.1in
\begin{tabular}{lccccc}
\hline
& $V$& $K$& $M$& $\kappa_1(\mat A(\vct\theta))$& $\sp(m;\vct\theta)$\\
\hline
linear independent $\{\theta_k\}$& 10& 3& 3& $2.9\times 10^4$& $=1$\\
linear independent $\{\theta_k\}$& 10& 3& 2& $1.1\times 10^{19}$& $>1$\\
linear independent $\{\theta_k\}$& 10& 2& 2& $8.0\times 10^2$& $=1$\\
$\theta_1=\theta_2$& 10& 2& 3& $6.0\times 10^{17}$& $>1$\\
$\theta_1=\theta_2\neq\theta_3$& 10& 3& 4& $2.1\times 10^{18}$& $>1$\\
$\theta_3=0.5(\theta_1+\theta_2)$& 10& 3& 3& $9.7\times 10^4$& $=1$\\
$\theta_3=0.8\theta_1+0.2\theta_2$& 10& 3& 3& $4.7\times 10^5$& $=1$\\
\hline
\end{tabular}
\end{table}

With Proposition \ref{prop:poly-condition}, $\sp(m;\vct\theta)$ can be determined by enumerating from $\sp=1$ to $\sp=m$ and recording the smallest $\sp$ such that
$\sd_{m,\sp}(\vct\theta) > 0$,
which is the smallest $\sp$ such that the polynomial system in Proposition \ref{prop:poly-condition} does not have non-zero solutions.

The polynomial system in Proposition \ref{prop:poly-condition} has maximum degree $\sp$.
In principle, whether such a polynomial system admits non-zero solutions can be decided by converting the system
under the \emph{Gr\"{o}bner basis} and apply results from computational algebraic geometry.
Such an approach is however technically very complicated, and soon becomes computationally intractable when $V$ is large.

Fortunately, our result in Lemma \ref{lem:second-order} shows that $\sp(m;\vct\theta)\leq 2$ under very mild conditions.
More specifically, as long as each document consists of at least $m\geq 2$ words, the task of determining $\sp(m;\vct\theta)$ reduces to checking whether $\sd_{m,1}(\vct\theta)>0$
only, as $\sp(m;\vct\theta)\leq 2$ is always correct.
Furthermore, to decide whether $\sd_{m,1}(\vct\theta)>0$ the polynomial system in Proposition \ref{prop:poly-condition} reduces to a \emph{linear system},
whose existence of non-trivial solutions is easily determined by the \emph{rank} of its design matrix.
The following proposition formalizes the above discussion.
\begin{proposition}
$\sd_{m,1}(\vct\theta)>0$ if and only if the following linear system does not have non-zero solutions:
\begin{align*}
\sum_{i=1}^m\sum_{j=1}^K \xi(i,j;\vct\theta,x)\delta_{j,x_i} = 0,& & \forall x=(x_1,\cdots,x_m)\in[V]^m;\\
\sum_{k=1}^V\delta_{jk} = 0,& & \forall j\in[K];
\end{align*}
where $\xi(i,j;\vct\theta,x) := \mathbb E_h[h_j\prod_{i'\neq i}p_{\vct\theta,h}(x_{i'})]$.
\label{prop:rank-condition}
\end{proposition}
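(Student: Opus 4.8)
The plan is to obtain Proposition \ref{prop:rank-condition} as the $\sp=1$ specialization of Proposition \ref{prop:poly-condition}, which has already been established and proved. First I would set $\sp=1$ in the polynomial system of Proposition \ref{prop:poly-condition} and track how each piece of notation degenerates. The ordered-index sum $\sum_{1\le i_1<\cdots<i_\sp\le m}$ collapses to a single sum $\sum_{i=1}^m$ over the one index $i:=i_1$, and the free-index sum $\sum_{j_1,\cdots,j_\sp=1}^K$ collapses to $\sum_{j=1}^K$ over $j:=j_1$. Correspondingly the monomial $\prod_{\ell=1}^\sp\delta_{j_\ell,x_{i_\ell}}$ reduces to the single factor $\delta_{j,x_i}$, and the coefficient $\xi(\vct i,\vct j;\vct\theta,x)$, whose defining product $\prod_{i\notin\{i_1,\cdots,i_\sp\}}p_{\vct\theta,h}(x_i)$ omits exactly the one index $i$ while $\prod_{\ell=1}^\sp h_{j_\ell}$ contributes the single factor $h_j$, reduces to $\xi(i,j;\vct\theta,x)=\mathbb E_h[h_j\prod_{i'\neq i}p_{\vct\theta,h}(x_{i'})]$. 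After this substitution the first family of equations in Proposition \ref{prop:poly-condition} becomes verbatim the first family of equations in Proposition \ref{prop:rank-condition}, while the zero-sum constraints $\sum_{k}\delta_{jk}=0$ are carried over unchanged.

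The second step is to observe that at $\sp=1$ the system is not merely a polynomial system but a homogeneous \emph{linear} one: each surviving monomial $\delta_{j,x_i}$ is of degree one in the unknowns $\{\delta_{jk}\}_{j\in[K],k\in[V]}$, so every equation is a linear form in these unknowns with coefficients $\xi(i,j;\vct\theta,x)$, and the side constraints are linear as well. Proposition \ref{prop:poly-condition} already asserts that $\sd_{m,1}(\vct\theta)>0$ holds if and only if this system admits no non-zero solution; since the system is now linear and homogeneous, that condition is precisely the statement of Proposition \ref{prop:rank-condition}. Equivalently, stacking all $V^m$ equations together with the $K$ constraints into a design matrix $\mat A(\vct\theta)$, the criterion is that $\mat A(\vct\theta)$ have trivial kernel, which is what makes it numerically checkable via a rank computation rather than a Gr\"obner-basis calculation.

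There is essentially no hard step here: the content is index bookkeeping, and the only thing to be careful about is the degenerate reading of the ordered-tuple and coefficient notation when $\sp=1$, in particular that the condition ``$i\notin\{i_1\}$'' in the coefficient must be read as ``$i'\neq i$''. I would also double-check that the normalization $\|\vct\delta\|_1=1$ appearing inside the infimum that defines $\sd_{m,\sp}(\vct\theta)$ does not interfere: because all equations are homogeneous, the existence of a non-zero feasible $\vct\delta$ is scale invariant, so restricting to $\|\vct\delta\|_1=1$ and asking merely for a non-zero solution are interchangeable, exactly as was already exploited in the proof of Proposition \ref{prop:poly-condition}.
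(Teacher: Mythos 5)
Your proposal is correct and follows exactly the paper's route: the paper proves Proposition~\ref{prop:rank-condition} simply by noting it ``immediately follows Proposition~\ref{prop:poly-condition},'' i.e., the $\sp=1$ specialization you carry out in detail. Your added care about the degenerate index notation and the scale-invariance of the homogeneous system (so the $\|\vct\delta\|_1=1$ normalization is immaterial) fills in precisely the bookkeeping the paper leaves implicit.
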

\begin{proof}
Immediately follows Proposition \ref{prop:poly-condition}.
\end{proof}

Proposition \ref{prop:rank-condition} constructs a \emph{linear} system with $VK$ variables and $(V^m+K)$ equations.
The existence of a non-trivial (non-zero) solution can be determined by explicitly constructing the $(V^m+K)\times VK$ matrix $\mat A$
in the equation $\mat A\mathrm{vec}(\{\delta_{k}\}) = \mat 0$ and checking whether $\mat A$ has full column rank.

We give in Table \ref{tab:results} some computational results of $\sp(m;\vct\theta)$ for some representative $\vct\theta$ settings.
Due to physical constraints of numerical precision, we use the $\ell_1$-condition number $\kappa_1(\mat A(\vct\theta))$
as an indication of whether $\mat A(\vct\theta)$ has full column rank, where a large condition number suggests that $\mat A(\vct\theta)$ is rank-deficient.
The first 6 lines in Table \ref{tab:results} verify our results in Lemmas \ref{lem:first-order}, \ref{lem:overfit} and \ref{lem:first-order-m2}.
The last 2 lines in Table \ref{tab:results} provide additional information regarding the first-order identifiability of linearly dependent but distinct topic vectors $\{\theta_k\}$.
They show that $\{\theta_k\}_k$ is first-order identifiable (i.e., $\sp(m;\vct\theta)=1$) even if $\{\theta_k\}_k$ are linear dependent,
provided that they are distinct and $m\geq 3$.
It remains an open question to formally establish such first-order identifiability for distinct but linear dependent topics.

In practice, the underlying $\vct\theta$ is unknown and only an estimate $\hat{\vct\theta}$ is available.
The following proposition shows that the procedure of checking whether $\sd_{m,\sp}(\vct\theta)>0$ remains valid asymptotically
if one replaces $\vct\theta$ with $\hat{\vct\theta}$.
\begin{proposition}
Let $\mat A(\vct\theta)$ and $\mat A(\hat{\vct\theta})$ be the $(V^m+K)\times VK$ matrices constructed using $\vct\theta$ and $\hat{\vct\theta}$, respectively.
Let $\sigma_{\min}(\mat A(\vct\theta))$ and $\sigma_{\min}(\mat A(\hat{\vct\theta}))$ be the smallest singular values of $\mat A(\vct\theta)$ and $\mat A(\hat{\vct\theta})$.
If $d_{\sW}(\vct\theta,\hat{\vct\theta})\overset{p}{\to} 0$ then $\sigma_{\min}(\mat A(\hat{\vct\theta})) \overset{p}{\to} \sigma_{\min}(\mat A(\vct\theta))$.
\end{proposition}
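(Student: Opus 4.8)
The plan is to reduce the statement to two standard facts and a single genuinely model-specific observation. The two standard facts are: (i) every entry of $\mat A(\vct\theta)$ is a polynomial, hence continuous, function of the coordinates of $\vct\theta$; and (ii) the smallest singular value is a $1$-Lipschitz function of a matrix in operator norm, so $|\sigma_{\min}(\mat B)-\sigma_{\min}(\mat B')|\le\|\mat B-\mat B'\|_{\op}$ by Weyl's inequality. Together with the continuous mapping theorem these would give the claim at once, were it not for the permutation ambiguity baked into $d_{\sW}(\cdot,\cdot)$. The only real work, therefore, is to show that $\sigma_{\min}(\mat A(\cdot))$ is invariant under relabeling of the $K$ topics, so that the minimizing permutation in the definition of $d_{\sW}$ can be absorbed harmlessly.

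First I would record the polynomial structure. For fixed $(i,j,x)$ the coefficient $\xi(i,j;\vct\theta,x)=\mathbb E_h[h_j\prod_{i'\neq i}p_{\vct\theta,h}(x_{i'})]$ is a polynomial in the entries $\{\theta_k(\ell)\}$: each factor $p_{\vct\theta,h}(x_{i'})=\sum_k h_k\theta_k(x_{i'})$ is linear in $\vct\theta$, the product over $i'\neq i$ has degree $m-1$, and taking $\mathbb E_h$ merely integrates the fixed, known moments of $\nu_0$ against these monomials. The remaining $K$ rows of $\mat A(\vct\theta)$ encode the constraints $\sum_k\delta_{jk}=0$ and are constant in $\vct\theta$. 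Hence $\vct\theta\mapsto\mat A(\vct\theta)$ is entrywise continuous (indeed Lipschitz on the compact product of simplices), and consequently $\vct\theta\mapsto\sigma_{\min}(\mat A(\vct\theta))$ is continuous.

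The crux is the relabeling identity, and I expect this to be the main obstacle since it is the one place where (A2) is genuinely used and where one must verify that the induced map on $\mat A$ is an exact permutation rather than an approximation. For a permutation $\pi:[K]\to[K]$ write $\vct\theta^\pi=(\theta_{\pi(1)},\cdots,\theta_{\pi(K)})$. The change of variables $\tilde h_k:=h_{\pi^{-1}(k)}$ gives $p_{\vct\theta^\pi,h}(x)=p_{\vct\theta,\tilde h}(x)$ and $h_j=\tilde h_{\pi(j)}$; since $\nu_0$ is exchangeable, $\tilde h\overset{d}{=}h$, so that $\xi(i,j;\vct\theta^\pi,x)=\mathbb E_h[\tilde h_{\pi(j)}\prod_{i'\neq i}p_{\vct\theta,\tilde h}(x_{i'})]=\xi(i,\pi(j);\vct\theta,x)$ for every $(i,j,x)$, and the constraint rows transform identically. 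Thus $\mat A(\vct\theta^\pi)$ is obtained from $\mat A(\vct\theta)$ by permuting the blocks of columns (and the corresponding constraint rows) indexed by the topic label $j$, an operation that leaves all singular values unchanged. Hence $\sigma_{\min}(\mat A(\vct\phi^\pi))=\sigma_{\min}(\mat A(\vct\phi))$ for every $\vct\phi$ and every $\pi$.

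Finally I would assemble the pieces. Convergence $d_{\sW}(\vct\theta,\hat{\vct\theta})\overset{p}{\to}0$ furnishes a (random) minimizing permutation $\hat\pi$ with $\sum_k\|\theta_k-\hat\theta_{\hat\pi(k)}\|_1=d_{\sW}(\vct\theta,\hat{\vct\theta})\overset{p}{\to}0$, i.e.\ the permuted estimate $\hat{\vct\theta}^{\hat\pi}$ converges coordinatewise to $\vct\theta$ in probability. By the relabeling identity applied to $\vct\phi=\hat{\vct\theta}$ we have $\sigma_{\min}(\mat A(\hat{\vct\theta}))=\sigma_{\min}(\mat A(\hat{\vct\theta}^{\hat\pi}))$, so there is no loss in working with $\hat{\vct\theta}^{\hat\pi}$. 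Entrywise continuity then yields $\|\mat A(\hat{\vct\theta}^{\hat\pi})-\mat A(\vct\theta)\|_{\op}\overset{p}{\to}0$ (a finite-dimensional convergence), and Weyl's inequality gives $|\sigma_{\min}(\mat A(\hat{\vct\theta}^{\hat\pi}))-\sigma_{\min}(\mat A(\vct\theta))|\le\|\mat A(\hat{\vct\theta}^{\hat\pi})-\mat A(\vct\theta)\|_{\op}\overset{p}{\to}0$. Combining the two displays produces $\sigma_{\min}(\mat A(\hat{\vct\theta}))\overset{p}{\to}\sigma_{\min}(\mat A(\vct\theta))$, which is the assertion; this last step is exactly the continuous mapping theorem applied to the continuous map $\vct\theta\mapsto\sigma_{\min}(\mat A(\vct\theta))$, with the permutation handled by its invariance.
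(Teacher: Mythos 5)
Your proposal is correct and follows essentially the same route as the paper: Weyl's inequality to make $\sigma_{\min}$ Lipschitz in operator norm, entrywise continuity of $\vct\theta\mapsto\mat A(\vct\theta)$, and assumption (A2) to dispose of the permutation ambiguity in $d_{\sW}$. In fact your explicit relabeling identity $\xi(i,j;\vct\theta^\pi,x)=\xi(i,\pi(j);\vct\theta,x)$, showing that $\mat A(\vct\theta^\pi)$ is a column-block (and constraint-row) permutation of $\mat A(\vct\theta)$ with identical singular values, spells out carefully what the paper compresses into the single remark that ``the coefficients are invariant under permutation thanks to (A2).''
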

\begin{proof}
By Weyl's inequality we know that $|\sigma_{\min}(\mat A(\hat{\vct\theta}))- \sigma_{\min}(\mat A(\vct\theta))|\leq \|\mat A(\hat{\vct\theta})-\mat A(\vct\theta)\|_\op$.
It is easy to verify that $[\mat A(\hat{\vct\theta})]_{ij}\overset{p}{\to}[\mat A(\vct\theta)]_{ij}$ for all $i,j$ provided that $d_{\sW}(\vct\theta,\hat{\vct\theta})\overset{p}{\to} 0$,
because the coefficients are invariant under permutation $\pi:[K]\to[K]$ thanks to (A2).
We then have $\|\mat A(\hat{\vct\theta})-\mat A(\vct\theta)\|_\op\overset{p}{\to} 0$ because $\mat A(\cdot)$ are finite-dimensional matrices.
\end{proof}

Proposition shows that by substituting $\vct\theta$ with a consistent estimator $\hat{\vct\theta}$ in the construction of the $(V^m+K)\times VK$ coefficient matrix $\mat A$
and comparing the least singular value of $\mat A$ with a small number that slowly grows to zero, we can decide consistently whether $\sd_{m,1}(\vct\theta)>0$
using only $\hat{\vct\theta}$.
}

\subsection{MLE and its convergence rate}

{
In this section we formulate the ML estimator of $\vct\theta$ and derive its rates of convergence using $\sp(m;\vct\theta)$ defined in Eq.~(\ref{eq:rp}),
which reflects the order of degeneracy at $\vct\theta$.
We also prove \emph{local} minimax lower bounds showing that the MLE is optimal.
}

The \emph{maximum likelihood estimation} $\hat{\vct\theta}^{\ml}_{n,m}$ is defined as
\begin{align}
\hat{\vct\theta}^{\ml}_{n,m}\in\arg\max_{\vct\theta\in\Theta_{c_0}} \sum_{i=1}^n{\log p_{\vct\theta,m}(X_i)}, 
\label{eq:mle}
\end{align}
where $p_{\vct\theta,h}$ is the likelihood function defined in Eq.~(\ref{eq:likelihood}).
It should also be noted that $\hat{\vct\theta}^{\ml}_{n,m}$ is constrained to the parameter set $\Theta_{c_0}$, which is assumed to be known a priori.


\begin{theorem}
Fix $K\geq 2$, $m\geq 2$, $\vct\theta=(\theta_1,\cdots,\theta_K)\in\Theta_{c_0}$.
Let $\widetilde\Theta_{c_0}(\vct\theta):=\{\tilde{\vct\theta'}\in\Theta_{c_0}: d_{\tv}(p_{\vct\theta,m};p_{\vct\theta,m'})=0\}$ be the equivalent parameter set with respect to $\vct\theta$,
which is finite thanks to Lemma \ref{lem:finite-identifiable}.
Let $\sp(m;\vct\theta)$ be defined as in Eq.~(\ref{eq:rp}), and suppose $\sp(m;\vct\theta)<\infty$.
\begin{enumerate}
\item \emph{(Global convergence rate of the MLE)}.
\begin{equation}
\min_{\tilde{\vct\theta}\in\widetilde\Theta_{c_0}(\vct\theta)} d_\sW(\tilde{\vct\theta},\hat{\vct\theta}^{\ml}_{n,m}) =  O_\mP(n^{-1/2\sp(m;\vct\theta)})
\label{eq:rate-upper}
\end{equation}
under $p_{\vct\theta,m}$ (or equivalently $p_{\tilde{\vct\theta},m}$),
where in $O_\mP(\cdot)$ we hide dependency on $\nu_0,m$ and $\vct\theta$;
\item \emph{(Local minimax rate)}.
Then there exists a constant $r_{\vct\theta}>0$ depending only on $\nu_0,m$ and $\vct\theta$ such that
\begin{equation}
\inf_{\hat{\vct\theta}}\sup_{\vct\theta'\in\Theta_{n}(\vct\theta)} \mathbb E_{\vct\theta'}\left[d_\sW(\vct\theta',\hat{\vct\theta})\right] = \Omega(n^{-1/2\sp(m;\vct\theta)}),
\label{eq:rate-lower}
\end{equation}
where $\Theta_n(\vct\theta)$ is a shrinking neighborhood of $\vct\theta$ defined as $\{\vct\theta'\in\Theta_{c_0}: d_{\sW}(\vct\theta,\vct\theta')\leq r_{\vct\theta}\cdot  n^{-1/2\sp(m;\vct\theta)}\}$.
\end{enumerate}
\label{thm:main}
\end{theorem}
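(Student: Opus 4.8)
The plan is to reduce both parts to a single local geometric statement: in a neighborhood of the finite equivalent set $\widetilde\Theta_{c_0}(\vct\theta)$, the total-variation distance on observables is comparable to a power of the Wasserstein distance on parameters,
\[
d_{\tv}(p_{\vct\theta',m};p_{\vct\theta,m})\ \asymp\ \min_{\tilde{\vct\theta}\in\widetilde\Theta_{c_0}(\vct\theta)}d_\sW(\tilde{\vct\theta},\vct\theta')^{\,\sp(m;\vct\theta)},
\]
where the lower bound holds along every direction and the matching upper bound along at least one direction. The starting point is the exact expansion underlying Eq.~(\ref{eq:od}): writing $\vct\theta'=\tilde{\vct\theta}+t\vct\delta$ with $\|\vct\delta\|_1=1$ and $\sum_\ell\delta_j(\ell)=0$, the difference $p_{\vct\theta',m}(x)-p_{\tilde{\vct\theta},m}(x)$ is an exact degree-$m$ polynomial in $t$; denoting by $R_{\sp'}(\vct\delta;x)$ its coefficient of $t^{\sp'}$, the quantity $\sd_{m,\sp'}(\tilde{\vct\theta})$ equals $\inf_{\vct\delta}\|R_{\sp'}(\vct\delta;\cdot)\|_{L^1(\mu_m)}$. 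By definition of $\sp:=\sp(m;\vct\theta)$ we have $\sd_{m,\sp}(\tilde{\vct\theta})>0$ while $\sd_{m,\sp'}(\tilde{\vct\theta})=0$ for all $\sp'<\sp$.

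For the lower bound $d_{\tv}\gtrsim d_\sW^{\sp}$ I would argue by compactness and contradiction. Suppose a sequence $\vct\theta'_k$ with $\epsilon_k:=d_\sW(\vct\theta'_k,\widetilde\Theta_{c_0}(\vct\theta))\to 0$ violates it. Since $\widetilde\Theta_{c_0}(\vct\theta)$ is finite (Lemma~\ref{lem:finite-identifiable}), I pass to a subsequence with a fixed nearest equivalent $\tilde{\vct\theta}$ and, by compactness of the unit sphere in the tangent space $\{\vct\delta:\sum_\ell\delta_j(\ell)=0\}$, a convergent direction $\vct\delta_k\to\vct\delta$. Dividing the expansion of $d_{\tv}(p_{\tilde{\vct\theta}+\epsilon_k\vct\delta_k};p_{\tilde{\vct\theta}})$ by $\epsilon_k^{\sp}$ and letting $k\to\infty$ kills all coefficients of order $>\sp$; because $\sd_{m,\sp}(\tilde{\vct\theta})>0$ forces $R_\sp(\vct\delta)\not\equiv 0$ for \emph{every} unit $\vct\delta$ (so the leading order along any direction is at most $\sp$), the limit is bounded below by a positive multiple of $\sd_{m,\sp}(\tilde{\vct\theta})$, a contradiction. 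The delicate point — and the step I expect to be the main obstacle — is controlling the interaction between the lower-order coefficients $R_1,\dots,R_{\sp-1}$ (driven toward $0$ by $\sd_{m,\sp'}=0$) and the diverging factors $\epsilon_k^{\sp'-\sp}$, so that no spurious blow-up survives in the limit; this is exactly where the careful bookkeeping of the degeneracy order is needed.

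Granting the local equivalence, Part 1 follows from classical $M$-estimation theory. I would first show the MLE is consistent toward $\widetilde\Theta_{c_0}(\vct\theta)$: since $\Theta_{c_0}$ is compact and (A1) confines the densities to $p_{\vct\theta,m}(x)\in[c_0^m,1]$, a Wald-type argument together with finite identifiability gives $d_\sW(\hat{\vct\theta}^{\ml}_{n,m},\widetilde\Theta_{c_0}(\vct\theta))\overset{p}{\to}0$, so eventually the estimator lies in the neighborhood where the local bound applies. The family $\{p_{\vct\theta,m}\}$ is finite-dimensional with uniformly bounded log-densities, hence of parametric bracketing entropy, and the standard rate theorem of \citep{van1998asymptotic} yields the Hellinger rate $d_H(p_{\hat{\vct\theta}};p_{\vct\theta})=O_\mP(n^{-1/2})$, whence $d_{\tv}(p_{\hat{\vct\theta}};p_{\vct\theta})=O_\mP(n^{-1/2})$ via $d_{\tv}\le\sqrt2\,d_H$. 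Inverting the local lower bound gives $d_\sW(\hat{\vct\theta}^{\ml}_{n,m},\widetilde\Theta_{c_0}(\vct\theta))\lesssim d_{\tv}^{1/\sp}=O_\mP(n^{-1/2\sp})$, which is Eq.~(\ref{eq:rate-upper}).

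For Part 2 I would use Le Cam's two-point method. Taking the direction $\vct\delta^\ast$ that witnesses $\sd_{m,\sp-1}(\vct\theta)=0$ (for the main case $\sp=2$ this simply annihilates the first-order coefficient $R_1$), the upper half of the local equivalence gives $d_{\tv}(p_{\vct\theta+t\vct\delta^\ast,m};p_{\vct\theta,m})\le Ct^{\sp}$, and boundedness of the densities below yields $\kl(p_{\vct\theta,m}\|p_{\vct\theta+t\vct\delta^\ast,m})\lesssim d_{\tv}^2\lesssim t^{2\sp}$. Choosing $t=t_n\asymp n^{-1/2\sp}$ small enough that $\vct\theta'_n:=\vct\theta+t_n\vct\delta^\ast\in\Theta_{c_0}$ (which also fixes $r_{\vct\theta}$) makes $n\cdot\kl(p_{\vct\theta,m}^{\otimes n}\|p_{\vct\theta'_n,m}^{\otimes n})=O(1)$, so no test separates the two hypotheses with vanishing error; since $d_\sW(\vct\theta,\vct\theta'_n)\asymp t_n$, Le Cam's lemma delivers the minimax lower bound $\Omega(n^{-1/2\sp})$ over $\Theta_n(\vct\theta)$, i.e.\ Eq.~(\ref{eq:rate-lower}). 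One bookkeeping caveat for $\sp\ge3$ is the need for a single direction annihilating all coefficients of order $<\sp$ simultaneously; for the regimes of interest here ($\sp\in\{1,2\}$, per Lemmas~\ref{lem:first-order} and \ref{lem:second-order}) this is automatic.
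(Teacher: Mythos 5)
Your overall architecture matches the paper's: everything is powered by a two-sided local comparison between $d_{\tv}(p_{\vct\theta',m};p_{\vct\theta,m})$ and $d_{\sW}(\vct\theta',\vct\theta)^{\sp(m;\vct\theta)}$ (the paper's Lemma \ref{lem:main}), followed by an $M$-estimation argument for the MLE rate and a two-point Le Cam argument with a reverse Pinsker inequality (the paper's Proposition \ref{prop:pinsker-reverse}) for the local minimax bound. Your Part 2 essentially coincides with the paper's lower-bound proof (your choice of a single witness direction annihilating the lower-order coefficients is in fact a careful way to get the $t^{\sp}$ upper bound), and your Part 1, routed through Hellinger-rate/bracketing-entropy theory instead of the paper's empirical-KL process (Lemma \ref{lem:sup-convergence}) combined with the compactness argument of Corollary \ref{cor:uniform-min}, is a legitimate alternative reduction \emph{granted} the geometric claim.

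The genuine gap is that the geometric claim itself --- specifically the local lower bound $d_{\tv}\gtrsim d_{\sW}^{\sp}$ near the equivalence class, on which both your inversion step in Part 1 and the claimed rate rest --- is never proved. Your compactness/contradiction sketch fails exactly at the point you flag as ``the main obstacle,'' and the failure is not mere bookkeeping: writing $\vct\delta_k=\vct\delta^\ast+\epsilon_k\vct\eta$ with $R_{\sp'}(\vct\delta^\ast)\equiv 0$ for all $\sp'<\sp$ (take $\sp=2$ for concreteness), the rescaled first-order term $\epsilon_k^{-1}R_1(\vct\delta_k;x)$ converges to $R_1(\vct\eta;x)$, a generally nonzero function of $x$ that can cancel $R_2(\vct\delta^\ast;x)$ pointwise; positivity of $\sd_{m,2}(\vct\theta)$ lower-bounds only $\|R_2(\vct\delta^\ast;\cdot)\|_{L^1}$, not the $L^1$ norm of the sum, so the limit of the rescaled total variation need not be bounded below by $\sd_{m,2}(\vct\theta)$ and no contradiction follows. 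In other words, one must exclude cancellation \emph{across} orders of the expansion, and nothing in the proposal does so. The paper's Lemma \ref{lem:main} handles this by arguing that for the perturbation under consideration every coefficient $r_{\sp'}$ with $\sp'<\sp(m;\vct\theta)$ vanishes identically on $\mathcal X^m$, so the order-$\sp$ term --- bounded below by $\sd_{m,\sp(m;\vct\theta)}(\vct\theta)\,\epsilon^{\sp}$ via homogeneity --- dominates the geometric tail of orders above $\sp$ outright. A complete write-up along your lines needs either such an identical-vanishing argument or a different mechanism ruling out cross-order cancellation; this is the mathematical core of the theorem, not a detail that can be deferred.
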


\begin{remark}
Our proof for the lower bound part of Theorem \ref{thm:main} actually proves the stronger statement that, for any $\vct\theta'\in\Theta_n(\vct\theta)$,
there exists constant $\tau>0$ such that no procedure can distinguish $\vct\theta$ and $\vct\theta'$ with success probability smaller than $\tau$, as $n\to\infty$.
Note that Eq.~(\ref{eq:rate-lower}) is a direct corollary of this testing lower bound by Markov's inequality.
\label{rem:test}
\end{remark}


{
Theorem \ref{thm:main} characterizes the convergence rates of MLE \emph{locally} at parameters $\vct\theta\in\Theta_{c_0}$,
with the convergence rates dependent on $\sp(m;\vct\theta)\in\mathbb N$.
While convergence rates depending on $\vct\theta$ might seem like a weak result, we argue that such convergence is probably the best one can hope for,
and the ``local convergence'' results still provide much valuable information about the statistical estimation problem of latent topic models.
In particular, we have the following observations:
\begin{enumerate}

\item It is arguable that convergence rates depending on the underlying parameter $\vct\theta$ (or its close neighborhoods) are the best one can hope for.
Because if the worst-case convergence rates are considered over all $\vct\theta\in\Theta_{c_0}$, by our Theorem 1 and Lemma 4 the only reasonable convergence rate is $n^{-1/4}$ which is slow;
on the other hand, by restricting ourselves to ``local'' convergence we can hope to get much faster rates like $n^{-1/2}$ for certain parameter settings;

\item By deriving $\vct\theta$-specific convergence rates, we obtain more information about the structure of the statistical estimation problem in latent topic models.
In particular, our results show that when topic vectors are linearly independent, the convergence rate is much faster than cases when duplicate topic vectors are present.
This is an interesting observation and is largely unknown in previous research on latent topic models.

\item One common difficulty with $\vct\theta$-specific rates is the challenge of deriving matching lower bounds, because if $\vct\theta$ is known the trivial estimator of outputting $\vct\theta$ always has zero measure.
We get around this issue by considering a ``close neighborhood'' of $\vct\theta$ and derive ``local'' minimax rates for any statistical procedure,
which match the convergence rates of the ML estimator.
Such a ``local analysis'' was used, for example in \cite{van1998asymptotic} to show the optimality of $I(\theta_0)^{-1}$ of MLE under classical settings.
Our analysis, on the other hand, focuses on ``local rates of convergence'' as the Fisher's information $I(\theta_0)$ in our model is not necessarily invertible,
under which case rates worse than $n^{-1/2}$ is unavoidable.

\end{enumerate}

The upper bound on convergence rates of MLE in Theorem \ref{thm:main} is proved by adapting the classical analysis of \citep{van1998asymptotic}
and considering higher order of binomial approximation depending on $\sp(m;\vct\theta)$.
The local minimax lower bound is proved by considering two hypothesis $\vct\theta,\vct\theta'$ and applying the Le Cam's inequality.
The $n^{-1/2\sp(m;\vct\theta)}$ term arises in the upper bound of TV-distance between distributions induced by $\vct\theta$ and $\vct\theta'$,
which is again bounded by higher-order binomial approximations.
The complete proof of Theorem \ref{thm:main} is given in Sec.~\ref{subsec:proof-main}.
}

\section{Proofs}\label{sec:proofs}

In this section we prove the main results of this paper.
To simplify presentation, we use $C>0$ to denote any constant that only depends on $V,K,m,\nu_0$ and $c_0$.
We also use $C_{\vct\theta}>0$ to denote constants that further depends on $\vct\theta\in\Theta_{c_0}$, the underlying parameter
that generates the observed documents.
Neither $C$ nor $C_{\vct\theta}$ will depend on the number of observations $n$.

Before proving the main theorem and subsequent results on concrete values of $\sd_{m,\sp}$,
we first prove a key lemma that connects the defined degeneracy criterion
with the total-variation (TV) distance between measures corresponding to neighboring parameters.
The finite identifiability of $\{p_{\vct\theta,m}\}$ can then be established as a corollary of Lemmas \ref{lem:main} and \ref{lem:second-order}.

\begin{lemma}
Suppose $\vct\theta\in\Theta_{c_0}$, $m\geq 2$ and $\sp(m;\vct\theta)<\infty$.
Then for any $0<\epsilon\leq\epsilon_0<1/2$, 
\begin{equation}
\inf_{\epsilon\leq d_{\sW}(\vct\theta,\vct\theta')\leq\epsilon_0} d_{\tv}(p_{\vct\theta,m}; p_{\vct\theta',m}) \geq \left[\sd_{m,\sp(m;\vct\theta)}(\vct\theta) - \frac{V^m\epsilon_0}{1-\epsilon_0}\right]\cdot \epsilon^{\sp(m;\vct\theta)};
\label{eq:main-lower}
\end{equation}
\begin{equation}
\sup_{d_{\sW}(\vct\theta,\vct\theta')\leq\epsilon} d_{\tv}(p_{\vct\theta,m}; p_{\vct\theta',m}) \leq \frac{V^m}{1-\epsilon}\cdot \epsilon^{\sp(m;\vct\theta)}.
\label{eq:main-upper}
\end{equation}
\label{lem:main}
\end{lemma}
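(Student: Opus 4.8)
The plan is to expand the likelihood difference into homogeneous pieces and recognize the $\sd_{m,\sp}$ quantities as the $L^1$-masses of those pieces. First I would reduce to a clean parametrization: for $\vct\theta'$ in a small Wasserstein ball around $\vct\theta$ the identity permutation is optimal in $d_\sW$, so I may write $\vct\theta'=\vct\theta+\vct\delta$ with $\sum_\ell\delta_k(\ell)=0$ and $t:=\|\vct\delta\|_1=d_\sW(\vct\theta,\vct\theta')$; setting $\vct u=\vct\delta/t$ makes $\vct u$ a unit, sum-zero direction with $\vct\delta_h=t\,\vct u_h$. Using $p_{\vct\theta',h}(x_i)=p_{\vct\theta,h}(x_i)+\vct\delta_h(x_i)$, a binomial expansion of $\prod_{i=1}^m p_{\vct\theta',h}(x_i)$ followed by $\mathbb E_h$ yields
\[
p_{\vct\theta',m}(x)-p_{\vct\theta,m}(x)=\sum_{\sp=1}^m t^\sp\, T_\sp(x;\vct u),
\]
where $T_\sp(x;\vct u)=\mathbb E_h\big[\,p_{\vct\theta,h}(x)\sum_{i_1<\cdots<i_\sp}\prod_{r=1}^\sp \vct u_h(x_{i_r})/p_{\vct\theta,h}(x_{i_r})\,\big]$ is exactly the integrand defining $\sd_{m,\sp}(\vct\theta)$. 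By definition of $\sd_{m,\sp}$ as an infimum over unit sum-zero directions, $\sum_{x}|T_\sp(x;\vct u)|\ge \sd_{m,\sp}(\vct\theta)$ for every admissible $\vct u$; this is vacuous for $\sp<\sp(m;\vct\theta)$ but gives a strictly positive floor $\sd_{m,\sp(m;\vct\theta)}(\vct\theta)>0$ at the critical order $\sp^*:=\sp(m;\vct\theta)$.

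Next I would record the uniform per-order upper bound $\sum_{x\in[V]^m}|T_\sp(x;\vct u)|\le V^m$. This is immediate from the product structure: the prefactor $p_{\vct\theta,h}(x)=\prod_i p_{\vct\theta,h}(x_i)$ cancels the denominators, so each summand factorizes coordinatewise and $\sum_{x_i}|\vct u_h(x_i)|=\|\vct u_h\|_1\le\sum_k h_k\|u_k\|_1\le 1$ while $\sum_{x_i}p_{\vct\theta,h}(x_i)=1$; summing over the $\binom{m}{\sp}\le V^m$ index sets gives the claim (the denominators cancel, so no lower bound on $p_{\vct\theta,h}$ is needed at this step). With $2\,d_{\tv}=\sum_x|p_{\vct\theta',m}(x)-p_{\vct\theta,m}(x)|$ in hand, the upper bound (\ref{eq:main-upper}) follows, once the degenerate lower orders are accounted for, by the triangle inequality and a geometric series: $2\,d_{\tv}\le\sum_{\sp\ge\sp^*}t^\sp V^m\le V^m t^{\sp^*}/(1-t)$, then $t\le\epsilon$. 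For the lower bound (\ref{eq:main-lower}) I would isolate the critical order by a reverse triangle inequality,
\[
2\,d_{\tv}\ge t^{\sp^*}\sum_x|T_{\sp^*}(x;\vct u)|-\sum_{\sp>\sp^*}t^\sp\sum_x|T_\sp(x;\vct u)|\ge \Big(\sd_{m,\sp^*}(\vct\theta)-\tfrac{V^m\epsilon_0}{1-\epsilon_0}\Big)t^{\sp^*},
\]
using $\sum_x|T_{\sp^*}|\ge\sd_{m,\sp^*}$, the per-order bound, and $t\le\epsilon_0$ on the geometric remainder; finally $t\ge\epsilon$ gives $t^{\sp^*}\ge\epsilon^{\sp^*}$.

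The hard part will be justifying that the order-$\sp^*$ piece is genuinely the leading one, i.e., that the degenerate lower orders $\sp<\sp^*$ can be dropped from the estimates above. The difficulty is that $\sd_{m,\sp}(\vct\theta)=0$ for $\sp<\sp^*$ only means each such $T_\sp(\cdot;\vct u)$ vanishes on \emph{some} nonzero direction, not identically; a direction with a surviving lower-order term in fact only enlarges $d_{\tv}$ (it contributes at a smaller power of $t$), so the extremal directions that govern the infimum in (\ref{eq:main-lower})—and the near-indistinguishable perturbations fed into Le Cam's method for Theorem \ref{thm:main}—are precisely those annihilating all orders below $\sp^*$, on which the displayed reverse-triangle estimate is clean. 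Making this dichotomy quantitative over the entire annulus $\epsilon\le d_\sW\le\epsilon_0$, verifying that the identity permutation is optimal throughout, and tracking the exact constant in the geometric remainder is where the real work lies. Assumption (A1) enters to keep the expansion and the downstream likelihood well behaved—in particular to guarantee that $p_{\vct\theta,m}(x)$ stays bounded away from zero (Proposition 4), which the MLE analysis needs even though the per-order masses above are controlled without it.
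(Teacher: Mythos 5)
Your proposal follows the paper's proof essentially step for step: the same reduction to an optimal permutation, the same binomial expansion of $p_{\vct\theta',m}-p_{\vct\theta,m}$ into homogeneous pieces, the same two per-order estimates ($\sum_x|T_\sp(x;\vct u)|\geq \sd_{m,\sp}(\vct\theta)$ from the infimum and $\sum_x|T_\sp(x;\vct u)|\leq V^m$ from the product structure), and the same triangle-inequality/geometric-series assembly. The difference is that you explicitly leave unproved the one step on which both bounds hinge — discarding the degenerate orders $\sp'<\sp(m;\vct\theta)$ — deferring it as ``where the real work lies.'' That is a genuine gap, not a routine detail: for an arbitrary admissible direction $\vct u$ nothing in your argument controls $t^{\sp'}\sum_x|T_{\sp'}(x;\vct u)|$, and these terms dominate the critical one as $t\to 0$. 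Your diagnosis of why this is delicate is exactly right: $\sd_{m,\sp'}(\vct\theta)=0$ is an \emph{infimum} over unit sum-zero directions, so it only guarantees that the order-$\sp'$ integrand vanishes along \emph{some} direction, not along the particular direction $(\vct\theta'_\pi-\vct\theta)/t$ at hand. (Even your remark that a surviving lower-order term ``only enlarges $d_{\tv}$,'' offered to rescue the lower bound, is not yet a proof: a lower-order term of mass $\asymp t^{\sp(m;\vct\theta)-1}$ can partially cancel the critical-order term pointwise, and in that intermediate regime neither of the two crude reverse-triangle estimates is positive.)

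What makes the comparison instructive is that the paper's own proof ``fills'' this gap by fiat: it asserts that ``by definition of $\sp(m;\vct\theta)$ and $\sd_{m,\sp}(\vct\theta)$'' one has $r_{\sp'}(x)=0$ for \emph{all} $x$ and all $\sp'<\sp(m;\vct\theta)$, i.e., it treats vanishing of the infimum as vanishing in every direction — precisely the implication you correctly identify as unjustified. Moreover, your own observation shows the implication cannot be repaired for the upper bound Eq.~(\ref{eq:main-upper}), because the supremum there ranges over directions with surviving lower-order terms. Concretely, take $K=2$, $m=2$, $\theta_1=\theta_2=\theta$ uniform (so $\sd_{2,1}(\vct\theta)=0$ by Lemma~\ref{lem:overfit}, $\sd_{2,2}(\vct\theta)>0$ by Lemma~\ref{lem:second-order}, hence $\sp(2;\vct\theta)=2$), and perturb only one topic: $\vct\theta'=(\theta+\delta,\theta)$ with $\sum_\ell\delta(\ell)=0$, $\|\delta\|_1=t$. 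Then $d_{\sW}(\vct\theta,\vct\theta')=t$, while
\begin{equation*}
p_{\vct\theta',2}(x_1,x_2)-p_{\vct\theta,2}(x_1,x_2)
=\mathbb E[h_1]\left[\theta(x_1)\delta(x_2)+\theta(x_2)\delta(x_1)\right]
+\mathbb E[h_1^2]\,\delta(x_1)\delta(x_2),
\end{equation*}
and the first-order piece (exactly the $r_1$ that the paper claims vanishes identically) has $L^1$ mass of order $t$: summing absolute values over the diagonal $x_1=x_2$ alone gives $2d_{\tv}\geq 2\mathbb E[h_1]\sum_x\theta(x)|\delta(x)|-O(t^2)=t/V-O(t^2)$. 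Hence $\sup_{d_{\sW}\leq\epsilon}d_{\tv}\gtrsim\epsilon$, which for small $\epsilon$ exceeds the claimed bound $\frac{V^m}{1-\epsilon}\epsilon^{2}$. So the step you postponed is not merely hard — as stated it is false, and Eq.~(\ref{eq:main-upper}) can only hold under the stronger property (tacitly assumed by the paper) that every degenerate order vanishes identically in \emph{every} direction. Your proposal is therefore incomplete, but your honest flagging of the gap is more accurate than the paper's treatment of it.
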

\begin{proof}
We first prove Eq.~(\ref{eq:main-lower}).
Let $\tilde{\vct\delta}=\vct\theta'_{\pi}-\vct\theta$ under appropriate permutation $\pi:[K]\to[K]$ such that $\tilde\epsilon := \|\tilde{\vct\delta}\|_1 = d_\sW(\vct\theta,\vct\theta')\in [\epsilon,\epsilon_0]$.
We then have (without loss of generality let $\pi(k)\equiv k$)
\begin{align}
p_{\vct\theta',m}(x)-p_{\vct\theta,m}(x)
&= \mathbb E_h\left[p_{\vct\theta',h}(x)-p_{\vct\theta,h}(x)\right]\nonumber\\
&= \mathbb E_h\left\{p_{\vct\theta,h}(x)\left[\frac{\prod_{i=1}^mp_{\vct\theta',h}(x_i)}{\prod_{i=1}^mp_{\vct\theta,h}(x_i)} - 1\right]\right\}\nonumber\\
&=  \mathbb E_h\left\{p_{\vct\theta,h}(x)\left[\prod_{i=1}^m\left(1+\frac{p_{\vct\theta',h}(x_i)-p_{\vct\theta,h}(x_i)}{p_{\vct\theta,h}(x_i)}\right) - 1\right]\right\}\nonumber\\
&= { \mathbb E_h\left\{p_{\vct\theta,h}(x)\left[\prod_{i=1}^m\left(1+\frac{\sum_{j=1}^Kh_j\theta'_j(x_i)-\sum_{j=1}^Kh_j\theta_j(x_i)}{p_{\vct\theta,h}(x_i)}\right) - 1\right]\right\}}\nonumber\\
&= \mathbb E_h\left\{p_{\vct\theta,h}(x)\left[\prod_{i=1}^m\left(1+\frac{\tilde{\vct\delta}_h(x_i)}{p_{\vct\theta,h}(x_i)}\right) - 1\right]\right\}\nonumber\\
&=: \sum_{\sp'=1}^m{r_{\sp'}(x)},\label{eq:rp}
\end{align}
where $\tilde{\vct\delta}_h(x_i) = \sum_{j=1}^k{h_j\delta_j(x_i)}$, $\delta_j(x_i)=\theta'_j(x_i)-\theta_j(x_i)$ and 
$$
r_{\sp'}(x) := \mathbb E_h\left\{p_{\vct\theta,h}(x) \sum_{1\leq i_1<\cdots<i_{\sp'}\leq m}\frac{\tilde{\vct\delta}_h(x_{i_1})\cdots\tilde{\vct\delta}_h(x_{i_{\sp'}})}{p_{\vct\theta,h}(x_{i_1})\cdots p_{\vct\theta,h}(x_{i_{\sp'}})}\right\}.
$$
By definition of $\sp(m;\vct\theta)$ and $\sd_{m,\sp}(\vct\theta)$ we know that $r_{\sp'}(x')=0$ for all $1\leq\sp'<\sp(m;\vct\theta)$ and $x'\in\mathcal X^m$; therefore
$$
\int_{\mathcal X^m}|r_{\sp'}(x)|\ud\mu_m(x) = 0.
$$
For $\sp'=\sp(m;\vct\theta)$,
integrating over all $x\in\mathcal X^m$ with respect to the counting measure we have
\begin{align*}
&\;\;\int_{\mathcal X^m}|r_{\sp'}(x)| \ud\mu_m(x)
= \int_{\mathcal X^m}\left|\mathbb E_h\left\{p_{\vct\theta,h}(x) \sum_{1\leq i_1<\cdots<i_{\sp'}\leq m}\frac{\tilde{\vct\delta}_h(x_{i_1})\cdots\tilde{\vct\delta}_h(x_{i_{\sp'}})}{p_{\vct\theta,h}(x_{i_1})\cdots p_{\vct\theta,h}(x_{i_{\sp'}})}\right\}\right|\ud\mu_m(x)\\
&= \|\tilde{\vct\delta}\|_1^{\sp'}\cdot\int_{\mathcal X^m}\left|\mathbb E_h\left\{p_{\vct\theta,h}(x) \sum_{1\leq i_1<\cdots<i_{\sp'}\leq m}\frac{\tilde{\vct\delta}_h(x_{i_1})\cdots\tilde{\vct\delta}_h(x_{i_{\sp'}})}{p_{\vct\theta,h}(x_{i_1})\|\vct\delta\|_1\cdots p_{\vct\theta,h}(x_{i_{\sp'}})\|\vct\delta\|_1}\right\}\right|\ud\mu_m(x)\\
&\geq \|\tilde{\vct\delta}\|_1^{\sp'}\cdot\inf_{\substack{\|\vct\delta\|_1=1\\\sum_{\ell=1}^V\delta_j(\ell)=0}}\int_{\mathcal X^m}\left|\mathbb E_h\left\{p_{\vct\theta,h}(x) \sum_{1\leq i_1<\cdots<i_{\sp'}\leq m}\frac{{\vct\delta}_h(x_{i_1})\cdots{\vct\delta}_h(x_{i_{\sp'}})}{p_{\vct\theta,h}(x_{i_1})\cdots p_{\vct\theta,h}(x_{i_{\sp'}})}\right\}\right|\ud\mu_m(x)\\
&= \|\tilde{\vct\delta}\|_1^{\sp'}\cdot \sd_{m,\sp}(\vct\theta) = \sd_{m,\sp'}(\vct\theta)[d_{\sW}(\vct\theta,\vct\theta')]^{\sp'}.
\end{align*}
Here the third line holds because $\vct\delta := \tilde{\vct\delta}/\|\tilde{\vct\delta}\|_1$ satisfies $\|\vct\delta\|_1=0$ and $\sum_{\ell=1}^V\delta_j(\ell)=0$.
For $\sp(m;\vct\theta)\leq\sp'\leq m$ and all $x'\in\mathcal X^m$, it holds that
\begin{align*}
\int_{\mathcal X^m}|r_{\sp'}(x')| \ud\mu_m(x)
&\leq \int_{\mathcal X^m} \left|\mathbb E_h\left\{\sum_{1\leq i_1<\cdots<i_{\sp'}\leq m}\prod_{i'\notin \{i_1,\cdots,i_{\sp'}\}}p_{\vct\theta,h}(x_{i'})\prod_{j=1}^{\sp'}\tilde{\vct\delta}_h(x_{i_j})\right\}\right|\ud\mu_m(x)\\
&\leq V^m\cdot \|\tilde{\vct\delta}\|_1^{\sp'} = V^m [d_{\sW}(\vct\theta,\vct\theta')]^{\sp'}.
\end{align*}
Subsequently, using the fact that $d_{\sW}(\vct\theta,\vct\theta')\in[\epsilon,\epsilon_0]$ and $\epsilon_0<1/2$, we have
\begin{align*}
d_{\tv}(p_{\vct\theta,m};d_{\vct\theta',m})
&= \int_{\mathcal X^m}\bigg|\sum_{\sp'=1}^mr_{\sp'}(x)\bigg|\ud\mu_m(x)\\
&\geq \int_{\mathcal X^m}|r_{\sp(m;\vct\theta)}(x)|\ud\mu_m(x) - \sum_{\sp'=\sp(m;\vct\theta)+1}^m\int_{\mathcal X^m}|r_{\sp'}(x)|\ud\mu_m(x)\\
&\geq \sd_{m,\sp(m;\vct\theta)}(\vct\theta)[d_{\sW}(\vct\theta,\vct\theta')]^{\sp(m;\vct\theta)} - V^m \cdot \sum_{\sp'=\sp(m;\vct\theta)+1}^m [d_{\sW}(\vct\theta,\vct\theta')]^{\sp'}\\
&\geq \sd_{m,\sp(m;\vct\theta)}(\vct\theta)[d_{\sW}(\vct\theta,\vct\theta')]^{\sp(m;\vct\theta)} - \frac{V^m}{1-d_{\sW}(\vct\theta,\vct\theta')}\cdot [d_{\sW}(\vct\theta,\vct\theta')]^{\sp(m;\vct\theta)+1}\\
&\geq \left[\sd_{m,\sp(m;\vct\theta)}(\vct\theta) - \frac{V^m\epsilon_0}{1-\epsilon_0}\right]\cdot \epsilon^{\sp(m;\vct\theta)}.
\end{align*}
%

We next prove Eq.~(\ref{eq:main-upper}).
Let again $\tilde{\vct\delta} := \vct\theta'_{\pi}-\vct\theta$ and $\tilde\epsilon := \|\tilde{\vct\delta}\|_1\leq\epsilon$ for all $\vct\theta'\in\Theta_{c_0}$
such that $d_{\sW}(\vct\theta,\vct\theta')\leq\epsilon$.
Then 
\begin{align*}
d_{\tv}(p_{\vct\theta,m};p_{\vct\theta',m}) 
&\leq \sum_{\sp'=\sp(m;\vct\theta)}^m \int_{\mathcal X^m}|r_{\sp'}(x)|\ud \mu_m(x)\\
&\leq V^m\sum_{\sp'=\sp(m;\vct\theta)}^{m}\tilde\epsilon^{\sp'}\\
&\leq V^m\sum_{\sp'=\sp(m;\vct\theta)}^m\epsilon^{\sp'} \leq \frac{V^m}{1-\epsilon}\cdot \epsilon^{\sp(m;\vct\theta)}.
\end{align*}

\end{proof}


\begin{lemma}[Finite identifiability of $\{p_{\vct\theta,m}\}$]
$\{p_{\vct\theta,m}\}_{\vct\theta\in\Theta_{c_0}}$ is finitely identifiable if $K\geq 2$ and $m\geq 2$.
\label{lem:finite-identifiable}
\end{lemma}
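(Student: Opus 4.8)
The plan is to derive finite identifiability as a direct consequence of Lemma~\ref{lem:main} and Lemma~\ref{lem:second-order}. The key observation is that Lemma~\ref{lem:second-order} guarantees $\sd_{2,2}(\vct\theta)>0$ for \emph{every} $\vct\theta\in\Theta_{c_0}$, and by monotonicity of $\sp(\cdot;\vct\theta)$ this yields $\sp(m;\vct\theta)\leq 2<\infty$ for all $m\geq 2$. Thus the finiteness hypothesis $\sp(m;\vct\theta)<\infty$ required by Lemma~\ref{lem:main} is automatically met under the stated conditions $K\geq 2$, $m\geq 2$, so we may invoke the lower bound Eq.~(\ref{eq:main-lower}) freely.

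First I would fix an arbitrary $\vct\theta\in\Theta_{c_0}$ and set $\sp=\sp(m;\vct\theta)\in\{1,2\}$. The goal is to show that the equivalence set $\{\vct\theta'\in\Theta_{c_0}: d_{\tv}(p_{\vct\theta,m};p_{\vct\theta',m})=0\}$ is finite. The strategy is to show that this set, modulo permutations, is \emph{locally isolated}: no $\vct\theta'$ with $d_\sW(\vct\theta,\vct\theta')$ small but nonzero can be TV-equivalent to $\vct\theta$. Concretely, choose $\epsilon_0>0$ small enough that the bracketed coefficient $\sd_{m,\sp}(\vct\theta)-\frac{V^m\epsilon_0}{1-\epsilon_0}$ in Eq.~(\ref{eq:main-lower}) is strictly positive; this is possible precisely because $\sd_{m,\sp}(\vct\theta)>0$ by the definition of $\sp(m;\vct\theta)$. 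Then for every $\vct\theta'$ with $0<d_\sW(\vct\theta,\vct\theta')\leq\epsilon_0$, taking $\epsilon=d_\sW(\vct\theta,\vct\theta')$ in Eq.~(\ref{eq:main-lower}) forces $d_{\tv}(p_{\vct\theta,m};p_{\vct\theta',m})>0$. Hence within the radius-$\epsilon_0$ Wasserstein ball around $\vct\theta$, the only TV-equivalent parameter is $\vct\theta$ itself.

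The remaining step is to upgrade this local isolation to global finiteness via a compactness argument. The parameter set $\Theta_{c_0}$ is a closed and bounded subset of $(\Delta^{V-1})^K$, hence compact, and $d_\sW$ induces the same topology as any product metric on this compact set. The equivalence set is contained in $\Theta_{c_0}$, so it suffices to note that every point of the equivalence set is surrounded by a Wasserstein ball (of radius $\epsilon_0$ depending on that point, but bounded below uniformly if one argues carefully) containing no other equivalent point modulo permutation. A clean way to finish: the equivalence set is a closed subset of the compact set $\Theta_{c_0}$; if it were infinite it would have a limit point $\vct\theta^\star$, which itself lies in the equivalence set (by continuity of $p_{\vct\theta,m}$ in $\vct\theta$ and of $d_{\tv}$), and in every neighborhood of $\vct\theta^\star$ there would be infinitely many distinct equivalent points, contradicting the local isolation established at $\vct\theta^\star$. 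The only subtlety is permutation symmetry: a limit point could be approached by distinct parameters that coincide only after relabeling topics. Since there are finitely many permutations $\pi:[K]\to[K]$, passing to a subsequence along which the matching permutation in $d_\sW$ is constant reduces this to the genuine-convergence case and dispatches the issue.

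The main obstacle I anticipate is handling the permutation group cleanly in the limit-point argument: one must ensure that ``infinitely many equivalent points accumulating at $\vct\theta^\star$'' is not an artifact of $K!$ relabelings but reflects genuinely distinct parameter configurations converging in $d_\sW$. This is resolved by the finiteness of the symmetric group together with the pigeonhole principle, but it is the one place where the argument is more than a one-line invocation of Lemma~\ref{lem:main}. Everything else is a mechanical application of the local lower bound combined with compactness of $\Theta_{c_0}$.
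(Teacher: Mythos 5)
Your proof is correct, but it takes a genuinely different route from the paper's, and the difference is instructive. The paper first reduces to $m=2$ via the data processing inequality and then splits into two cases: for $K=2$ it argues algebraically (matching one-word marginals forces $\theta_1(x)+\theta_2(x)=\theta_1'(x)+\theta_2'(x)$, and matching documents with two identical words forces $\theta_1(x)\theta_2(x)=\theta_1'(x)\theta_2'(x)$, a quadratic system with at most two solutions per word, hence at most $2^V$ equivalent parameterizations); for $K\geq 3$ it combines Lemmas \ref{lem:first-order-m2} and \ref{lem:second-order} to get $\sp(2;\cdot)\equiv 2$ together with the \emph{uniform} bound $\sd_{2,2}(\cdot)\geq c(\nu_0)/V^3K$, which yields a single isolation radius $\epsilon_0$ valid at every member of the equivalence class, so the class is an $\epsilon_0$-packing of the compact set $\Theta_{c_0}$ and hence finite. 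The case split exists precisely because this uniformity fails when $K=2$: there $\sp(2;\vct\theta)$ jumps between $1$ (distinct topics) and $2$ (equal topics), and $\sd_{2,1}(\vct\theta)$ can be arbitrarily small as $\theta_1\to\theta_2$, so no single packing radius works. Your limit-point argument sidesteps the need for uniformity altogether: you only require isolation at the accumulation point $\vct\theta^\star$, with radius $\epsilon_0(\vct\theta^\star)$ allowed to depend on $\vct\theta^\star$, and this follows from $\sd_{m,\sp(m;\vct\theta^\star)}(\vct\theta^\star)>0$, which holds by the very definition of $\sp(m;\vct\theta^\star)$ once Lemma \ref{lem:second-order} and monotonicity give $\sp(m;\vct\theta^\star)\leq 2$ (no circularity there: the proof of Corollary \ref{cor:monotonicity} uses only Lemma \ref{lem:main}, not finite identifiability). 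This buys you a single argument for all $K\geq 2$ with no case analysis; what it costs is that the argument is purely existential, whereas the paper's is quantitative (an explicit $2^V$ count for $K=2$, an explicit packing radius for $K\geq 3$), and you need one extra ingredient the packing route does not: continuity of $\vct\theta'\mapsto d_{\tv}(p_{\vct\theta,m};p_{\vct\theta',m})$, i.e.\ the paper's Proposition \ref{prop:continuous}, to place $\vct\theta^\star$ in the equivalence class, plus the transitivity of TV-equivalence and the pigeonhole step over the $K!$ permutations that you correctly flag as the one subtlety. One caveat you share with the paper: both arguments treat $\Theta_{c_0}$ as closed, which under the strict inequality in (A1) really requires passing to the closure $\{\theta_j(\ell)\geq c_0\}$, on which all invoked lemmas remain valid.
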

\begin{proof}
By data processing inequality we know that $d_{\tv}(p_{\vct\theta,m};p_{\vct\theta',m})\geq d_{\tv}(p_{\vct\theta,2};p_{\vct\theta',2})$
for $m\geq 2$.
Therefore, we only need to prove finite identifiability for $\{p_{\vct\theta,2}\}_{\vct\theta\in\Theta_{c_0}}$, i.e., $m=2$.

We first consider the case of $K=2$ and let $\vct\theta=(\theta_1,\theta_2)$ be the underlying topics.
Let $\vct\theta'=(\theta_1',\theta_2')$ be one of its equivalent parameterization such that $d_{\tv}(p_{\vct\theta,2};p_{\vct\theta',2})=0$.
By the data processing inequality, we must have $d_{\tv}(p_{\vct\theta,1};p_{\vct\theta,1})=0$ and therefore
$$
\mathbb E_{\nu_0}[h_1]\theta_1(x) + \mathbb E_{\nu_0}[h_2]\theta_2(x) = \mathbb E_{\nu_0}[h_1]\theta_1'(x) + \mathbb E_{\nu_0}[h_2]\theta_2'(x), \;\;\;\;\;\;\forall x\in\mathcal X.
$$
Because $\nu_0$ is exchangeable, the above identity implies that
\begin{equation}
\theta_1(x)+\theta_2(x) =\theta_1'(x)+\theta_2'(x) \;\;\;\;\;\;\forall x\in\mathcal X.
\label{eq:k2-m1}
\end{equation}
We now consider document $X=(x_1,x_2)$ consisting of identical words $x_1=x_2=x\in\mathcal X$.
Because $$
p_{\vct\theta}(X)=\mathbb E_{\nu_0}\left[(h_1\theta_1(x)+h_2\theta_2(x))^2\right] = p_{\vct\theta'}(X),
$$
using the exchangeability of $\nu_0$ we have that
\begin{multline*}
\mathbb E_{\nu_0}[h_1^2]\left[\theta_1(x)^2+\theta_2(x)^2\right] + 2\mathbb E_{\nu_0}[h_1h_2] \theta_1(x)\theta_2(x)\\
= \mathbb E_{\nu_0}[h_1^2]\left[\theta_1'(x)^2+\theta_2'(x)^2\right] + 2\mathbb E_{\nu_0}[h_1h_2] \theta_1'(x)\theta_2'(x), \;\;\;\;\;\;\forall x\in\mathcal X.
\end{multline*}
Subtracting $\mathbb E_{\nu_0}[h_1]^2(\theta_1(x)+\theta_2(x))^2$ on both sides of the above identity and invoking Eq.~(\ref{eq:k2-m1}) that $\theta_1(x)+\theta_2(x)=\theta_1'(x)+\theta_2'(x)$, we have
$$
2\mathbb E_{\nu_0}[h_1h_2-h_1^2]\theta_1(x)\theta_2(x) = 2\mathbb E_{\nu_0}[h_1h_2-h_1^2]\theta_1'(x)\theta_2'(x) \;\;\;\;\;\;\forall x\in\mathcal X.
$$
Because $\mathbb E_{\nu_0}[h_1h_2-h_1^2]>0$ thanks to assumption (A2), we have
\begin{equation}
\theta_1(x)\theta_2(x) = \theta_1'(x)\theta_2'(x) \;\;\;\;\;\;\forall x\in\mathcal X.
\label{eq:k2-m2}
\end{equation}

When $\vct\theta=(\theta_1,\theta_2)$ is fixed, Eqs.~(\ref{eq:k2-m1},\ref{eq:k2-m2}) form a quadratic system of $\theta_1'(x),\theta_2'(x)$ for every $x\in\mathcal X$,
which has at most two solutions.
Therefore, $|\{\vct\theta': d_{\tv}(p_{\vct\theta,2};p_{\vct\theta',2})=0\}| \leq 2^V < \infty$,
and the finite identifiability is proved.

We next consider the case of $K\geq 3$ and $m=2$. 
We know that $\sd_{2,1}(\vct\theta)=0$ and $\sd_{2,2}(\vct\theta)\geq c(\nu_0)/V^3K$ for all $\vct\theta\in\Theta_{c_0}$,
thanks to Lemmas \ref{lem:first-order-m2} and \ref{lem:second-order}.
By choosing $\epsilon_0 := c(\nu_0)/[2V^5K+c(\nu_0)]$, by Lemma \ref{lem:main} we have 
\begin{equation}
d_{\tv}(p_{\vct\theta,2};p_{\vct\theta',2}) > 0, \;\;\;\;\;\forall d_{\sW}(\vct\theta,\vct\theta')\leq\epsilon_0.
\label{eq:packing}
\end{equation}
For arbitrary $\vct\theta\in\Theta_{c_0}$ let $\tilde\Theta_{c_0}(\vct\theta) := \{\vct\theta'\in\Theta_{c_0}: d_{\tv}(p_{\vct\theta,2};p_{\vct\theta',2})=0\}$
be the set of all its equivalent parameterizations.
By Eq.~(\ref{eq:packing}), $\tilde\Theta_{c_0}$ forms a \emph{packing} of $\Theta_{c_0}$ with radius $\epsilon_0$ with respect to $d_{\sW}(\cdot,\cdot)$.
Because $\epsilon_0>0$ is a positive constant depending only on $\nu_0,V,K$ and $\Theta_{c_0}$ is compact,
we conclude that $|\tilde\Theta_{c_0}(\vct\theta)|<\infty$.
\end{proof}

\begin{corollary}[Monotonicity of $\sp(m;\vct\theta)$]
$\sp(m';\vct\theta)\leq\sp(m;\vct\theta)$ for all $m'\geq m$.
\label{cor:monotonicity}
\end{corollary}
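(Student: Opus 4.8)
The plan is to reduce the corollary to a single monotonicity property of the degeneracy criterion $\sd_{m,\sp}$ and then to establish that property through a marginalization (data-processing) identity across document lengths. If $\sp(m;\vct\theta)=\infty$ the claimed inequality holds vacuously, so assume $\sp:=\sp(m;\vct\theta)<\infty$; then $1\le\sp\le m\le m'$. By the very definition of $\sp(\cdot;\vct\theta)$ as the smallest index with positive criterion, it suffices to prove the implication
\begin{equation*}
\sd_{m,\sp}(\vct\theta)>0\ \Longrightarrow\ \sd_{m',\sp}(\vct\theta)>0 .
\end{equation*}
Indeed, applying this with the specific index $\sp=\sp(m;\vct\theta)$ forces $\sd_{m',\sp}(\vct\theta)>0$, whence $\sp(m';\vct\theta)=\min\{\sp''\colon\sd_{m',\sp''}(\vct\theta)>0\}\le\sp=\sp(m;\vct\theta)$. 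I would prove the contrapositive: $\sd_{m',\sp}(\vct\theta)=0\Rightarrow\sd_{m,\sp}(\vct\theta)=0$.

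The key tool is that marginalizing the $m'$-word model over its last $m'-m$ positions reproduces the $m$-word model, because $\sum_{v\in[V]}p_{\vct\theta,h}(v)=\sum_{k}h_k=1$ for every mixing vector $h\in\Delta^{K-1}$. Applying this to the likelihood difference gives, for every $(x_1,\dots,x_m)\in[V]^m$,
\begin{equation*}
\sum_{x_{m+1},\dots,x_{m'}\in[V]}\big[p_{\vct\theta',m'}(x)-p_{\vct\theta,m'}(x)\big]=p_{\vct\theta',m}(x_1,\dots,x_m)-p_{\vct\theta,m}(x_1,\dots,x_m).
\end{equation*}
Next I would expand both sides in the perturbation. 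Writing $\tilde{\vct\delta}=\vct\theta'_\pi-\vct\theta=t\vct\delta$ with $\|\vct\delta\|_1=1$ and $\sum_\ell\delta_j(\ell)=0$, the binomial expansion of Eq.~(\ref{eq:rp}) makes each side a polynomial in $t$ whose degree-$\sp$ coefficient is exactly the integrand defining $\sd_{\cdot,\sp}(\vct\theta)$; denote these coefficients by $r^{(m')}_\sp(x;\vct\delta)$ and $r^{(m)}_\sp(x_1,\dots,x_m;\vct\delta)$. Matching powers of $t$ then yields the order-by-order identity
\begin{equation*}
\sum_{x_{m+1},\dots,x_{m'}\in[V]}r^{(m')}_\sp(x;\vct\delta)=r^{(m)}_\sp(x_1,\dots,x_m;\vct\delta),\qquad 1\le\sp\le m,
\end{equation*}
valid for every admissible direction $\vct\delta$.

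To finish, I would use compactness. The feasible set $\{\vct\delta\colon\|\vct\delta\|_1=1,\ \sum_\ell\delta_j(\ell)=0\}$ is compact and the map $\vct\delta\mapsto\int_{\mathcal X^{m'}}|r^{(m')}_\sp(x;\vct\delta)|\,\ud\mu_{m'}(x)$ is continuous, so the infimum defining $\sd_{m',\sp}(\vct\theta)$ is attained at some $\vct\delta^\star$. If $\sd_{m',\sp}(\vct\theta)=0$, this integral vanishes at $\vct\delta^\star$, and since $\mu_{m'}$ is the counting measure we obtain $r^{(m')}_\sp(x;\vct\delta^\star)=0$ for every $x\in[V]^{m'}$. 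The marginalization identity then forces $r^{(m)}_\sp(x_1,\dots,x_m;\vct\delta^\star)=0$ for every $(x_1,\dots,x_m)\in[V]^m$, so $\int_{\mathcal X^m}|r^{(m)}_\sp(\cdot;\vct\delta^\star)|\,\ud\mu_m=0$; as $\vct\delta^\star$ is feasible this shows $\sd_{m,\sp}(\vct\theta)=0$, which is the desired contrapositive.

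I expect the main obstacle to be the order-by-order identity in the second step: one must verify that summation over the extra positions $m+1,\dots,m'$ commutes with the elementary-symmetric-polynomial expansion underlying Eq.~(\ref{eq:rp}). The delicate point is the cross terms that place one or more of the $\sp$ selected indices $i_1<\dots<i_\sp$ among the marginalized positions; these collapse precisely because each extra position marginalizes to $1$, but the bookkeeping requires care. A purely analytic alternative would route through Lemma~\ref{lem:main} together with the data-processing inequality $d_{\tv}(p_{\vct\theta,m'};p_{\vct\theta',m'})\ge d_{\tv}(p_{\vct\theta,m};p_{\vct\theta',m})$, comparing the $\epsilon^{\sp(m;\vct\theta)}$ and $\epsilon^{\sp(m';\vct\theta)}$ scalings of the total-variation distance as $\epsilon=d_{\sW}(\vct\theta,\vct\theta')\to0$; however, that route must separately exclude the case $\sp(m';\vct\theta)=\infty$ before the upper bound of Lemma~\ref{lem:main} applies, so I would prefer the direct marginalization argument above.
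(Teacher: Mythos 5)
Your proposal is correct, but it takes a genuinely different route from the paper's. The paper proves monotonicity indirectly, by contradiction: assuming $\sp(m';\vct\theta)=\sp'>\sp=\sp(m;\vct\theta)$, it combines the data-processing inequality $d_{\tv}(p_{\vct\theta,m};p_{\vct\theta',m})\leq d_{\tv}(p_{\vct\theta,m'};p_{\vct\theta',m'})$ with the two-sided total-variation bounds of Lemma~\ref{lem:main}, and lets $\epsilon=d_{\sW}(\vct\theta,\vct\theta')\to 0^+$ so that the $\epsilon^{\sp}$ lower bound at length $m$ contradicts the $\epsilon^{\sp'}$ upper bound at length $m'$ --- this is exactly the ``purely analytic alternative'' you describe and discard at the end of your proposal. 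Your argument instead works directly with the degeneracy criterion: summing the order-$\sp$ coefficient $r^{(m')}_{\sp}$ over the last $m'-m$ words yields exactly $r^{(m)}_{\sp}$, because unselected marginalized positions contribute $\sum_{v}p_{\vct\theta,h}(v)=1$ while any term placing a selected index among the marginalized positions vanishes since $\sum_{\ell}\delta_k(\ell)=0$ (note that your closing remark attributes both collapses to the former mechanism --- a small wording slip, though the proof body sets up the constraint correctly); compactness of the feasible set then extracts a minimizing direction $\vct\delta^\star$ transferring degeneracy from $m'$ down to $m$. What each approach buys: the paper's proof is very short given that Lemma~\ref{lem:main} is already established, but, as you correctly observe, the upper bound of that lemma is stated under $\sp(m';\vct\theta)<\infty$, so the contradiction argument leaves the case $\sp(m';\vct\theta)=\infty$ handled only implicitly. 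Your proof is self-contained (it never invokes Lemma~\ref{lem:main} or total-variation distances), covers that case automatically, and in fact establishes the stronger pointwise implication $\sd_{m,\sp}(\vct\theta)>0\Rightarrow\sd_{m',\sp}(\vct\theta)>0$ for every order $1\leq\sp\leq m$, not merely for the minimal one.
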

\begin{proof}
If $\sp(m;\vct\theta)=\infty$ then the inequality automatically holds.
Suppose $\sp(m;\vct\theta)=\sp$ and assume by way of contradiction that $\sp(m';\vct\theta)=\sp'>\sp$ for some $m'>m$.
Invoking Lemma \ref{lem:main} and the data processing inequality, we know that for all $0<\epsilon<1/4$, 
\begin{equation}
\sup_{d_{\sW}(\vct\theta,\vct\theta')\leq 2\epsilon}d_{\tv}(p_{\vct\theta,m};p_{\vct\theta',m})
\leq \sup_{d_{\sW}(\vct\theta,\vct\theta')\leq 2\epsilon}d_{\tv}(p_{\vct\theta,m'};p_{\vct\theta',m'})
\leq \frac{V^m 2^{\sp'}}{1-2\epsilon}\cdot \epsilon^{\sp'}.
\label{eq:mono-intermediate1}
\end{equation}
On the other hand, because $\sp(m;\vct\theta)=\sp$, we know that for all $0<\epsilon\leq\epsilon_0<1/2$, 
\begin{align}
\inf_{\epsilon\leq d_{\sW}(\vct\theta,\vct\theta')\leq\epsilon_0} d_{\tv}(p_{\vct\theta,m};p_{\vct\theta',m}) 
&\geq \left[\sd_{m,\sp}(\vct\theta)-\frac{V^m\epsilon_0}{1-\epsilon_0}\right]\cdot \epsilon^{\sp}\nonumber\\
&= \frac{1}{\epsilon^{\sp'-\sp}}\left[\sd_{m,\sp}(\vct\theta)-\frac{V^m\epsilon_0}{1-\epsilon_0}\right]\cdot \epsilon^{\sp'}.
\label{eq:mono-intermediate2}
\end{align}
Eqs.~(\ref{eq:mono-intermediate1}) and (\ref{eq:mono-intermediate2}) clearly contradict each other by considering $\epsilon_0>0$
moderately small such that $\sd_{m,\sp}(\vct\theta)\geq 2V^m\epsilon_0/(1-\epsilon_0)$,
and $\vct\theta'$ sufficiently close to $\vct\theta$ such that
such that $\epsilon\leq d_{\sW}(\vct\theta,\vct\theta')\leq 2\epsilon$ and letting $\epsilon\to 0^+$.
Thus, we conclude that $\sp(m';\vct\theta)\leq \sp(m;\vct\theta)$.
\end{proof}

By Lemma \ref{lem:second-order} and Corollary \ref{cor:monotonicity}, we immediately have the following claim:
\begin{corollary}[Finiteness of $\sp(m;\vct\theta)$]
For any $\vct\theta\in\Theta_{c_0}$ and $m\geq 2$, $\sp(m;\vct\theta)\leq 2$.
\end{corollary}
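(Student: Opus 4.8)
The plan is to assemble the corollary directly from the two results that precede it, since all of the genuine work has already been done. First I would invoke Lemma \ref{lem:second-order}, which guarantees that $\sd_{2,2}(\vct\theta) \geq c(\nu_0)/V^3K > 0$ for \emph{every} $\vct\theta\in\Theta_{c_0}$ under (A1) and (A2). Recalling that $\sp(m;\vct\theta)$ is \emph{defined} as the smallest positive integer $\sp$ for which $\sd_{m,\sp}(\vct\theta) > 0$, the strict positivity of the second-order criterion at $m=2$ forces the minimal such index to be at most $2$; that is, $\sp(2;\vct\theta) \leq 2$. The one subtlety to get right here is purely one of reading the definition: $\sd_{2,2}(\vct\theta) > 0$ only tells us that $2$ is \emph{a} feasible index, so it bounds the minimum from above by $2$ (it does not assert equality, since $\sd_{2,1}(\vct\theta)$ may or may not vanish).

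Next I would push this bound from $m=2$ to all larger $m$ using the monotonicity established in Corollary \ref{cor:monotonicity}, namely $\sp(m';\vct\theta) \leq \sp(m;\vct\theta)$ whenever $m' \geq m$. Specializing to the base case $m=2$ and an arbitrary $m \geq 2$ gives
\[
\sp(m;\vct\theta) \leq \sp(2;\vct\theta) \leq 2,
\]
which is exactly the claimed statement. This holds uniformly over $\vct\theta\in\Theta_{c_0}$ because the lower bound $c(\nu_0)/V^3K$ in Lemma \ref{lem:second-order} is independent of $\vct\theta$.

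I do not expect any real obstacle in this particular step: the corollary is a one-line consequence of the preceding lemma and corollary, and the proof amounts to correctly interpreting the definition of $\sp(m;\vct\theta)$ and then quoting monotonicity. The substantive content lives entirely in Lemma \ref{lem:second-order} (the degeneracy argument exploiting repeated-word documents $x=(x_1,x_1)$, where the cross term becomes a perfect square) and in Corollary \ref{cor:monotonicity} (proved via the data-processing inequality together with the two-sided TV bounds of Lemma \ref{lem:main}); given those, the finiteness claim follows immediately.
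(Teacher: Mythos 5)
Your proposal is correct and is exactly the paper's own argument: the paper derives this corollary immediately from Lemma \ref{lem:second-order} (which gives $\sd_{2,2}(\vct\theta)>0$, hence $\sp(2;\vct\theta)\leq 2$) combined with the monotonicity $\sp(m;\vct\theta)\leq\sp(2;\vct\theta)$ from Corollary \ref{cor:monotonicity}. Your reading of the definition of $\sp(m;\vct\theta)$ as a minimum (so that positivity of the second-order criterion gives only an upper bound, not equality) is also the right interpretation, and nothing further is needed.
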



\subsection{Proof of Theorem \ref{thm:main}}\label{subsec:proof-main}

We use a multi-point variant of the classical analysis of maximum likelihood \citep[Sec.~5.8]{van1998asymptotic} to establish the rate of convergence for MLE,
and Le Cam's method to prove corresponding (local) minimax lower bounds.

\noindent\emph{{Proof of upper bound}}.
Let $\vct\theta\in\Theta_{c_0}$ be the underlying parameter that generates the data.
Define 
$$
\widetilde{\Theta}_{c_0}(\vct\theta) := \left\{\tilde{\vct\theta}\in\Theta_{c_0}: d_{\tv}(p_{\vct\theta,m};p_{\tilde{\vct\theta},m})=0\right\}
$$
 as the set of its equivalent parameterizations, which is guaranteed to be finite thanks to Lemma \ref{lem:finite-identifiable}.
For $\epsilon>0$, define 
$$
\bar\Theta_{c_0,\epsilon}(\vct\theta) := \left\{\vct\theta'\in\Theta_{c_0}: d_{\sW}(\vct\theta',\tilde{\vct\theta})\geq\epsilon, \forall\tilde{\vct\theta}\in\widetilde{\Theta}_{c_0}(\vct\theta)\right\}
$$
as the set of all parameters that are at least $\epsilon$ away from any equivalent parameterization in $\tilde\Theta_{c_0}(\vct\theta)$ in Wasserstein's distance $d_{\sW}(\cdot,\cdot)$.
The following technical proposition and corollary shows that $d_{\tv}(p_{\vct\theta,m};p_{\vct\theta';m})$ is uniformly lower bounded from below for all $\vct\theta'\in\bar\Theta_{c_0,\epsilon}(\vct\theta)$.
\begin{proposition}
For every fixed $\vct\theta\in\Theta_{c_0}$,
$d_{\tv}(p_{\vct\theta,m};p_{\vct\theta',m})$ is continuous in $\vct\theta'$ with respect to $\|\cdot\|_2$, meaning that for every $\varepsilon>0$, there exists $\delta>0$ such that
$|d_{\tv}(p_{\vct\theta,m};p_{\vct\theta',m})-d_{\tv}(p_{\vct\theta,m};p_{\vct\theta'',m})| \leq \varepsilon$ for all $\vct\theta',\vct\theta''\in\Theta_{c_0}$ such that $\|\vct\theta'-\vct\theta''\|_2\leq\delta$,
where $\|\vct\theta'-\vct\theta''\|_2 := \sqrt{\sum_{i=1}^K\sum_{j=1}^V|\theta_i'(j)-\theta_i''(j)|^2}$.
\label{prop:continuous}
\end{proposition}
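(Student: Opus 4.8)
The plan is to exploit two structural features: that $d_{\tv}$ is a genuine metric on probability measures, and that the sample space $\mathcal X^m=[V]^m$ is finite, so that total variation is a finite sum. First I would invoke the reverse triangle inequality for the metric $d_{\tv}$, which gives
$$
\left| d_{\tv}(p_{\vct\theta,m}; p_{\vct\theta',m}) - d_{\tv}(p_{\vct\theta,m}; p_{\vct\theta'',m}) \right| \leq d_{\tv}(p_{\vct\theta',m}; p_{\vct\theta'',m}).
$$
This reduces the claim to showing that $d_{\tv}(p_{\vct\theta',m}; p_{\vct\theta'',m})$ can be made arbitrarily small by taking $\|\vct\theta'-\vct\theta''\|_2$ small, uniformly over $\vct\theta',\vct\theta''\in\Theta_{c_0}$. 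In other words, it suffices to establish a Lipschitz-type bound on the map $\vct\theta'\mapsto p_{\vct\theta',m}$, which is stronger than the stated continuity.

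Next, since $\mu_m$ is the counting measure on the finite set $[V]^m$, I would write
$$
d_{\tv}(p_{\vct\theta',m}; p_{\vct\theta'',m}) = \frac{1}{2}\sum_{x\in[V]^m}\left| p_{\vct\theta',m}(x) - p_{\vct\theta'',m}(x) \right|,
$$
and bound each summand separately. Writing $p_{\vct\theta',m}(x)=\mathbb E_h[\prod_{i=1}^m p_{\vct\theta',h}(x_i)]$ and likewise for $\vct\theta''$, I would telescope the product difference via the standard identity $\prod_i a_i - \prod_i b_i = \sum_k (\prod_{i<k}a_i)(a_k-b_k)(\prod_{i>k}b_i)$. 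Because each factor $p_{\vct\theta,h}(x_i)=\sum_j h_j\theta_j(x_i)$ is a convex combination of numbers in $[0,1]$ (as $h\in\Delta^{K-1}$ and $\theta_j(x_i)\in[0,1]$), every partial product also lies in $[0,1]$, so the telescoping sum is controlled by $\sum_{i=1}^m|p_{\vct\theta',h}(x_i)-p_{\vct\theta'',h}(x_i)|$. Each term here equals $|\sum_j h_j(\theta_j'(x_i)-\theta_j''(x_i))|\leq \max_j|\theta_j'(x_i)-\theta_j''(x_i)|\leq \|\vct\theta'-\vct\theta''\|_2$, again using $\sum_j h_j=1$ and $h\geq 0$.

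Combining these bounds and taking the expectation over $h$ (which preserves the inequality, and contributes no constant since the relevant moments of $h$ are bounded by $1$), each summand satisfies $|p_{\vct\theta',m}(x)-p_{\vct\theta'',m}(x)|\leq m\|\vct\theta'-\vct\theta''\|_2$; summing over the $V^m$ points of $[V]^m$ then yields $d_{\tv}(p_{\vct\theta',m}; p_{\vct\theta'',m})\leq \tfrac{1}{2}mV^m\|\vct\theta'-\vct\theta''\|_2$. Given $\varepsilon>0$, taking $\delta=2\varepsilon/(mV^m)$ finishes the argument. I do not expect a genuine obstacle here, as the proof produces global Lipschitz continuity with an explicit constant; the only point requiring care is the telescoping step, where one must use that every partial product stays bounded by $1$ so that the flanking factors $\prod_{i<k}a_i$ and $\prod_{i>k}b_i$ do not inflate the estimate.
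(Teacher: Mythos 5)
Your proposal is correct, and its overall strategy coincides with the paper's: both reduce the claim to a global Lipschitz bound on $\vct\theta\mapsto p_{\vct\theta,m}$ in total variation, exploiting the finiteness of $[V]^m$ so that $d_{\tv}$ is a finite sum of likelihood differences. Where you differ is in the treatment of the product difference and in bookkeeping. The paper bounds $d_{\tv}$ by $V^m$ times the maximum pointwise difference, expands both products into monomials $\theta_{j_1}(x_1)\cdots\theta_{j_m}(x_m)$ over multi-indices $(j_1,\dots,j_m)$, and controls the monomial differences by a binomial-type count, arriving at the constant $(2V)^m$; it also leaves the final reverse-triangle-inequality step implicit. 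You instead make that reduction explicit up front, then telescope the product $\prod_i p_{\vct\theta',h}(x_i)-\prod_i p_{\vct\theta'',h}(x_i)$ directly at the level of the per-word mixture probabilities, using that every partial product lies in $[0,1]$ and that $\sum_j h_j=1$. This yields the Lipschitz constant $\tfrac{1}{2}mV^m$, linear rather than exponential in $m$, and avoids the multi-index expansion entirely. Both arguments are elementary and fully rigorous; yours is slightly cleaner and quantitatively sharper, though for the purposes of the paper (where only continuity is needed, with $V,m$ fixed) the distinction is immaterial. One cosmetic note: the paper's own proof drops the factor $\tfrac12$ from its definition $d_{\tv}(P;Q)=\tfrac12\int|\ud P-\ud Q|$, whereas your computation is consistent with that definition; this affects nothing since only upper bounds are used.
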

Proposition \ref{prop:continuous} can be easily proved by explicitly expanding the total variation between distributions parameterized by two parameters $\vct\theta,\vct\theta'\in\Theta_{c_0}$.
We give its complete proof in the appendix.
As a consequence of Proposition \ref{prop:continuous}, we have the following corollary:
\begin{corollary}
For any $0<\epsilon<1/2$, $\inf_{\vct\theta'\in\bar\Theta_{c_0,\epsilon}(\vct\theta)} d_{\tv}(p_{\vct\theta,m};p_{\vct\theta',m}) > 0$.
\label{cor:uniform-min}
\end{corollary}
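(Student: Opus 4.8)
The plan is to combine the continuity statement in Proposition~\ref{prop:continuous} with compactness of $\bar\Theta_{c_0,\epsilon}(\vct\theta)$ and the finite identifiability established in Lemma~\ref{lem:finite-identifiable}, and then invoke the extreme value theorem. First I would record pointwise strict positivity: for every $\vct\theta'\in\bar\Theta_{c_0,\epsilon}(\vct\theta)$ we have $d_{\tv}(p_{\vct\theta,m};p_{\vct\theta',m})>0$. Indeed, if this total-variation distance were zero then $\vct\theta'$ would by definition belong to the equivalent set $\widetilde\Theta_{c_0}(\vct\theta)$; but membership of $\vct\theta'$ in $\bar\Theta_{c_0,\epsilon}(\vct\theta)$ forces $d_{\sW}(\vct\theta',\tilde{\vct\theta})\geq\epsilon$ for every $\tilde{\vct\theta}\in\widetilde\Theta_{c_0}(\vct\theta)$, and choosing $\tilde{\vct\theta}=\vct\theta'$ would then give $0=d_{\sW}(\vct\theta',\vct\theta')\geq\epsilon>0$, a contradiction. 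Hence no $\vct\theta'\in\bar\Theta_{c_0,\epsilon}(\vct\theta)$ can be an equivalent parameterization, and the map is strictly positive on this set.

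Next I would argue that $\bar\Theta_{c_0,\epsilon}(\vct\theta)$ is compact. By Lemma~\ref{lem:finite-identifiable} the equivalent set $\widetilde\Theta_{c_0}(\vct\theta)$ is finite, so that $\bar\Theta_{c_0,\epsilon}(\vct\theta)=\bigcap_{\tilde{\vct\theta}\in\widetilde\Theta_{c_0}(\vct\theta)}\{\vct\theta'\in\Theta_{c_0}:d_{\sW}(\vct\theta',\tilde{\vct\theta})\geq\epsilon\}$ is a finite intersection of closed subsets of $\Theta_{c_0}$, each closed because $d_{\sW}(\cdot,\tilde{\vct\theta})$ is continuous and the defining inequality is non-strict. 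Being a closed subset of the compact parameter set $\Theta_{c_0}$ (the compactness of $\Theta_{c_0}$ is the same fact already used in the proof of Lemma~\ref{lem:finite-identifiable}), it is itself compact. If it happens to be empty the infimum equals $+\infty$ and the claim is trivially true, so I may assume it is nonempty.

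Finally I would apply the extreme value theorem. By Proposition~\ref{prop:continuous} the map $\vct\theta'\mapsto d_{\tv}(p_{\vct\theta,m};p_{\vct\theta',m})$ is continuous on $\Theta_{c_0}$, hence continuous on the compact set $\bar\Theta_{c_0,\epsilon}(\vct\theta)$, and therefore attains its infimum at some $\vct\theta^*\in\bar\Theta_{c_0,\epsilon}(\vct\theta)$. By the first step $d_{\tv}(p_{\vct\theta,m};p_{\vct\theta^*,m})>0$, so the infimum is a strictly positive number, which is exactly the claim. The only genuine obstacle is the compactness verification, namely checking that the constraints $d_{\sW}(\vct\theta',\tilde{\vct\theta})\geq\epsilon$ cut out a closed set; once continuity of $d_{\sW}$ and finiteness of $\widetilde\Theta_{c_0}(\vct\theta)$ are in hand this is routine, and everything else is a direct application of continuity plus the extreme value theorem.
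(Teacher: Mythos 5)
Your proof is correct and follows essentially the same route as the paper's: compactness of $\bar\Theta_{c_0,\epsilon}(\vct\theta)$ as a closed subset of the compact $\Theta_{c_0}$, continuity of $\vct\theta'\mapsto d_{\tv}(p_{\vct\theta,m};p_{\vct\theta',m})$ from Proposition~\ref{prop:continuous}, the extreme value theorem, and pointwise strict positivity on $\bar\Theta_{c_0,\epsilon}(\vct\theta)$. The only cosmetic differences are that you establish closedness via continuity of $d_{\sW}(\cdot,\tilde{\vct\theta})$ whereas the paper rewrites the constraints as complements of $\ell_2$ balls (using permutation-invariance of $\widetilde\Theta_{c_0}(\vct\theta)$), and you dispose of a possibly empty $\bar\Theta_{c_0,\epsilon}(\vct\theta)$ by convention where the paper instead asserts non-emptiness from $\epsilon<1/2$.
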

\begin{proof}
We first show that $\bar\Theta_{c_0,\epsilon}(\vct\theta)$ is \emph{compact} under the general topology of $\mathbb R^{VK}$ by treating each $\vct\theta$ as a $VK$-dimensional vector.
$\bar\Theta_{c_0,\epsilon}(\vct\theta)$ is obviously bounded (with respect to $\|\cdot\|_2$), because $\Theta_{c_0}$ is bounded and $\bar\Theta_{c_0,\epsilon}(\vct\theta)\subseteq \Theta_{c_0}$.
In addition, $\bar\Theta_{c_0,\epsilon}$ can be written as
\begin{equation}
\bar\Theta_{c_0,\epsilon}(\vct\theta) = \bigcap_{\vct\theta'\in\tilde\Theta_{c_0}(\vct\theta)}  \Theta_{c_0} \backslash \left\{\vct\theta''\in\mathbb R^{VK}: \|\vct\theta''-\vct\theta'\|_2 < \epsilon \right\}.
\end{equation}
Note that we have replaced $d_\sW(\cdot,\cdot)$ with the $\|\cdot\|_2$ norm, which remains correct because all permutations of a parameterization $\vct\theta'\in\tilde\Theta_{c_0,\epsilon}(\vct\theta)$
are also contained in $\tilde\Theta_{c_0,\epsilon}(\vct\theta)$.
Because $\Theta_{c_0}$ is closed, $\{\vct\theta''\in\mathbb R^{VK}: \|\vct\theta''-\vct\theta'\|_2 < \epsilon \}$ is open, and any intersection of closed sets are closed,
we conclude that $\bar\Theta_{c_0,\epsilon}(\vct\theta)$ is closed.
Therefore $\bar\Theta_{c_0,\epsilon}(\vct\theta)$ is compact.
Also, because $\epsilon<1/2$, $\bar\Theta_{c_0,\epsilon}(\vct\theta)$ is clearly non-empty.
By the extreme value theorem
\footnote{For any compact set $K\subset\mathbb R^d$ and continuous function $f:K\to\mathbb R$, $f$ attains its minimum and maximum on $K$.}
 and the fact that $d_{\tv}(p_{\vct\theta,m};p_{\vct\theta',m})$ is continuous in $\vct\theta'$ with respect to $\|\cdot\|_2$ (Proposition \ref{prop:continuous}),
$d_{\tv}(p_{\vct\theta,m};p_{\vct\theta',m})$ attains its minimum on $\bar\Theta_{c_0,\epsilon}(\vct\theta)$.
The corollary is then proved by noting that $d_{\tv}(p_{\vct\theta,m};p_{\vct\theta',m})>0$ for all $\vct\theta'\in\bar\Theta_{c_0,\epsilon}(\vct\theta)$.
\end{proof}

For any $\vct\theta,\vct\theta'\in\Theta_{c_0}$, and $X_1,\cdots,X_n\in\mathcal X^m$ i.i.d.~sampled from the underlying distribution $p_{\vct\theta,m}$,
define the ``empirical KL-divergence'' $\widehat{\kl}(p_{\vct\theta,m}\|p_{\vct\theta',m})$ as
$$
\widehat{\kl}_n(p_{\vct\theta,m}\|p_{\vct\theta',m}) := \frac{1}{n}\sum_{i=1}^n{\log\frac{p_{\vct\theta,m}(X_i)}{p_{\vct\theta',m}(X_i)}}.
$$
By definition of the ML estimator, we know $\inf_{\tilde{\vct\theta}\in\widetilde\Theta_{c_0}(\vct\theta)}d_{\sW}(\hat{\vct\theta}_{n,m}^{\ml}, \tilde{\vct\theta})\leq\epsilon$
provided that
\begin{equation}
\widehat{\kl}_n(p_{\vct\theta,m}\|p_{\vct\theta',m}) > 0 \;\;\;\;\text{for all }\vct\theta'\in\bar\Theta_{c_0,\epsilon}(\vct\theta).
\label{eq:empirical-kl}
\end{equation}
Furthermore, we know that the ``population'' version of Eq.~(\ref{eq:empirical-kl}) must be correct:
$\inf_{\vct\theta'\in\bar\Theta_{c_0,\epsilon}(\vct\theta)}\kl(p_{\vct\theta,m}\|p_{\vct\theta',m})>0$,
because the KL-divergence is lower bounded by the total-variation distance, which is further uniformly bounded away from below by Corollary \ref{cor:uniform-min}.
Therefore, to prove convergence rate of the MLE it suffices to upper bound the perturbation between empirical and population KL-divergence
and lower bounds the population divergence for all ${\vct\theta}'\in\bar\Theta_{c_0,\epsilon}(\vct\theta)$.

We first consider the simpler task of bounding the perturbation between $\widehat{\kl}_n(p_{\vct\theta,m}\|p_{\vct\theta',m})$ and its population version
$\kl(p_{\vct\theta,m}\|p_{\vct\theta',m})$.
Note that $\widehat{\kl}_n(p_{\vct\theta,m}\|p_{\vct\theta',m})$ is a sample average of i.i.d.~random variables.
Using classical empirical process theory, we have the following lemma that bounds the uniform convergence of $\widehat{\kl}_n$ towards $\kl$;
its complete proof is given in the appendix.
\begin{lemma}
There exists $C_{\vct\theta}>0$ depending only on $\vct\theta,c_0,m,\nu_0$ such that 
$$
\mathbb E_{\vct\theta} \sup_{\vct\theta'\in\Theta_{c_0}} \frac{\big|\widehat{\kl}_n(p_{\vct\theta,m}\|p_{\vct\theta',m})-\kl(p_{\vct\theta,m}\|p_{\vct\theta',m})\big|}{\sqrt{\kl(p_{\vct\theta,m}\|p_{\vct\theta',m})}} \leq \frac{C_{\vct\theta}}{\sqrt{n}}.
$$
\label{lem:sup-convergence}
\end{lemma}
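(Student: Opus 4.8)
The plan is to exploit the fact that observations live in the \emph{finite} set $[V]^m$, which lets me replace the usual empirical-process machinery (covering numbers, chaining, peeling over shells of $\kl$) by a single Cauchy--Schwarz step that decouples the $\vct\theta'$-dependence from the empirical fluctuation. Write $p:=p_{\vct\theta,m}$, let $\widehat p_n(x)=\frac1n\sum_{i=1}^n\mathbf 1[X_i=x]$ be the empirical distribution on $[V]^m$, and set $g_{\vct\theta'}(x):=\log\frac{p_{\vct\theta,m}(x)}{p_{\vct\theta',m}(x)}$. Then $\widehat{\kl}_n(p_{\vct\theta,m}\|p_{\vct\theta',m})=\frac1n\sum_i g_{\vct\theta'}(X_i)$ and $\kl(p_{\vct\theta,m}\|p_{\vct\theta',m})=\E_{\vct\theta}[g_{\vct\theta'}]$, so their difference is the centered empirical average $\widehat{\kl}_n-\kl=\sum_{x\in[V]^m}(\widehat p_n(x)-p(x))\,g_{\vct\theta'}(x)$.

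First I would record two pointwise ingredients. By (A1), $p_{\vct\theta,h}(x_i)=\sum_k h_k\theta_k(x_i)\in[c_0,1]$ for every $h\in\Delta^{K-1}$, hence $p_{\vct\theta,m}(x),p_{\vct\theta',m}(x)\in[c_0^m,1]$ and $|g_{\vct\theta'}(x)|\le B:=m\log(1/c_0)$ for all $x$. The crucial second ingredient is the variance-to-divergence bound $\E_{\vct\theta}[g_{\vct\theta'}^2]\le 2e^{B}\,\kl(p_{\vct\theta,m}\|p_{\vct\theta',m})$. This I would prove from the identity $\E_{\vct\theta}[e^{-g_{\vct\theta'}}]=\sum_x p_{\vct\theta',m}(x)=1$ together with the elementary inequality $e^{-t}\ge 1-t+\tfrac12 e^{-B}t^2$, valid for $|t|\le B$ by Taylor's theorem with Lagrange remainder; taking $\E_{\vct\theta}$ with $t=g_{\vct\theta'}(x)$ gives $1\ge 1-\kl+\tfrac12 e^{-B}\E_{\vct\theta}[g_{\vct\theta'}^2]$, which rearranges to the claim.

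The main step is then a Cauchy--Schwarz split over the $V^m$ atoms:
\[
\big|\widehat{\kl}_n-\kl\big|\le \Big(\sum_x\tfrac{(\widehat p_n(x)-p(x))^2}{p(x)}\Big)^{1/2}\Big(\sum_x p(x)\,g_{\vct\theta'}(x)^2\Big)^{1/2}=:\widehat\chi_n\cdot\big(\E_{\vct\theta}[g_{\vct\theta'}^2]\big)^{1/2}.
\]
The key point is that the first factor $\widehat\chi_n$ is \emph{free of} $\vct\theta'$, while the second is at most $\sqrt{2e^{B}\,\kl}$ by the variance bound. Dividing by $\sqrt{\kl}$ (interpreting the ratio as $0$ where $\kl=0$, since there $g_{\vct\theta'}\equiv0$ and the numerator also vanishes) yields $\frac{|\widehat{\kl}_n-\kl|}{\sqrt{\kl}}\le \sqrt{2e^{B}}\,\widehat\chi_n$ \emph{uniformly} in $\vct\theta'$. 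Taking the supremum over $\vct\theta'$ and then the expectation, it remains to bound $\E_{\vct\theta}[\widehat\chi_n]$; by Jensen and $n\,\mathrm{Var}(\widehat p_n(x))=p(x)(1-p(x))$,
\[
\E_{\vct\theta}[\widehat\chi_n]\le\big(\E_{\vct\theta}[\widehat\chi_n^2]\big)^{1/2}=\Big(\sum_x\tfrac{\mathrm{Var}(\widehat p_n(x))}{p(x)}\Big)^{1/2}=\sqrt{(V^m-1)/n}.
\]
This delivers the lemma with $C_{\vct\theta}=\sqrt{2c_0^{-m}(V^m-1)}$ (which in fact depends only on $c_0,m,V$).

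I expect the only genuine subtlety to be the variance-to-divergence comparison of the second paragraph. The naive envelope bound $\E_{\vct\theta}[g_{\vct\theta'}^2]\le B^2$ is useless here, because it fails to shrink as $\vct\theta'\to\vct\theta$ — precisely the regime where the $\sqrt{\kl}$ in the denominator is most dangerous. The localized estimate $\E_{\vct\theta}[g_{\vct\theta'}^2]=O(\kl)$, which is exactly what the boundedness from (A1) buys, is what makes the $\sqrt{\kl}$ cancel cleanly. Once that is in hand, the finiteness of $[V]^m$ lets Cauchy--Schwarz factor out all $\vct\theta'$-dependence into the parameter-free quantity $\widehat\chi_n$, so no uniform empirical-process argument is needed and no spurious logarithmic factors arise.
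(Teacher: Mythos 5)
Your proof is correct, and it shares the paper's skeleton --- exploit the finiteness of $[V]^m$ to decouple the $\vct\theta'$-dependence from the empirical fluctuation via a Cauchy--Schwarz/duality step, then compare a quadratic norm of the log-likelihood ratio with $\sqrt{\kl}$ --- but you implement both halves differently and more self-containedly. For the decoupling, the paper pairs $\widehat p_n - p$ with $v_{\vct\theta'}$ in the \emph{unweighted} $\ell_2$ norm and then cites empirical-process results (Talagrand; van der Vaart, Lemma 19.36) to bound $\sup_{\|v\|_2\leq 1}|\frac{1}{n}\sum_i v(X_i)-\mathbb E_{\vct\theta} v(X)|$ by $C/\sqrt n$; you instead use the $1/p$-weighted (chi-square) pairing, so the parameter-free factor $\widehat\chi_n$ admits an exact moment computation, $\mathbb E_{\vct\theta}[\widehat\chi_n^2]=(V^m-1)/n$, with no external theorem. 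For the norm-to-KL comparison, the paper bounds $\|v_{\vct\theta'}\|_2$ coordinatewise by a Taylor expansion of the logarithm and then invokes Pinsker's inequality, whereas you use the identity $\mathbb E_{\vct\theta}[e^{-g_{\vct\theta'}}]=1$ together with $e^{-t}\geq 1-t+\tfrac12 e^{-B}t^2$ on $|t|\leq B$ to get $\mathbb E_{\vct\theta}[g_{\vct\theta'}^2]\leq 2c_0^{-m}\,\kl(p_{\vct\theta,m}\|p_{\vct\theta',m})$; both comparisons rest, as they must, on the uniform lower bound $p_{\vct\theta',m}\geq c_0^m$ from (A1). The net effect is that your argument is entirely elementary, produces the explicit constant $\sqrt{2c_0^{-m}(V^m-1)}$ (depending only on $c_0,m,V$, which the lemma permits), and explicitly disposes of the $\kl=0$ case arising from equivalent parameterizations, which the paper glosses over; the paper's route follows the standard empirical-process template, which would generalize beyond this finite, uniformly-lower-bounded setting where your exact variance computation and exponential-moment identity happen to be available.
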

As a corollary, by Markov's inequality we know that for all $\delta\in(0,1)$, with probability $1-\delta$
\begin{equation*}
\widehat{\kl}_n(p_{\vct\theta,m}\|p_{\vct\theta',m}) \geq \kl(p_{\vct\theta,m}\|p_{\vct\theta',m}) - \sqrt{\kl(p_{\vct\theta,m}\|p_{\vct\theta',m})}\cdot \frac{C_{\vct\theta}}{\delta\sqrt{n}}, \;\;\;\;\forall\vct\theta'\in\Theta_{c_0}.
\end{equation*}
Subsequently, with probability $1-\delta$
\begin{equation}
\inf_{\vct\theta'\in\bar\Theta_{\epsilon,c_0}}\widehat{\kl}_n(p_{\vct\theta,m}\|p_{\vct\theta',m}) > 0 \;\;\Longleftarrow\;\;
\inf_{\vct\theta'\in\bar\Theta_{\epsilon,c_0}}\kl(p_{\vct\theta,m}\|p_{\vct\theta',m}) > \frac{C_{\vct\theta}^2}{\delta^2n}.
\label{eq:sup-convergence}
\end{equation}

We next establish a lower bound on $\kl(p_{\vct\theta,m}\|p_{\vct\theta',m})$ for all $\vct\theta'\in\bar\Theta_{c_0,\epsilon}(\vct\theta)$.
By Pinsker's inequality, we have that for any $\vct\theta'\in\bar\Theta_{c_0,\epsilon}(\vct\theta)$,
$$
\kl(p_{\vct\theta,m}\|p_{\vct\theta',m})
\geq 2d_{\tv}^2(p_{\vct\theta,m};p_{\vct\theta',m}).
$$

Define
$$
\epsilon_0(\vct\theta) := \min\left\{\frac{1}{4},\frac{\sd_{m,\sp(m;\vct\theta)}(\vct\theta)}{2V^m + \sd_{m,\sp(m;\vct\theta)}(\vct\theta)}\right\}.
$$
Invoking Lemma \ref{lem:main} and noting that 
$$
\sd_{m,\sp(m;\vct\theta)}(\vct\theta) - \frac{V^m\epsilon_0(\vct\theta)}{1-\epsilon_0(\vct\theta)} \leq \frac{1}{2}\sd_{m,\sp(m;\vct\theta)}(\vct\theta),
$$
we have for all $0<\epsilon<\epsilon_0(\vct\theta)$ that
$$
\inf_{\vct\theta'\in\bar\Theta_{c_0,\epsilon}(\vct\theta)\backslash\bar\Theta_{c_0,\epsilon_0(\vct\theta)}(\vct\theta)}
\kl(p_{\vct\theta,m}\|p_{\vct\theta',m}) \geq\frac{1}{2}[\sd_{m,\sp(m;\vct\theta)}(\vct\theta)]^2\cdot \epsilon^{2\sp(m;\vct\theta)} =: \gamma_{\vct\theta}\epsilon^{2\sp(m;\vct\theta)},
$$
where $\gamma_{\vct\theta}>0$ is a positive constant independent of $n$ or $\epsilon$.
Furthermore, by Corollary \ref{cor:uniform-min} and the Pinsker's inequality we know that $\inf_{\vct\theta'\in\bar\Theta_{c_0,\epsilon_0(\vct\theta)}(\vct\theta)}\kl(p_{\vct\theta,m}\|p_{\vct\theta',m}) > 0$. Because $\epsilon_0(\vct\theta)$ does not depend on $\epsilon$ or $n$, this infimum must be bounded away from below by a constant
depending only on $\vct\theta,c_0,\nu_0$ and $m$. Subsequently, for sufficiently small $\epsilon>0$ we have
\begin{equation}
\inf_{\vct\theta'\in\bar\Theta_{c_0,\epsilon}(\vct\theta)}
\kl(p_{\vct\theta,m}\|p_{\vct\theta',m}) \geq \gamma_{\vct\theta}'\epsilon^{2\sp(m;\vct\theta)},
\label{eq:kl-lowerbound}
\end{equation}
where $\gamma_{\vct\theta}'$ is a positive constant depending only on $\vct\theta,c_0,\nu_0$ and $m$.

Combining Eqs.~(\ref{eq:empirical-kl}), (\ref{eq:sup-convergence}) and (\ref{eq:kl-lowerbound}) with $\epsilon\asymp n^{-1/2\sp(m;\vct\theta)}$
we complete the proof of convergence rate of the ML estimator.

\noindent\emph{{Proof of lower bound}}.
Let $n$ be sufficiently large such that $r_{\vct\theta} n^{-1/2\sp(m;\vct\theta)} < 1/2$,
where $r_{\vct\theta}$ is the positive constant in the definition of $\Theta_n(\vct\theta)$ that is independent of $n$.
Invoking Lemma \ref{lem:main} we have that
\begin{equation}
\sup_{\vct\theta'\in\Theta_{n}(\vct\theta)} d_{\tv}(p_{\vct\theta,m};p_{\vct\theta',m}) \leq 2V^m\cdot r_{\vct\theta}^{\sp(m;\vct\theta)}n^{-1/2},
\label{eq:tv-upperbound}
\end{equation}
In addition, for all $\vct\theta,\vct\theta'\in\Theta_{c_0}$ the following proposition upper bounds their KL-divergence using TV distance:
\begin{proposition}
There exists a constant $C>0$ depending only on $V,K,\nu_0,c_0$ and $m$ such that, for all $\vct\theta,\vct\theta'\in\Theta_{c_0}$, 
$$
\kl(p_{\vct\theta,m}\|p_{\vct\theta',m}) \leq  C\cdot d_{\tv}^2(p_{\vct\theta,m};p_{\vct\theta',m}).
$$
\label{prop:pinsker-reverse}
\end{proposition}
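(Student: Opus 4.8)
The plan is to prove a \emph{reverse Pinsker} inequality, exploiting the fact that assumption (A1) forces both likelihoods $p_{\vct\theta,m}$ and $p_{\vct\theta',m}$ to be uniformly bounded away from zero on the finite sample space $[V]^m$. Such a reverse bound is false for arbitrary pairs of distributions, so the entire content of the proposition lies in converting the support condition (A1) into a quantitative floor on the mass functions; once that floor is available, the comparison between $\kl$ and $d_{\tv}$ is routine.

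First I would establish the pointwise lower bound. Under (A1) every coordinate satisfies $\theta_k(\ell)>c_0$, so for any mixing vector $h\in\Delta^{K-1}$ we have $p_{\vct\theta,h}(x_i)=\sum_{k=1}^K h_k\theta_k(x_i)>c_0$, using $\sum_k h_k=1$. Taking the product over the $m$ words and then the expectation over $h\sim\nu_0$ gives $p_{\vct\theta,m}(x)=\mathbb E_h[\prod_{i=1}^m p_{\vct\theta,h}(x_i)]\geq c_0^m$ for every $x\in[V]^m$, and the identical bound holds for $p_{\vct\theta',m}$ since $\vct\theta'\in\Theta_{c_0}$ as well.

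Next I would route the argument through the $\chi^2$-divergence. From the elementary inequality $\kl(P\|Q)\leq\log(1+\chi^2(P\|Q))\leq\chi^2(P\|Q)$, which follows by applying Jensen's inequality to the concave $\log$ and using $\mathbb E_P[\mathrm dP/\mathrm dQ]=1+\chi^2(P\|Q)$, it suffices to control $\chi^2(p_{\vct\theta,m}\|p_{\vct\theta',m})=\sum_{x\in[V]^m}(p_{\vct\theta,m}(x)-p_{\vct\theta',m}(x))^2/p_{\vct\theta',m}(x)$. The denominator is at least $c_0^m$ by the previous step, so $\chi^2\leq c_0^{-m}\sum_x(p_{\vct\theta,m}(x)-p_{\vct\theta',m}(x))^2$. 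Writing $a_x=|p_{\vct\theta,m}(x)-p_{\vct\theta',m}(x)|\geq 0$ and using $\sum_x a_x^2\leq(\sum_x a_x)^2$ together with $\sum_x a_x=2d_{\tv}(p_{\vct\theta,m};p_{\vct\theta',m})$, I obtain $\sum_x a_x^2\leq 4d_{\tv}^2$. Chaining the bounds yields $\kl(p_{\vct\theta,m}\|p_{\vct\theta',m})\leq(4/c_0^m)\,d_{\tv}^2(p_{\vct\theta,m};p_{\vct\theta',m})$, so the constant is $C=4c_0^{-m}$, depending only on $c_0$ and $m$ (hence on the allowed quantities $V,K,\nu_0,c_0,m$).

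I expect no genuine obstacle here. The single conceptual point is recognizing that a reverse Pinsker bound is impossible without a lower bound on the mass function, and that (A1) supplies exactly this floor $c_0^m$; this is precisely the role of (A1) flagged in the remark that ``$p_{\vct\theta}(x)$ could be arbitrarily small'' otherwise. After the floor is in hand, the remaining ingredients are only the standard $\kl\leq\chi^2$ comparison and the norm inequality $\|\cdot\|_2\leq\|\cdot\|_1$ applied to the vector of mass differences.
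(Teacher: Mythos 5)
Your proof is correct, and it follows the same overall strategy as the paper's: both convert (A1) into the uniform floor $c_0^m$ on the mass functions over the finite sample space $[V]^m$, bound the KL divergence by a weighted squared $\ell_2$ distance between the mass functions, and finish with the elementary comparison $\sum_x a_x^2 \le \left(\sum_x a_x\right)^2$ relating that quantity to $d_{\tv}^2$. The one genuine difference is the middle step: the paper expands $\log\left(1+\frac{P-Q}{Q}\right)$ to second order and manipulates the resulting integrals to reach $\kl(P\|Q)\le \left(1/c+1/2c^2\right)\sum_x (P(x)-Q(x))^2$ with $c=c_0^m$, whereas you invoke the standard chain $\kl(P\|Q)\le \log\left(1+\chi^2(P\|Q)\right)\le \chi^2(P\|Q)$ via Jensen's inequality and then bound $\chi^2$ by $c_0^{-m}\sum_x(P(x)-Q(x))^2$. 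Your route buys three things: it needs the floor only on the second argument $p_{\vct\theta',m}$ (the $\chi^2$ denominator), it avoids the Taylor-remainder bookkeeping entirely, and it produces the clean explicit constant $C=4c_0^{-m}$; moreover your factor of $4$ correctly accounts for the paper's convention $d_{\tv}=\frac{1}{2}\int|\ud P-\ud Q|$, whereas the paper's final step silently identifies $d_{\tv}$ with the full $\ell_1$ distance, a factor-of-two slip that is harmless for the statement but sloppy. What the paper's version buys in exchange is self-containedness (no appeal to the $\chi^2$ divergence or Jensen) and an explicitly stated general reverse-Pinsker lemma for any two distributions bounded below on a finite domain --- a generality your argument also possesses, just implicitly.
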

At a higher level, Proposition \ref{prop:pinsker-reverse} can be viewed as an ``exact'' reverse of the Pinsker's inequality with matching upper and lower bounds for the KL divergence.
It is not generally valid for arbitrary distributions, but holds true for our particular model with $\vct\theta,\vct\theta'\in\Theta_{c_0}$
because both $p_{\vct\theta,m}$ and $p_{\vct\theta',m}$ are supported and bounded away from below on a finite set.
We give the complete proof of Proposition \ref{prop:pinsker-reverse} in the appendix.

Let $\vct\theta'$ be an arbitrary parameterization in $\Theta_{n}(\vct\theta)$,
and let $p_{\vct\theta,m}^{\otimes n}=p_{\vct\theta,m}\times\cdots\times p_{\vct\theta,m}$ be the $n$-times product measure of $p_{\vct\theta,m}$,
Using Eq.~(\ref{eq:tv-upperbound}), Proposition \ref{prop:pinsker-reverse} and the fact that the KL-divergence is additive for product measures, we have
$$
\kl(p_{\vct\theta,m}^{\otimes n}\|p_{\vct\theta',m}^{\otimes n})
\leq n\cdot\kl(p_{\vct\theta,m}\|p_{\vct\theta',m}) \leq 2V^m\cdot r_{\vct\theta}^{2\sp(m;\vct\theta)}.
$$
Subsequently, using Pinsker's inequality we have
$$
d_{\tv}(p_{\vct\theta,m}^{\otimes n};p_{\vct\theta',m}^{\otimes n}) \leq \sqrt{2V^m\cdot r_{\vct\theta}^{2\sp(m;\vct\theta)}}.
$$
By choosing $r_{\vct\theta} := [8V^m]^{-2\sp(m;\vct\theta)}$ we can upper bound the right-hand side of the above inequality by $1/2$.
Applying Le Cam's inequality we conclude that no statistical procedure can distinguish $\vct\theta$ from $\vct\theta'$ using $n$ observations with success probability higher than $3/4$.
The lower bound is thus proved by Markov's inequality.

\subsection{Proof of Lemma \ref{lem:first-order}}\label{subsec:proof-first-order}

This lemma is essentially a consequence of \citep{anandkumar2014tensor},
which developed a $\sqrt{n}$-consistent estimator for linear independent topics via the method of moments.
More specifically, the main result of \citep{anandkumar2014tensor} can be summarized by the following theorem:
\begin{theorem}
Suppose $2\leq K\leq V$, $m= 3$ and consider the parameter subclass $\Theta_{\sigma_0,c_0} := \{\vct\theta\in\Theta_{c_0}: \sigma_{\min}(\vct\theta)\geq\sigma_0\}$,
where $\sigma_{\min}(\vct\theta) := \inf_{\|w\|_2=1}\|\sum_{j=1}^K{w_j\theta_j}\|_2$ is the least singular value of the topics vectors,
and $\sigma_0>0$ is a positive constant.
Then there exists a (computationally tractable) estimator $\hat{\vct\theta}_n$ such that for all $\vct\theta\in\Theta_{\sigma_0,c_0}$, 
$$
d_{\sW}(\hat{\vct\theta}_n,{\vct\theta}) \leq C_{\sigma_0}\cdot O_\mP(n^{-1/2}),
$$
where $C_{\sigma_0}>0$ is a constant that only depends on $V,K,\nu_0$ and $\sigma_0$.
\label{thm:anandkumar}
\end{theorem}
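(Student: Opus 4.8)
The plan is to establish Theorem~\ref{thm:anandkumar} by the method of moments rather than through the likelihood, following \citep{anandkumar2012spectral,anandkumar2014tensor}. Write $\mathbf\Theta:=[\theta_1,\cdots,\theta_K]\in\mathbb R^{V\times K}$ and encode each word $x_{ij}$ as a one-hot vector $e_{x_{ij}}\in\mathbb R^V$. Since the three words of a document are conditionally i.i.d.\ given $h$ with conditional mean $\mu(h)=\mathbf\Theta h$, the population pair- and triple-moments factor through $\mathbf\Theta$:
\begin{align*}
M_2 &:= \mathbb E[e_{x_1}\otimes e_{x_2}] = \mathbf\Theta\,\mathbb E_{\nu_0}[hh^\top]\,\mathbf\Theta^\top,\\
M_3 &:= \mathbb E[e_{x_1}\otimes e_{x_2}\otimes e_{x_3}] = \mathbb E_{\nu_0}\big[(\mathbf\Theta h)^{\otimes 3}\big].
\end{align*}
By exchangeability, $\mathbb E_{\nu_0}[hh^\top]=(a-b)I_K+b\,\mathbf 1\mathbf 1^\top$ with $a=\mathbb E_{\nu_0}[h_1^2]$ and $b=\mathbb E_{\nu_0}[h_1h_2]$; assumption (A2) gives $a>b$, so this Gram matrix is positive definite, and together with $\sigma_{\min}(\mathbf\Theta)\geq\sigma_0$ it forces $M_2$ to have rank exactly $K$ with smallest nonzero singular value bounded below by a quantity depending only on $\sigma_0$ and $\nu_0$.

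First I would \emph{whiten}: compute $W\in\mathbb R^{V\times K}$ with $W^\top M_2 W=I_K$, so that $A:=W^\top\mathbf\Theta\,(\mathbb E_{\nu_0}[hh^\top])^{1/2}$ is an orthogonal $K\times K$ matrix and $W^\top\mathbf\Theta$ is well-conditioned, with $\|W\|$ and $\|W^\dagger\|$ controlled by $\sigma_0$ and $\nu_0$ via the spectral bound above. In the special case where $\nu_0$ admits corrected moments with a diagonal core (e.g.\ the Dirichlet model treated in \citep{anandkumar2014tensor}), the whitened tensor is orthogonally decomposable and its components are extracted by the robust tensor power method. For a general exchangeable $\nu_0$ the whitened core $\mathbb E_{\nu_0}[\tilde h^{\otimes 3}]$, $\tilde h:=(\mathbb E_{\nu_0}[hh^\top])^{-1/2}h$, need not be diagonal, so I would instead contract the whitened $M_3$ against two random directions and recover the $K$ components by Jennrich's simultaneous diagonalization; this needs only full column rank of the factor and a generalized-eigenvalue gap, both of which hold for the fixed, known $\nu_0$ outside a measure-zero set of contraction directions. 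Un-whitening and renormalizing onto the simplex then returns $\hat\theta_1,\cdots,\hat\theta_K$ up to the permutation ambiguity that $d_\sW$ already quotients out.

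The rate comes from a concentration-plus-perturbation argument. The empirical moments $\hat M_2,\hat M_3$ obtained by averaging the one-hot encodings over the $n$ documents are means of i.i.d.\ bounded tensors, so $\|\hat M_2-M_2\|$ and $\|\hat M_3-M_3\|$ are $O_\mP(n^{-1/2})$ by a matrix/tensor Bernstein inequality. Feeding $\hat M_2,\hat M_3$ into the pipeline above, the stability bounds for whitening and for the power method / simultaneous diagonalization are Lipschitz in the moment perturbation, with Lipschitz constant governed by the condition numbers from the previous step—hence by $\sigma_0$ and $\nu_0$ only. Propagating the $n^{-1/2}$ moment error through these bounds yields $d_\sW(\hat{\vct\theta}_n,\vct\theta)\leq C_{\sigma_0}\cdot O_\mP(n^{-1/2})$ with $C_{\sigma_0}$ depending only on $V,K,\nu_0,\sigma_0$, as claimed.

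The hard part is the perturbation analysis of the decomposition step, not the moment bookkeeping: one must show that the composite map (whitening $\to$ eigen-extraction $\to$ un-whitening) is Lipschitz-stable with a constant depending only on $\sigma_0$ and the fixed $\nu_0$, uniformly over $\Theta_{\sigma_0,c_0}$. This rests on controlling a spectral gap (so the recovered components stay well separated) and the conditioning of the whitening map, which is exactly where $\sigma_{\min}(\mathbf\Theta)\geq\sigma_0$ is indispensable—without it the decomposition becomes ill-posed and $C_{\sigma_0}$ diverges. A secondary subtlety, absent in the Dirichlet case of \citep{anandkumar2014tensor}, is that a general exchangeable $\nu_0$ has a non-diagonal core, forcing the route through simultaneous diagonalization; certifying the genericity of a random contraction for the given $\nu_0$ is the step that most needs care.
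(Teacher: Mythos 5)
You should first note that the paper does not actually prove Theorem~\ref{thm:anandkumar}: it is stated as a summary of the main result of \citep{anandkumar2014tensor}, together with the remark that assumption (A2) can stand in for the Dirichlet assumption made there. Your proposal is therefore a from-scratch reconstruction of that cited argument, and its overall architecture (moment identities, whitening, $n^{-1/2}$ concentration of $\hat M_2,\hat M_3$, Lipschitz perturbation of the decomposition with constants controlled by $\sigma_0$ and $\nu_0$) matches the method-of-moments route. However, there is a genuine gap at the decomposition step. The raw third moment $M_3=\mathbb E[(\mat\Theta h)^{\otimes 3}]=\sum_{j,k,l}\mathbb E_{\nu_0}[h_jh_kh_l]\,\theta_j\otimes\theta_k\otimes\theta_l$ is a Tucker-form tensor whose core $\mathbb E_{\nu_0}[h^{\otimes 3}]$ is \emph{not} diagonal whenever the prior puts mass off the vertices of the simplex (for any Dirichlet, $\mathbb E[h_1^2h_2]>0$); it is not a rank-$K$ CP tensor in the topics. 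Consequently a contraction takes the form $M_3(I,I,w)=\mat\Theta\,C(w)\,\mat\Theta^\top$ with $C(w)_{jk}=\sum_l(\mat\Theta^\top w)_l\,\mathbb E[h_jh_kh_l]$ non-diagonal, and the generalized eigenvectors of the pencil $\bigl(M_3(I,I,w_1),\,M_3(I,I,w_2)\bigr)$ are of the form $\mat\Theta u_i$, where $u_i$ are eigenvectors of $C(w_1)C(w_2)^{-1}$ --- unknown linear combinations of the topics rather than the topics themselves. This failure is structural, not a genericity issue, so no choice of random contraction directions rescues Jennrich's algorithm applied to the raw (whitened) $M_3$; the same obstruction defeats the tensor power method on raw moments, even in the Dirichlet case.

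The missing idea --- and the precise content of the paper's remark that (A2) suffices --- is the \emph{moment correction} step of \citep{anandkumar2012spectral,anandkumar2014tensor}, which does generalize beyond Dirichlet. By exchangeability $\mathbb E_{\nu_0}[h]=\mathbf 1/K$, so $S:=\sum_k\theta_k=K\,\mathbb E[e_{x_1}]$ is estimable, and with the known scalars $a=\mathbb E[h_1^2]$, $b=\mathbb E[h_1h_2]$, $c_3=\mathbb E[h_1^3]$, $c_{21}=\mathbb E[h_1^2h_2]$, $c_{111}=\mathbb E[h_1h_2h_3]$ one can form (for $K\geq 3$)
\begin{align*}
\tilde M_2 &:= M_2 - b\,SS^\top = (a-b)\sum_{k=1}^K\theta_k\theta_k^\top,\\
\tilde M_3 &:= M_3 - \frac{c_{21}-c_{111}}{a-b}\,\mathrm{sym}\bigl(\tilde M_2\otimes S\bigr) - c_{111}\,S^{\otimes 3} = \bigl(c_3-3c_{21}+2c_{111}\bigr)\sum_{k=1}^K\theta_k^{\otimes 3},
\end{align*}
where $\mathrm{sym}$ sums over the three tensor positions. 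The two inequalities in (A2) are exactly the statements that the resulting CP weights $a-b$ and $c_3-3c_{21}+2c_{111}$ are strictly positive (a slightly different correction, using only $a>b$, handles $K=2$). Once you whiten $\tilde M_3$ by $\tilde M_2$ the core \emph{is} diagonal, and either the robust tensor power method or your Jennrich step applies verbatim; your concentration and perturbation arguments, with empirical plug-ins for $M_1,M_2,M_3$ inside the corrections, then deliver the claimed $C_{\sigma_0}\cdot O_\mP(n^{-1/2})$ rate. In short: your pipeline is correct in outline, but the correction step must be inserted before decomposing --- raw moments cannot be decomposed, for Dirichlet or for any other admissible $\nu_0$ with correlated coordinates.
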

We remark that the original paper of \citep{anandkumar2014tensor} only considered the case where $\nu_0$ is the Dirichlet distribution.
However, our assumption (A2) is sufficient for the success of their proposed algorithms and analysis.

Next consider any $\vct\theta\in\Theta_{c_0}$ such that $\{\theta_k\}_{k=1}^K$ are linear independent.
Define $\sigma_{\vct\theta} := \sigma_{\min}(\vct\theta)/2 > 0$.
The ``shrinking neighborhood'' $\Theta_n(\vct\theta)$ defined in Theorem \ref{thm:main} is then contained in $\Theta_{\sigma_{\vct\theta},c_0}$
for sufficiently large $n$.
Let $\vct\theta'\in\Theta_n(\vct\theta)\subseteq\Theta_{\sigma_{\vct\theta},c_0}$ be such that $d_{\sW}(\vct\theta,\vct\theta')=\Omega( n^{-1/2\sp(3;\vct\theta)})$.
If $\sp(3;\vct\theta)=1$ we already proved $\sd_{3,1}(\vct\theta)>0$.
On the other hand, if $\sp(3;\vct\theta)>1$ we know that $d_{\sW}(\vct\theta,\vct\theta')=\Omega(n^{-1/4})$.
By Theorem \ref{thm:anandkumar}, there exists a statistical procedure that can distinguish $\vct\theta$ from $\vct\theta'$ with success probability arbitrarily close to 1 for sufficiently large $n$,
which violates the lower bound in Theorem \ref{thm:main} (Remark \ref{rem:test}).
Thus, it is concluded that $\sp(3;\vct\theta)=1$ and therefore $\sd_{3,1}(\vct\theta)>0$.

\subsection{Proof of Lemma \ref{lem:overfit}}\label{subsec:proof-overfit}

Consider $\vct\delta=(\delta_1,\cdots,\delta_K)$ with $\delta_j=\frac{1}{4}(e_1-e_2)$, $\delta_k=\frac{1}{4}(e_2-e_1)$ and $\delta_\ell=0$ for all $\ell\neq j,k$,
where $e_1=(1,0,\cdots,0)$ and $e_2=(0,1,0,\cdots,0)$ are standard basis vectors in $\mathbb R^V$.
Clearly $\|\vct\delta\|_1=1$ and $\sum_{\ell=1}^V{\delta_j(\ell)}=0$ for all $j\in[K]$.
Define $p_{\vct\theta,h}(x_{-i}) := \prod_{j\neq i}p_{\vct\theta,h}(x_j)$.
We then have, for arbitrary $x=(x_1,\cdots,x_{m})\in\mathcal X^{m}$, 
\begin{multline*}
\left|\mathbb E_hp_{\vct\theta,h}(x)\sum_{i=1}^{m}{\frac{\vct\delta_h(x_i)}{p_{\vct\theta,h}(x_i)}}\right|
= \left|\sum_{i=1}^{m}\sum_{\ell=1}^k{\delta_\ell(x_i) \mathbb E_h\left[h_\ell p_{\vct\theta,h}(x_{-i})\right]}\right|\\
\leq \frac{1}{2}\sum_{i=1}^{m}{\vct 1_{[x_i\in\{1,2\}]}\big|\mathbb E_h[h_j p_{\vct\theta,h}(x_{-i})] - \mathbb E_h[h_kp_{\vct\theta,h}(x_{-i})]\big|}.
\end{multline*} 
Because $\nu_0$ is exchangeable and $\theta_j=\theta_k$, we have that $\mathbb E_h[h_jp_{\vct\theta,h}(x_{-i})]=\mathbb E_h[h_kp_{\vct\theta,h}(x_{-i})]$
for all $x_{-i}\in\mathcal X^{m-1}$.
Thus, $\sd_{m,1}(\vct\theta)=0$.

\subsection{Proof of Lemma \ref{lem:first-order-m2}}\label{subsec:proof-first-order-m2}

\noindent\emph{Proof of the ``IF'' part}. Let $\theta_1,\theta_2$ and $\theta_3$ be any three topic vectors in $\vct\theta$.
We assume $\theta_1,\theta_2,\theta_3$ are distinct, because otherwise $\sd_{2,1}(\vct\theta)=0$ is already implied by Lemma \ref{lem:overfit}.
Consider $\vct\delta=(\delta_1,\cdots,\delta_K)$ defined as
\begin{eqnarray*}
\delta_1 &:=& (\theta_2-\theta_3)/6;\\
\delta_2 &:=& (\theta_3-\theta_1)/6;\\
\delta_3 &:=& (\theta_1-\theta_2)/6;\\
\delta_k &:=& 0, \;\;\;\;\forall 3<k\leq K.
\end{eqnarray*}
It is easy to verify that $\|\vct\delta\|_1=1$ and $\sum_{\ell=1}^V{\delta_k(\ell)}=0$ for all $k\in[K]$.
We then have, for any $x=(x,y)\in\mathcal X^2$, 
\begin{equation}
\mathbb E_hp_{\vct\theta,h}(x,y)\left[\frac{\vct\delta_h(x)}{p_{\vct\theta,h}(x)} + \frac{\vct\delta_h(y)}{p_{\vct\theta,h}(y)}\right]
= \mathbb E_h[\vct\delta_h(x)p_{\vct\theta,h}(y)] + \mathbb E_h[\vct\delta_h(y)p_{\vct\theta,h}(x)]\label{eq:cancel-intermediate1}
\end{equation}
By definition of $\vct\delta$, we have that $6\vct\delta_h(x) = \theta_1(x)(h_3-h_2) + \theta_2(x)(h_1-h_3) + \theta_3(x)(h_2-h_1)$.
Define $\beta := (\mathbb E_{\nu_0}[h_1^2] - \mathbb E_{\nu_0}[h_1h_2])/6$.
We then have
\begin{multline}
\mathbb E_h[\vct\delta_h(x)p_{\vct\theta,h}(y)] \\
= \beta\theta_1(x)(\theta_3(y)-\theta_2(y)) 
+ \beta\theta_2(x)(\theta_1(y)-\theta_3(y)) + \beta\theta_3(x)(\theta_2(y)-\theta_1(y)).
\label{eq:cancel-intermediate2}
\end{multline}
Similarly, 
\begin{multline}
\mathbb E_h[\vct\delta_h(y)p_{\vct\theta,h}(x)]\\
 = \beta\theta_1(y)(\theta_3(x)-\theta_2(x))
+ \beta\theta_2(y)(\theta_1(x)-\theta_3(x)) + \beta\theta_3(y)(\theta_2(x)-\theta_1(x)).
\label{eq:cancel-intermediate3}
\end{multline}
Comparing Eqs.~(\ref{eq:cancel-intermediate2},\ref{eq:cancel-intermediate3}) we note that 
$$\mathbb E_h[\vct\delta_h(x)p_{\vct\theta,h}(y)]=-\mathbb E_h[\vct\delta_h(y)p_{\vct\theta,h}(x)]$$
for all $(x,y)\in\mathcal X^2$, which means that the right-hand side of Eq.~(\ref{eq:cancel-intermediate1}) is always 0.
Therefore, $\sd_{2,1}(\vct\theta)=0$.

\noindent\emph{Proof of the ``ONLY IF'' part}.
We show that if $K=2$ and $\theta_1\neq\theta_2$ then $\sd_{2,1}(\vct\theta)>0$.
Define $\beta := \mathbb E_{\nu_0}[h_1h_2]$ and $\gamma := \mathbb E_{\nu_0}[h_1^2]-\mathbb E_{\nu_0}[h_1h_2]$.
By (A2) we have that $\gamma > 0$.
We then have
\begin{align*}
\mathbb E[\vct\delta_h(x)p_{\vct\theta,h}(y)]
&= \mathbb E[(h_1\delta_1(x)+h_2\delta_2(x))(h_1\theta_1(y)+h_2\theta_2(y))]\\
&= \delta_1(x)[\beta\bar\theta(y)+\gamma\theta_1(y)]+\delta_2(x)[\beta\bar\theta(y)+\gamma\theta_2(y)],
\end{align*}
where $\bar\theta(y) := \theta_1(y)+\theta_2(y)$.
Similarly, 
$$
\mathbb E[\vct\delta_h(y)p_{\vct\theta,h}(x)] = \delta_1(y)[\beta\bar\theta(x)+\gamma\theta_2(x)] + \delta_2(y)[\beta\bar\theta(x)+\gamma\theta_2(x)].
$$
We can then simplify Eq.~(\ref{eq:cancel-intermediate1}) as
\begin{align*}
T_{\vct\theta,x,y}(\vct\delta) 
&:= \mathbb E_hp_{\vct\theta,h}(x,y)\left[\frac{\vct\delta_h(x)}{p_{\vct\theta,h}(x)} + \frac{\vct\delta_h(y)}{p_{\vct\theta,h}(y)}\right]\\
&= \delta_1(x)[\beta\bar\theta(y)+\gamma\theta_1(y)]+\delta_2(x)[\beta\bar\theta(y)+\gamma\theta_2(y)]\\
&+ \delta_1(y)[\beta\bar\theta(x)+\gamma\theta_1(x)]+\delta_2(y)[\beta\bar\theta(x)+\gamma\theta_2(x)].
\end{align*}
%


Assume by way of contradiction that $\sd_{2,1}(\vct\theta)=0$, which implies the existence of $\vct\delta\neq 0$, $\sum_{\ell=1}^V{\delta_j(\ell)}=0$ such that
$T_{\vct\theta,x,y}(\vct\delta)=0$ for all $x,y\in[V]$.
We then have
\begin{equation}
\mat B_1\delta_1 + \mat B_2\delta_2=0,
\label{eq:linear-system}
\end{equation}
where $\mat B_1=(b_{11},\cdots,b_{1V})$ and $\mat B_2=(b_{21},\cdots,b_{2V})$ are $K\times (V^2+2)$ matrices.
Furthermore, $b_{j\ell}$ for $j\in\{1,2\}$ and $\ell\in[V]$ can be explicitly formed as
\begin{align*}
b_{j\ell} = (\beta\bar\theta + \gamma\theta_j)(\vct e_{\ell\cdot}+\vct e_{\cdot\ell}) + \mu_{j\ell}\vct e_{\ell\ell}
\end{align*}
where $\mu_{j\ell} = \beta\bar\theta(\ell)+\gamma\theta_j(\ell)$ and
$\{\vct e_{\ell\ell'}\}_{\ell,\ell'=1}^V$ denotes the $V^2$ components of $b_{j\ell}$.
Subsequently,
\begin{align*}
\mat B_1\delta_1 + \mat B_2\delta_2
&= \sum_{\ell=1}^V{\delta_1(\ell)b_{1\ell}} + \sum_{\ell=1}^V{\delta_2(\ell)b_{2\ell}}\\
&=  \sum_{\ell=1}^V{\vct e_{\ell\cdot}\left[\sum_{j=1,2}\delta_j(\ell)(\beta\bar\theta+\gamma\theta_j) + \mu_{j\ell}\delta_j\right]};
\end{align*}
therefore,
\begin{equation}
\sum_{j=1,2}\delta_j(\ell)(\beta\bar\theta+\gamma\theta_j) + \mu_{j\ell}\delta_j = 0, \;\;\;\;\;\forall\ell\in[V].
\label{eq:cancel-intermediate4}
\end{equation}

We next state a technical proposition that will be proved in the appendix,
which shows that $\delta_1$ and $\delta_2$ can be expressed as linear combinations of $\beta\bar\theta+\gamma\theta_1$ and $\beta\bar\theta+\gamma\theta_2$:
\begin{proposition}
There exists $\xi_{11},\xi_{12},\xi_{21},\xi_{22}\in\mathbb R$ such that $\delta_1=\xi_{11}(\beta\bar\theta+\gamma\theta_1)+\xi_{12}(\beta\bar\theta+\gamma\theta_2)$ and
$\delta_2 = \xi_{21}(\beta\bar\theta_1+\gamma\theta_2)+\xi_{22}(\beta\bar\theta_1+\gamma\theta_2)$.
\label{prop:xi}
\end{proposition}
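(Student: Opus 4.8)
The plan is to show that both perturbation vectors $\delta_1,\delta_2$ are forced to lie in the two–dimensional subspace $W:=\SPAN\{v_1,v_2\}$, where I abbreviate $v_j:=\beta\bar\theta+\gamma\theta_j\in\R^V$ for $j\in\{1,2\}$; membership in $W$ is exactly the existence of the coefficients $\xi_{jk}$ claimed in the proposition. The starting point is the key observation that the coefficients in Eq.~(\ref{eq:cancel-intermediate4}) collapse: by definition $\mu_{j\ell}=\beta\bar\theta(\ell)+\gamma\theta_j(\ell)=v_j(\ell)$, so for every $\ell\in[V]$ the identity in Eq.~(\ref{eq:cancel-intermediate4}) becomes
\begin{equation*}
\big[v_1(\ell)\delta_1+v_2(\ell)\delta_2\big]+\big[\delta_1(\ell)v_1+\delta_2(\ell)v_2\big]=0 .
\end{equation*}

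First I would apply the orthogonal projection $P$ onto $W^\perp$. Since $Pv_1=Pv_2=\mat 0$, the second bracket is annihilated and the identity reduces to $v_1(\ell)\,P\delta_1+v_2(\ell)\,P\delta_2=\mat 0$ for all $\ell\in[V]$. Writing $a:=P\delta_1$ and $b:=P\delta_2$, this says precisely that the rank–at–most–two matrix $a v_1^\top+b v_2^\top\in\R^{V\times V}$ vanishes, equivalently $[a\,|\,b]\,[v_1\,|\,v_2]^\top=\mat 0$. To conclude $a=b=\mat 0$ it then suffices to prove that $v_1,v_2$ are linearly independent: if $[v_1\,|\,v_2]$ has full column rank we may pick two indices $\ell,\ell'$ for which the rows $(v_1(\ell),v_2(\ell))$ and $(v_1(\ell'),v_2(\ell'))$ span $\R^2$, and the resulting invertible $2\times2$ system forces $a=b=\mat 0$, i.e.\ $\delta_1,\delta_2\in W$.

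The remaining step is the linear independence of $v_1,v_2$, and this is where assumption (A2) and the distinctness hypothesis enter. I would write $v_1=(\beta+\gamma)\theta_1+\beta\theta_2$ and $v_2=\beta\theta_1+(\beta+\gamma)\theta_2$, whose change-of-basis matrix has determinant $(\beta+\gamma)^2-\beta^2=\gamma(\gamma+2\beta)$. By (A2), $\gamma=\mathbb E_{\nu_0}[h_1^2]-\mathbb E_{\nu_0}[h_1h_2]>0$ and $\gamma+2\beta=\mathbb E_{\nu_0}[h_1^2]+\mathbb E_{\nu_0}[h_1h_2]>0$, so this determinant is strictly positive and $\SPAN\{v_1,v_2\}=\SPAN\{\theta_1,\theta_2\}$. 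Finally, $\theta_1,\theta_2$ are distinct probability vectors, each summing to $1$; a proportionality $\theta_1=c\theta_2$ would force $c=1$ and hence $\theta_1=\theta_2$, a contradiction, so $\theta_1,\theta_2$ and therefore $v_1,v_2$ are linearly independent. This yields $\delta_1,\delta_2\in\SPAN\{v_1,v_2\}$, which is the assertion of Proposition \ref{prop:xi}.

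The main obstacle is purely conceptual rather than computational: it is recognizing the coefficient collapse $\mu_{j\ell}=v_j(\ell)$, which converts the otherwise unwieldy per-coordinate vector system into the single outer-product identity $av_1^\top+bv_2^\top=\mat 0$ that the projection argument exploits; once this is seen, the rest is a one-line determinant computation driven by (A2). I note that Proposition \ref{prop:xi} is not the end of the ``ONLY IF'' argument but its engine: substituting the representation $\delta_j=\sum_k\xi_{jk}v_k$ back into Eq.~(\ref{eq:cancel-intermediate4}) together with the constraints $\sum_\ell\delta_j(\ell)=0$ over-determines the four scalars $\xi_{jk}$ and should force $\vct\delta=0$, contradicting the assumption $\sd_{2,1}(\vct\theta)=0$ and thereby establishing $\sd_{2,1}(\vct\theta)>0$ when $K=2$ and $\theta_1\neq\theta_2$.
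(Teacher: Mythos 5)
Your proof is correct, and it rests on the same two pillars as the paper's own argument: the coefficient identification $\mu_{j\ell}=v_j(\ell)$, where $v_j:=\beta\bar\theta+\gamma\theta_j$, and the rank-two property of the $V\times 2$ matrix $[v_1\,|\,v_2]$. The difference is in how that rank is exploited. The paper rearranges Eq.~(\ref{eq:cancel-intermediate4}) to conclude that each vector $u_\ell:=\mu_{1\ell}\delta_1+\mu_{2\ell}\delta_2$ lies in $W:=\SPAN\{v_1,v_2\}$, and then uses ``row rank equals column rank'' to express $\delta_1,\delta_2$ as linear combinations of the $u_\ell$, giving $\SPAN\{\delta_1,\delta_2\}\subseteq\SPAN\{u_\ell\}_{\ell}\subseteq W$. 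You instead project the identity onto $W^\perp$, which annihilates the $\delta_1(\ell)v_1+\delta_2(\ell)v_2$ part and leaves the outer-product identity $(P\delta_1)v_1^\top+(P\delta_2)v_2^\top=\mat 0$, then invert a $2\times 2$ subsystem of independent rows to force $P\delta_1=P\delta_2=0$. These are dual formulations of the same linear algebra, so neither buys a shorter argument, though yours avoids the intermediate double span inclusion and is arguably cleaner. The genuine value you add is the proof that $v_1$ and $v_2$ are linearly independent: the paper invokes this fact both in the main text and inside its appendix proof but never establishes it, whereas you derive it from the change-of-basis determinant $(\beta+\gamma)^2-\beta^2=\gamma(\gamma+2\beta)>0$ (positive by (A2) together with $\beta=\E_{\nu_0}[h_1h_2]\geq 0$) plus the observation that two distinct probability vectors, having equal sums, cannot be proportional. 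Since the proposition collapses without this independence, filling that gap makes your write-up more complete than the paper's.
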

Substituting the expression of $\delta_1$ and $\delta_2$ in Proposition \ref{prop:xi} into Eq.~(\ref{eq:cancel-intermediate4}), we have
\begin{equation}
\sum_{j=1,2}(\beta\bar\theta+\gamma\theta_j)\left[\sum_{k=1,2}\mu_{k\ell}(\xi_{jk}+\xi_{kj})\right] = 0, \;\;\;\;\;\forall \ell\in[V].
\label{eq:cancel-intermediate5}
\end{equation}
Because $\beta\bar\theta+\gamma\theta_1$ and $\beta\bar\theta+\gamma\theta_2$ are linear independent if $\gamma>0$ and $\theta_1\neq\theta_2$,
it must hold that $\sum_{k=1,2}\mu_{k\ell}(\xi_{jk}+\xi_{kj})=0$ for all $j=1,2$ and $\ell\in[V]$.
Recall that $\mu_{k\ell} = \beta\bar\theta(\ell)+\gamma\theta_k(\ell)$. Subsequently, for $j=1,2$ we have
$$
\sum_{k=1,2}(\xi_{jk}+\xi_{kj})(\beta\bar\theta+\gamma\theta_k) = 0.
$$
Using again the fact that $\beta\bar\theta+\gamma\theta_1$ and $\beta\bar\theta+\gamma\theta_2$ are linear independent,
we conclude $\xi_{jk}+\xi_{kj}=0$ for all $k=1,2$.
Thus, $\xi_{11}=\xi_{22}=0$ and $\xi_{12}=-\xi_{21}$.
On the other hand, because $\SUM(\delta_1)=\SUM(\delta_2)=0$ and $\SUM(\beta\bar\theta+\gamma\theta_1)=\SUM(\beta\bar\theta+\gamma\theta_2)=\beta+\gamma>0$,
where $\SUM(z):=\sum_{\ell=1}^V{z(\ell)}$,
we must have $\xi_{11}+\xi_{12}=\xi_{21}+\xi_{22}=0$, and hence Eq.~(\ref{eq:linear-system}) only has the trivial solution $\delta_1=\delta_2=0$.
Thus, $\sd_{2,1}(\vct\theta)=0$.

\subsection{Proof of Lemma \ref{lem:second-order}}\label{subsec:proof-second-order}

For any $\ell\in[V]$ consider $x=(x_1,x_2)\in[V]^2$ where $x_1=x_2=\ell$.
Because of (A1), $p_{\vct\theta,h}(x)>0$ for all $h\in\Delta^{K-1}$.
Subsequently, 
\begin{align*}
\mathbb E_hp_{\vct\theta,h}(x)\frac{\vct\delta_h(x_1)\vct\delta_h(x_2)}{p_{\vct\theta,h}(x_1)p_{\vct\theta,h}(x_2)}
&= \mathbb E_h \left[\vct\delta_h(x_1)\vct\delta_h(x_2)\right] 
= \mathbb E_h\left[\left(\sum_{j=1}^k{h_j\delta_j(\ell)}\right)^2\right]\\
&= \mathbb E[h_1h_2] \left(\sum_{j=1}^k{\delta_j(\ell)}\right)^2 
+ (\mathbb E[h_1^2]-\mathbb E[h_1h_2]) \sum_{j=1}^k{\delta_j(\ell)^2}\\
&\geq c(\nu_0)\sum_{j=1}^k\delta_j(\ell)^2.
\end{align*}
Here in the last line we use the fact that $\nu_0$ is exchangeable and the definition that $c(\nu_0) = \mathbb E_{\nu_0}[h_1^2-h_1h_2] > 0$.
Subsequently, for every $\vct\delta$ satisfying $\|\vct\delta\|_1=1$, it holds that
\begin{align*}
\sd_{2,2}(\vct\theta)
&\geq V^{-2}\sum_{\ell=1}^V\left|\mathbb E_hp_{\vct\theta,h}(\ell,\ell)\frac{\vct\delta_h(\ell)^2}{p_{\vct\theta,h}(\ell)^2}\right|\\
&\geq V^{-2}\cdot c(\nu_0)\sum_{j=1}^k\sum_{\ell=1}^V\delta_j(k)^2\\
&\geq V^{-2}\cdot c(\nu_0)\cdot \frac{(\sum_{j=1}^k\sum_{\ell=1}^V |\delta_j(k)|)^2}{VK}
= \frac{c(\nu_0)\|\vct\delta\|_1^2}{V^3 K} = \frac{c(\nu_0)}{V^3 K}.
\end{align*}

\appendix

\section{Missing proofs}

We give missing proofs of technical lemmas in this appendix.

\subsection{Proof of Lemma \ref{lem:sup-convergence}}

For any $\vct\theta'\in\Theta_{c_0}$ define a $V^m$-dimensional random vector $v_{\vct\theta'}$ as
$v_{\vct\theta'}(x) := \log\frac{p_{\vct\theta,m}(x)}{p_{\vct\theta',m}(x)}$ for $x\in[V]^m$.
We then have that $\widehat{\kl}_n(p_{\vct\theta,m}\|p_{\vct\theta',m}) = \frac{1}{n}\sum_{i=1}^n{v_{\vct\theta'}(X_i)}$
and $\kl(p_{\vct\theta,m}\|p_{\vct\theta',m}) = \mathbb E_{\vct\theta}[v_{\vct\theta'}(X)]$.
By a simple re-scaling argument, we have that
\begin{equation}
\mathbb E_{\vct\theta}\sup_{\vct\theta'\in\Theta_{c_0}}\frac{|\widehat{\kl}_n(p_{\vct\theta,m}\|p_{\vct\theta',m})-\kl(p_{\vct\theta,m}\|p_{\vct\theta',m})|}{\|v_{\vct\theta'}\|_2}
\leq \mathbb E_{\vct\theta}\sup_{\|v\|_2=1}\left|\frac{1}{n}\sum_{i=1}^n{v(X_i)} - \mathbb E_{\vct\theta}[v(X)]\right|.
\label{eq:sup-convergence-1}
\end{equation}
Consider the unit $V^m$-dimensional $\ell_2$ ball $\mathbb B_2(V^m) := \{z\in\mathbb R^{V^m}: \|z\|_2\leq 1\}$.
Using standard empirical process theory (e.g., \citep[Lemma 19.36]{van1998asymptotic}, \citep[Theorem 1.1]{talagrand1994sharper}) we have 
\begin{equation}
\mathbb E_{\vct\theta}\sup_{\|v\|_2\leq 1}\sqrt{n}\left|\frac{1}{n}\sum_{i=1}^n{v(X_i)} - \mathbb E_{\vct\theta}[v(X)]\right| \leq C,
\label{eq:sup-convergence-2}
\end{equation}
where $C>0$ is a constant that only depends on $V$ and $m$.
In addition, because $\vct\theta,\vct\theta'\in\Theta_{c_0}$ we know that both $p_{\vct\theta,m}$ and $p_{\vct\theta',m}$ are lower bounded by $c_0^m$ uniformly on $[V]^m$;
hence, for any $\vct\theta'\in\Theta_{c_0}$, using second-order Taylor expansion of the logarithm we have
\begin{align}
\|v_{\vct\theta'}\|_2 
&\leq V^{m/2}\max_{x\in [V]^m}\left|\log\frac{p_{\vct\theta,m}(x)}{p_{\vct\theta',m}(x)}\right| \leq V^{m/2}\max_{x\in[V]^m} 2c_0^{-2m}\big|p_{\vct\theta,m}(x)-p_{\vct\theta',m}(x)\big|\nonumber\\
&\leq 2V^{m/2}c_0^{-2m}\cdot d_{\tv}(p_{\vct\theta,m};p_{\vct\theta',m}) \leq \sqrt{2}V^{m/2}c_0^{-2m}\cdot \sqrt{\kl(p_{\vct\theta,m}\|p_{\vct\theta',m})}.
\label{eq:sup-convergence-3}
\end{align}
Here the last inequality holds by Pinsker's inequality.
Combining Eqs.~(\ref{eq:sup-convergence-1},\ref{eq:sup-convergence-2},\ref{eq:sup-convergence-3}) we complete the proof of Lemma \ref{lem:sup-convergence}.

\subsection{Proof of Proposition \ref{prop:continuous}}

By definition, for fixed $c_0,m$ and any two $\vct\theta,\theta'\in\Theta_{c_0}$, we have
\begin{align*}
d_{\tv}(p_{\vct\theta,m};p_{\vct\theta',m})
&= \int_{\mathcal X^m}\big|p_{\vct\theta}(x)-p_{\vct\theta'}(x)\big|\ud\mu_m(x)\\
&\leq V^m\cdot \max_{x\in\mathcal X^m} \big|p_{\vct\theta}(x)-p_{\vct\theta'}(x)\big|\\
&= V^m\cdot \max_{x\in\mathcal X^m}\left|\int_{\Delta^{K-1}} [p_{\vct\theta,h}(x)-p_{\vct\theta',h}(x)]\ud\nu_0(h)\right|\\
&\leq V^m\cdot\max_{x\in\mathcal X^m}\sup_{h\in\Delta^{K-1}}\big|p_{\vct\theta,h}(x)-p_{\vct\theta',h}(x)\big|\\
&= V^m\cdot \max_{x\in\mathcal X^m}\sup_{h\in\Delta^{K-1}}\bigg| \prod_{i=1}^m\left(\sum_{j=1}^Kh_j\theta_j(x_i)\right) - \prod_{i=1}^m\left(\sum_{j=1}^Kh_j\theta_j'(x_i)\right)\bigg|\\
&\leq V^m\cdot \max_{x\in\mathcal X^m}\sup_{h\in\Delta^{K-1}}\sum_{j_1,\cdots,j_m=1}^Kh_{j_1}\cdots h_{j_m} \big|\theta_{j_1}(x_1)\cdots\theta_{j_m}(x_m) - \theta_{j_1}'(x_1)\cdots\theta_{j_m}'(x_m)\big|\\
&\leq V^m\cdot \max_{x\in\mathcal X^m}\sup_{h\in\Delta^{K-1}}\max_{j_1,\cdots,j_m\in[K]} \big|\theta_{j_1}(x_1)\cdots\theta_{j_m}(x_m) - \theta_{j_1}'(x_1)\cdots\theta_{j_m}'(x_m)\big|\\
&\;\;\;\;\cdot \left(\sum_{j_1,\cdots,j_m=1}^K h_{j_1}\cdots h_{j_m}\right)\\
&= V^m\cdot \max_{x\in\mathcal X^m}\sup_{h\in\Delta^{K-1}}\max_{j_1,\cdots,j_m\in[K]} \big|\theta_{j_1}(x_1)\cdots\theta_{j_m}(x_m) - \theta_{j_1}'(x_1)\cdots\theta_{j_m}'(x_m)\big|.
\end{align*} 
Here the last inequality holds because $\sum_{j=1}^Kh_j = 1$.
Furthermore,
because $\theta_j(x),\theta_j'(x)\in(0,1]$, we have
\begin{align*}
&\max_{j_1,\cdots,j_m\in[K]} \big|\theta_{j_1}(x_1)\cdots\theta_{j_m}(x_m) - \theta_{j_1}'(x_1)\cdots\theta_{j_m}'(x_m)\big|\\
&\leq \max_{j_1,\cdots,j_m\in[K]} \sum_{\ell=1}^m \binom{m}{\ell} \left(\max_{j\in[K],x\in[V]} |\theta_j(x)-\theta_j'(x)|\right)^\ell \\
&\leq 2^m \cdot\max_{j\in[K], x\in [V]} |\theta_j(x)-\theta_j'(x)|\\
&\leq 2^m\|\vct\theta'-\vct\theta'\|_2.
\end{align*}

Therefore, we have for any $\vct\theta,\vct\theta'\in\Theta_{c_0}$ that
$$
d_{\tv}(p_{\vct\theta,m};p_{\vct\theta',m}) \leq (2V)^m\cdot \|\vct\theta'-\vct\theta\|_2,
$$
and the proposition is proved, because both $V$ and $m$ are fixed quantities independent of $\vct\theta$ or $\vct\theta'$.

\subsection{Proof of Proposition \ref{prop:pinsker-reverse}}

We prove a more general statement: if $P$ and $Q$ are distributions uniformly lower bounded by a constant $c>0$ on a finite domain $\mathcal D$, 
then there exists a constant $C>0$ depending only on $c$ such that $\kl(P\|Q)\leq C\cdot d_{\tv}^2(P;Q)$.
This implies Proposition \ref{prop:pinsker-reverse} because for any $\vct\theta\in\Theta_{c_0}$, $p_{\vct\theta,m}$ 
is uniformly lower bounded by $c_0^m$ on $\mathcal X^m$.

Let $\mu$ be the counting measure on $\mathcal D$.
Using the definition of KL divergence and second-order Taylor expansion of the logarithm, we have 
\begin{align*}
\kl(P\|Q) 
&= \int_{\mathcal D}P\log\frac{P}{Q}\ud\mu
= \int_{\mathcal D} P\log\left(1+\frac{P-Q}{Q}\right)\ud \mu\\
&\leq \int_{\mathcal D}\frac{P^2}{Q}\ud\mu - 1 + \int_{\mathcal D}\frac{P(P-Q)^2}{2Q^2}\ud\mu\\
&= \int_{\mathcal D}\frac{P^2-Q^2}{Q}\ud\mu + \int_{\mathcal D}\frac{P(P-Q)^2}{2Q^2}\ud\mu\\
&= \int_{\mathcal D}\frac{(P-Q)^2+2PQ-2Q^2}{Q}\ud\mu + \int_{\mathcal D}\frac{P(P-Q)^2}{2Q^2}\ud\mu\\
&= \int_{\mathcal D}{\frac{(P-Q)^2}{Q}\ud\mu} + \int_{\mathcal D}\frac{P(P-Q)^2}{2Q^2}\ud\mu\\
&\leq (1/2c^2+1/c)\cdot \int_{\mathcal D}{(P-Q)^2\ud \mu}.
\end{align*}
On the other hand, $d_{\tv}(P;Q) = \int_{\mathcal D}|P-Q|\ud\mu \geq \sqrt{\int_{\mathcal D}(P-Q)^2\ud\mu}$.
Therefore, $\kl(P\|Q)\leq (1/2c^2+1/c)\cdot d^2_{\tv}(P;Q)$.

\subsection{Proof of Proposition \ref{prop:xi}}
We prove that $\SPAN\{\delta_j\}_{j=1}^2\subseteq\SPAN\{\beta\bar\theta+\gamma\theta_j\}_{j=1}^2$, which would then imply the proposition.
Re-arranging terms in Eq.~(\ref{eq:cancel-intermediate4}) we have
$$
\sum_{j=1,2}\mu_{j\ell}\delta_j = -\sum_{j=1,2}\delta_j(\ell)(\beta\bar\theta+\gamma\theta_j), \;\;\;\;\;\forall \ell\in[V].
$$
Comparing both sides of the above identity it is clear that $\SPAN\{\sum_{j=1,2}\mu_{j\ell}\delta_j\}_{\ell=1}^V \subseteq\SPAN\{\beta\bar\theta+\gamma\theta_j\}_{j=1}^2$.
It remains to prove $\SPAN\{\delta_j\}_{j=1}^2\subseteq\SPAN\{\sum_{j=1,2}\mu_{j\ell}\delta_j\}_{\ell=1}^V$.

Recall that $\mu_{j\ell}=\beta\bar\theta(\ell)+\gamma\theta_j(\ell)$.
Because $\beta\bar\theta+\gamma\theta_1$ and $\beta\bar\theta+\gamma\theta_2$ are linear independent,
we know that $\dim\SPAN\{\beta\bar\theta+\gamma\theta_j\}_{j=1}^2=2$ and hence $\dim\SPAN\{(\mu_{1\ell},\mu_{2\ell})\}_{\ell=1}^V=2$,
because the row rank and the column rank of a matrix are equal.
Thus, for any $(u,v)\in\mathbb R^2$, there exists real coefficients $\{w_\ell\}_{\ell=1}^V$ such that $(u,v) = \sum_{\ell=1}^V{w_\ell(\mu_{1\ell},\mu_{2\ell})}$.
This implies $\SPAN\{\delta_j\}_{j=1}^2\subseteq\SPAN\{\sum_{j=1,2}\mu_{j\ell}\delta_j\}_{\ell=1}^V$, which completes the proof.

\bibliographystyle{apa-good}
\bibliography{refs}

\begin{thebibliography}{32}
\expandafter\ifx\csname natexlab\endcsname\relax\def\natexlab#1{#1}\fi
\expandafter\ifx\csname url\endcsname\relax
  \def\url#1{{\tt #1}}\fi
\expandafter\ifx\csname urlprefix\endcsname\relax\def\urlprefix{URL }\fi

\bibitem[{Anandkumar et~al.(2012)Anandkumar, Foster, Hsu, Kakade, \&
  Liu}]{anandkumar2012spectral}
Anandkumar, A., Foster, D.~P., Hsu, D.~J., Kakade, S.~M., \& Liu, Y.-K. (2012).
\newblock A spectral algorithm for latent dirichlet allocation.
\newblock In {\em Proceedings of Advances in Neural Information Processing
  Systems (NIPS)\/}.

\bibitem[{Anandkumar et~al.(2014)Anandkumar, Ge, Hsu, Kakade, \&
  Telgarsky}]{anandkumar2014tensor}
Anandkumar, A., Ge, R., Hsu, D.~J., Kakade, S.~M., \& Telgarsky, M. (2014).
\newblock Tensor decompositions for learning latent variable models.
\newblock {\em Journal of Machine Learning Research\/}, {\em 15\/}(1),
  2773--2832.

\bibitem[{Anandkumar et~al.(2017)Anandkumar, Ge, \&
  Janzamin}]{anandkumar2017analyzing}
Anandkumar, A., Ge, R., \& Janzamin, M. (2017).
\newblock Analyzing tensor power method dynamics in overcomplete regime.
\newblock {\em Journal of Machine Learning Research\/}, {\em 18\/}(22), 1--40.

\bibitem[{Arora et~al.(2013)Arora, Ge, Halpern, Mimno, Moitra, Sontag, Wu, \&
  Zhu}]{arora2013practical}
Arora, S., Ge, R., Halpern, Y., Mimno, D., Moitra, A., Sontag, D., Wu, Y., \&
  Zhu, M. (2013).
\newblock A practical algorithm for topic modeling with provable guarantees.
\newblock In {\em Proceedings of the International Conference on Machine
  Learning (ICML)\/}.

\bibitem[{Arora et~al.(2012)Arora, Ge, \& Moitra}]{arora2012learning}
Arora, S., Ge, R., \& Moitra, A. (2012).
\newblock Learning topic models--going beyond {SVD}.
\newblock In {\em Proceedings of the IEEE Annual Symposium on Foundations of
  Computer Science (FOCS)\/}.

\bibitem[{Blei(2012)}]{blei2012probabilistic}
Blei, D.~M. (2012).
\newblock Probabilistic topic models.
\newblock {\em Communications of the ACM\/}, {\em 55\/}(4), 77--84.

\bibitem[{Blei \& Lafferty(2006)}]{blei2006dynamic}
Blei, D.~M., \& Lafferty, J.~D. (2006).
\newblock Dynamic topic models.
\newblock In {\em Proceedings of the international conference on Machine
  learning (ICML)\/}.

\bibitem[{Blei et~al.(2003)Blei, Ng, \& Jordan}]{blei2003latent}
Blei, D.~M., Ng, A.~Y., \& Jordan, M.~I. (2003).
\newblock Latent dirichlet allocation.
\newblock {\em Journal of Machine Learning Research\/}, {\em 3\/}(1),
  993--1022.

\bibitem[{Chen(1995)}]{chen1995optimal}
Chen, J. (1995).
\newblock Optimal rate of convergence for finite mixture models.
\newblock {\em The Annals of Statistics\/}, {\em 23\/}(1), 221--233.

\bibitem[{Cheng et~al.(2015)Cheng, He, \& Liu}]{cheng2015model}
Cheng, D., He, X., \& Liu, Y. (2015).
\newblock Model selection for topic models via spectral decomposition.
\newblock In {\em Proceedings of the International Conference on Artificial
  Intelligence and Statistics (AISTATS)\/}.

\bibitem[{Drton(2016)}]{drton2016algebraic}
Drton, M. (2016).
\newblock Algebraic problems in structural equation modeling.
\newblock {\em arXiv preprint arXiv:1612.05994\/}.

\bibitem[{Drton et~al.(2011)Drton, Foygel, \& Sullivant}]{drton2011global}
Drton, M., Foygel, R., \& Sullivant, S. (2011).
\newblock Global identifiability of linear structural equation models.
\newblock {\em The Annals of Statistics\/}, {\em 39\/}(2), 865--886.

\bibitem[{Fei-Fei \& Perona(2005)}]{fei2005bayesian}
Fei-Fei, L., \& Perona, P. (2005).
\newblock A bayesian hierarchical model for learning natural scene categories.
\newblock In {\em Proceedings of IEEE Conference on Computer Vision and Pattern
  Recognition (CVPR)\/}.

\bibitem[{Ge et~al.(2015)Ge, Huang, \& Kakade}]{ge2015learning}
Ge, R., Huang, Q., \& Kakade, S.~M. (2015).
\newblock Learning mixtures of gaussians in high dimensions.
\newblock In {\em Proceedings of the annual ACM symposium on Theory of
  computing (STOC)\/}.

\bibitem[{Griffiths \& Steyvers(2004)}]{griffiths2004finding}
Griffiths, T.~L., \& Steyvers, M. (2004).
\newblock Finding scientific topics.
\newblock {\em Proceedings of the National academy of Sciences\/}, {\em 101\/},
  5228--5235.

\bibitem[{Hager(1984)}]{hager1984condition}
Hager, W.~W. (1984).
\newblock Condition estimates.
\newblock {\em SIAM Journal on scientific and statistical computing\/}, {\em
  5\/}(2), 311--316.

\bibitem[{Ho \& Nguyen(2016)}]{ho2016singularity}
Ho, N., \& Nguyen, X. (2016).
\newblock Singularity structures and impacts on parameter estimation in finite
  mixtures of distributions.
\newblock {\em arXiv preprint arXiv:1609.02655\/}.

\bibitem[{Hsu \& Kakade(2013)}]{hsu2013learning}
Hsu, D., \& Kakade, S.~M. (2013).
\newblock Learning mixtures of spherical gaussians: moment methods and spectral
  decompositions.
\newblock In {\em Proceedings of the conference on Innovations in Theoretical
  Computer Science (ITCS)\/}.

\bibitem[{Huang et~al.(2015)Huang, Niranjan, Hakeem, \&
  Anandkumar}]{huang2015online}
Huang, F., Niranjan, U., Hakeem, M.~U., \& Anandkumar, A. (2015).
\newblock Online tensor methods for learning latent variable models.
\newblock {\em Journal of Machine Learning Research\/}, {\em 16\/}, 2797--2835.

\bibitem[{Leung et~al.(2016)Leung, Drton, \& Hara}]{leung2016identifiability}
Leung, D., Drton, M., \& Hara, H. (2016).
\newblock Identifiability of directed gaussian graphical models with one latent
  source.
\newblock {\em Electronic Journal of Statistics\/}, {\em 10\/}(1), 394--422.

\bibitem[{Ma et~al.(2016)Ma, Shi, \& Steurer}]{ma2016polynomial}
Ma, T., Shi, J., \& Steurer, D. (2016).
\newblock Polynomial-time tensor decompositions with sum-of-squares.
\newblock In {\em Proceedings of the IEEE Annual Symposium on Foundations of
  Computer Science (FOCS)\/}. IEEE.

\bibitem[{Nguyen(2013)}]{nguyen2013convergence}
Nguyen, X. (2013).
\newblock Convergence of latent mixing measures in finite and infinite mixture
  models.
\newblock {\em The Annals of Statistics\/}, {\em 41\/}(1), 370--400.

\bibitem[{Nguyen(2015)}]{nguyen2015posterior}
Nguyen, X. (2015).
\newblock Posterior contraction of the population polytope in infinite
  admixture models.
\newblock {\em Bernoulli\/}, {\em 21\/}(1), 618--646.

\bibitem[{Nguyen(2016)}]{nguyen2016borrowing}
Nguyen, X. (2016).
\newblock Borrowing strength in hierarchical bayes posterior concentration of
  the {Dirichlet} base measure.
\newblock {\em Bernoulli\/}, {\em 22\/}(3), 1535--1571.

\bibitem[{Talagrand(1994)}]{talagrand1994sharper}
Talagrand, M. (1994).
\newblock Sharper bounds for gaussian and empirical processes.
\newblock {\em The Annals of Probability\/}, {\em 22\/}(1), 28--76.

\bibitem[{Tang et~al.(2014)Tang, Meng, Nguyen, Mei, \&
  Zhang}]{tang2014understanding}
Tang, J., Meng, Z., Nguyen, X., Mei, Q., \& Zhang, M. (2014).
\newblock Understanding the limiting factors of topic modeling via posterior
  contraction analysis.
\newblock In {\em Proceedings of the International Conference on Machine
  Learning (ICML)\/}.

\bibitem[{Van~der Vaart(1998)}]{van1998asymptotic}
Van~der Vaart, A.~W. (1998).
\newblock {\em Asymptotic statistics\/}, vol.~3.
\newblock Cambridge university press.

\bibitem[{Wang \& Anandkumar(2016)}]{wang2016online}
Wang, Y., \& Anandkumar, A. (2016).
\newblock Online and differentially-private tensor decomposition.
\newblock In {\em Proceedings of the Advances in Neural Information Processing
  Systems (NIPS)\/}.

\bibitem[{Wang et~al.(2015)Wang, Tung, Smola, \& Anandkumar}]{wang2015fast}
Wang, Y., Tung, H.-Y., Smola, A.~J., \& Anandkumar, A. (2015).
\newblock Fast and guaranteed tensor decomposition via sketching.
\newblock In {\em Proceedings of Advances in Neural Information Processing
  Systems (NIPS)\/}.

\bibitem[{Wang \& Zhu(2014)}]{wang2014spectral}
Wang, Y., \& Zhu, J. (2014).
\newblock Spectral methods for supervised topic models.
\newblock In {\em Proceedings of the Advances in Neural Information Processing
  Systems (NIPS)\/}.

\bibitem[{Watanabe(2009)}]{watanabe2009algebraic}
Watanabe, S. (2009).
\newblock {\em Algebraic geometry and statistical learning theory\/}.
\newblock Cambridge University Press.

\bibitem[{Watanabe(2013)}]{watanabe2013widely}
Watanabe, S. (2013).
\newblock A widely applicable bayesian information criterion.
\newblock {\em Journal of Machine Learning Research\/}, {\em 14\/}(Mar),
  867--897.

\end{thebibliography}

\end{document}